\documentclass[11pt]{article}

\usepackage{enumerate, fullpage}
\usepackage{graphicx}
\usepackage{multirow}
\usepackage{color}
\usepackage{amsmath}
\usepackage{amsfonts}
\usepackage{amssymb}
\usepackage{mathrsfs}
\usepackage{mathptmx}
\usepackage{eucal,bm}
\usepackage{algorithm}
\usepackage{smile}
\usepackage{algorithmic}

\def\cA{{\cal A}}
\def\cS{{\cal S}}

\newcommand{\1}{\mathbf 1}

\newcommand{\citep}{\cite}

\newcommand{\lv}{\left \langle}
\newcommand{\rv}{\right \rangle}

\usepackage{tcolorbox}

\usepackage{tocloft}
\usepackage[colorlinks=true,linkcolor=red,urlcolor=blue,citecolor=blue]{hyperref}

\newcommand{\QED}{\ \hfill\rule[-2pt]{6pt}{12pt} \medskip}

\numberwithin{equation}{section}

\def\cO{{\cal O}}

\newcommand{\E}{\mathbb E}

\newcommand{\beq}{\begin{equation}}
\newcommand{\eeq}{\end{equation}}

\newcommand{\beqnr}{\begin{eqnarray}}
\newcommand{\eeqnr}{\end{eqnarray}}

\newcommand{\benum}{\begin{enumerate}}
\newcommand{\eenum}{\end{enumerate}}

% Algorithm keywords
%\newcommand{\If}{{\bf if\ }}
%\newcommand{\Else}{{\bf else\ }}
%\newcommand{\Then}{{\bf then\ }}
%\newcommand{\Set}{{\bf Set\ }}
%\newcommand{\set}{{\bf set\ }}
%\newcommand{\Let}{{\bf let\ }}
%\newcommand{\While}{{\bf while\ }}
%\newcommand{\For}{{\bf For\ }}
%\newcommand{\Do}{{\bf do\ }}
%\newcommand{\Choose}{{\bf choose\ }}
%\newcommand{\Return}{{\bf return\ }}
%\newcommand{\Exit}{{\bf exit\ }}
%\newcommand{\field}[1]{\mathbb{#1}}
%\newcommand{\R}{\mathbb{R}}
%\newcommand{\F}{\mathbb{F}}
%\newcommand{\Z}{\mathbb{Z}}
%\newcommand{\closure}{{\mathop{\rm cl}}}
%\newcommand{\interior}{\mathop{\rm int}}
%\newcommand{\conv}{\mathop{\rm conv}}

\newtheorem{DE}{Definition}[section]
\newtheorem{AS}[DE]{Assumption}

\topmargin 0pt

\advance \topmargin by -\headheight
\advance \topmargin by -\headsep

\textheight 9in

\oddsidemargin 0pt
\evensidemargin \oddsidemargin

\marginparwidth 0.5in
\textwidth 6.5in

\begin{document}

\title{\huge  {Federated Multi-Level Optimization \\
over Decentralized Networks}}
\date{}
\author{Shuoguang Yang\thanks{email: sy2614@columbia.edu {\tt  }}
~~~~Xuezhou Zhang\thanks{Princeton University, xz7392@princeton.edu {\tt  }}
~~~~Mengdi Wang\thanks{Princeton University; mengdiw@princeton.edu{\tt  }}
}

\maketitle

\vskip 0.5cm
%%%%%%%%%
%%%%%%%%%%%%%%
%%%%%%%%%%%%%

\begin{abstract}
Multi-level optimization has gained increasing attention in recent years, as it provides a powerful framework for solving complex optimization problems that arise in many fields, such as meta-learning, multi-player games, reinforcement learning, and nested composition optimization.
In this paper, we study the problem of distributed multi-level optimization over a network, where agents can only communicate with their immediate neighbors. This setting is motivated by the need for distributed optimization in large-scale systems, where centralized optimization may not be practical or feasible. 
To address this problem, we propose a novel gossip-based distributed multi-level optimization algorithm that enables networked agents to solve optimization problems at different levels in a single timescale and share information through network propagation. Our algorithm achieves optimal sample complexity, scaling linearly with the network size, and demonstrates state-of-the-art performance on various applications, including hyper-parameter tuning, decentralized reinforcement learning, and risk-averse optimization.
    \end{abstract}

\section{Introduction}\label{sec:intro}
In recent years, stochastic multi-level optimization (SMO) has attracted increasing attention from the machine learning community. It aims at solving the following
\begin{equation}
    \min_{x \in \RR^{d_x}}  \EE_{\zeta}[ f(x,y_M^\star(x),\zeta) ] , \text{ s.t. }   y_j^\star(x)  = \argmin_{y_j \in \RR^{d_j}} \EE_{\xi_j}[ g_j(y_{j-1}^\star(x), y_j , \xi_j) ] ,  \ \ j = 1,2,\cdots, M,
\end{equation}
where $y_0^\star(x) = x \in \RR^{d_x}$ and $y_j^\star(x) $ represents the unique optimal solution to the stochastic optimization problem at level $j$. 
Given any $x \in \RR^{d_x}$, the best response $y_M^\star(x)$ can be computed by recursively solving 
$y_1^\star(x),\cdots, y_{M-1}^\star(x)$.
It has been found to provide favorable solutions to a variety of problems, such as meta learning and hyperparameter optimization \citep{franceschi2018bilevel,snell2017prototypical, bertinetto2018meta}, multilevel composition optimization \citep{wang2016stochastic,yang2019multilevel}, multi-player games \citep{von1952theory}, reinforcement learning and imitation learning \citep{arora2020provable, hong2020two}. 
Despite the importance of SMO, it has not been systematically studied in both the theoretical and numerical perspectives. Existing work mainly focus on one of its special case, stochastic bilevel optimization (SBO), where $M=1$  and $y_M^\star(x)$ can be computed by solving a vanilla stochastic optimization problem
$$
y_M^\star(x) = \argmin_{y } \EE_{\xi} [g(x,y,\xi)],
$$
rather than recursively solving a sequence of stochastic optimization problem. It remains an open problem to design efficient algorithms for solving SMO with strong theoretical guarantees.

In addition, even for SBO, the majority of the above work focuses on the classic centralized setting. However, such problems often comes from distributed/federated applications, where agents are unwilling to share data but rather perform local updates and communicate with neighbors. Theories and algorithms for distributed stochastic bilevel optimization are less developed.

This work aims to answer the following two questions: 
\begin{center}
\emph{
    (i) How to generalize stochastic bilevel optimization to stochastic multilevel optimization?\\
    (ii) How to design efficient algorithms for stochastic multilevel optimization over distributed network with a generic topology? 
}
\end{center}
We consider the \emph{decentralized} learning setting where the data are distributed over $K$ agents $\cK= \{ 1,2,\cdots, K\}$ over a communication network of general topology. The network may not necessarily contain a central server that connects all other agents as in a \textit{star network}, but may preserve a general connected structure illustrated in Figure~\ref{fig:network}, where each agent can only communicate with its neighbors. One example is federated learning which is often concerned with a single-server-multi-user system, where agents communicate with a central server to solve a task cooperatively    \citep{lan2018random,ge2018minimax}. Another example is the sensor network, where sensors are fully decentralized and can only communicate with nearby neighbors \citep{distributed_sensor}.

\begin{figure}[t]
\begin{minipage}{0.5\textwidth}
    \centering
        \includegraphics[width=0.9\linewidth]{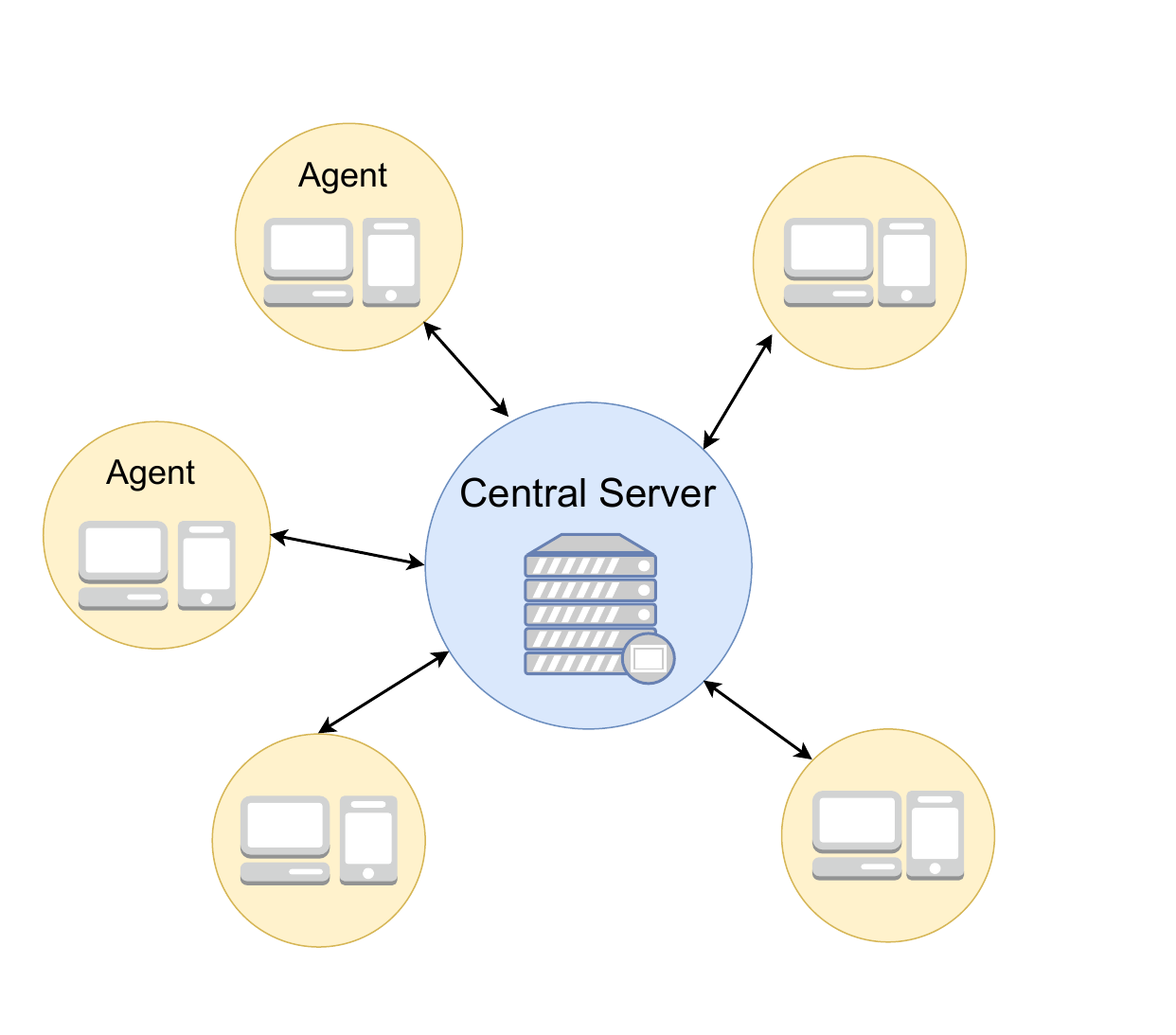}
        (a)~Star Network
        \end{minipage}  
        \begin{minipage}{0.5\textwidth}
        \centering
   \includegraphics[width=0.9\linewidth]{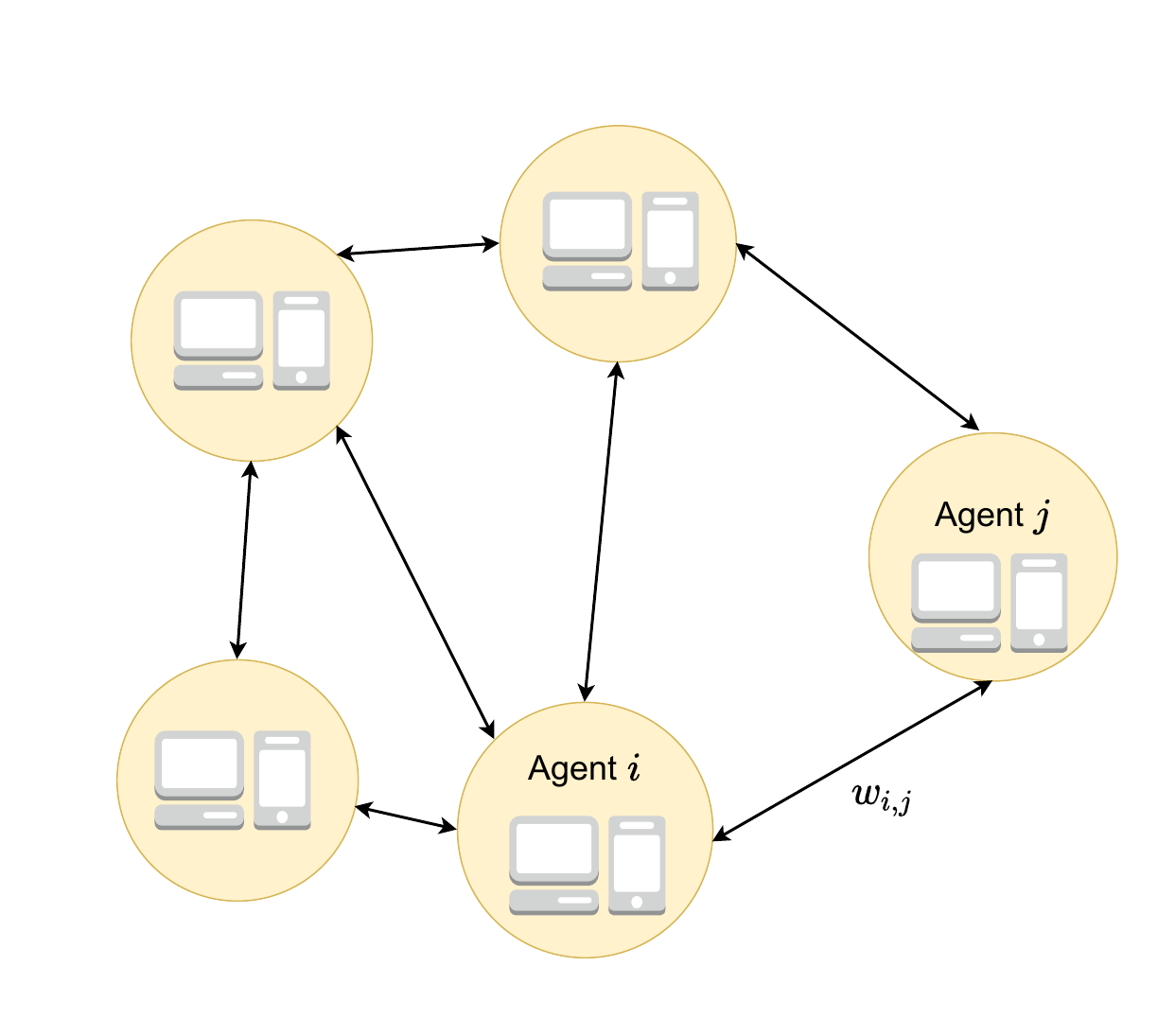} 
   (b)~Decentralized Network
   \end{minipage}
    \caption{Illustrations of Distributed Network Structures. In a star network, the agents may  cooperate to solve a specific task by communicating with the central server, commonly seen in federated learning. In a decentralized network, agents only trust and communicate with neighbors in the network.
    }
    \label{fig:network}
    \vskip -3mm
\end{figure}

We consider the following decentralized stochastic multi-level optimization (DSMO)
\begin{equation}\label{prob:1} 
\begin{split}
\min_{ x \in \RR^{d_x}}F(x) & = \left \{ \frac{1}{K} \sum_{k=1}^K f^k(x, y_M^\star(x)) \right \} , 
\\
\mbox{ s.t. }  
  y_j^\star(x) & = \argmin_{y_j \in \RR^{d_j}} \left  \{  \frac{1}{K} \sum_{k=1}^K g_j^k(y_{j-1}^\star(x), y_j)  \right \} ,  \ \ j = 1,2,\cdots, M.
\end{split}
\end{equation}
with $y_0^\star(x) = x \in \RR^{d_x}$, $y_j \in \RR^{d_j}$, $f^k(x,y_M)  = \EE_{\zeta^k}[f^k(x,y_M, \zeta^k)]$ is the objective for agent $k$, and $g_j^k(y_{j-1}, y_j)  = \EE_{\xi_j^k}[g_j^k(y_{j-1}, y_j,\xi_j^k)]$ represents the $j$-th level decision function for agent $k$. Here,  expectations $\EE_{\zeta^k}[\cdot]$  and $\EE_{\xi_j^k}[\cdot]$ are taken with respect to the random variables $\zeta^k$ and $\xi_j^k$, and each agent has heterogeneous objective and inner-level decision functions so that $f^k$ and $\{ g_j^k\}_{j=1}^M$ may vary across agents. 
We consider the scenario where each $g_j^k(y_{j-1},y_j)$ is strongly convex in $y_j$. We use the notation $F^* = \min_{x \in \RR^{d_x}} F(x)$, $f(x,y) =\frac{1}{K} \sum_{k= 1}^K f^k(x,y)$, and $g_j(y_{j-1},y_j) = \frac{1}{K} \sum_{k= 1}^K  g_j^k(y_{j-1},y_j)$ for  convenience.

\subsection{Example applications of SBO and SMO}
When $M=1$, SMO reduces to SBO. SBO
was first employed to formulate the resource allocation problem~\citep{bracken1973mathematical} and has since found applications in many classic operations research settings \citep{cramer1994problem, sobieszczanski1997multidisciplinary, livne1999integrated, tu2020two, tu2021two}, and more recently in machine learning problems \citep{franceschi2018bilevel,snell2017prototypical, bertinetto2018meta,wang2016stochastic}. In particular, we introduce two applications that have recently attracted a lot of recent attention, namely hyperparameter optimization and compositional optimization.

\paragraph{Hyperparameter Optimization}
The problem of hyper-parameter tuning \citep{okuno2021lp} often takes the following form:
 \begin{equation}\label{prob:hyperopt}
  \min_{x \in \RR^d} \left \{   \sum_{i \in \cD_{\text{val}}}  \ell_i (y^\star(x)) \right \}, \mbox{ s.t. } y^\star(x) \in \argmin_{y \in \RR^{d_y}} \left \{  \sum_{j \in  \cD_{\text{train}}} \ell_j(y) + \cR(x,y) \right \},
 \end{equation}
 where $\cD_{\text{train}}$ and $\cD_{\text{val}}$ are two datasets used for training and validation, respectively, $y \in \RR^{d_y}$  is a vector of unknown parameters to optimize, $\ell_i(y)$ is a convex loss over data $i$, and $x \in \RR^d$ is a vector of hyper-parameters for a strongly convex regularizer $\cR(x,y)$. For any hyper-parameter $x \in \RR^d$, the inner-level problem solves for the best parameter $y^\star(x)$ over the training set $\cD_{\text{train}}$ under the regularized training loss $\ell_j(y)+\cR(x,y)$.
 The goal is to find the hyper-parameter $x^* \in \RR^d$ whose corresponding best response $y^\star(x)$ yields the least loss over the validation set $\cD_{\text{val}}$. 
  In practice,  continuous hyperparameters are often tuned  by the grid search which is exponentially expensive. An efficient SBO algorithm should find the optimal parameters in time increasing polynomially with dimension, rather than exponentially. When the training and validation set are distributed across nodes, the problem becomes a distributed SBO.

\noindent 
\textbf{Distributed Risk-averse Optimization.}
Letting $U^k(x,\xi^k )$ be a random utility function for agent $k \in \cK$, we denote by $U(x) =  \frac{1}{K}\sum_{k \in \cK} \EE_{\xi^k} [U^k(x,\xi^k )] $ the expected utility function averaged over all agents, and consider the following distributed regularized mean-deviation risk-averse optimization problem 
\begin{equation}\label{prob:dist_riskaverse}
\max_{x} \left \{  U(x ) - \kappa  \Big [ \EE_{\xi^1,\cdots, \xi^K}   \big [   U(x) - \frac{1}{K} \sum_{k \in \cK}U^k(x,\xi^k )  \big ]_+^p \Big ]^{1/p} - \frac{\lambda \| x\|^2}{2} \right \}.
\end{equation}
Here, all agents are connected by a decentralized network and cooperate to solve a shared risk-averse mean-deviation optimization problem \citep{ruszczynski2006optimization}. 
This problem can be reformulated as a SMO  optimization problem with $M=2$ such that 
\begin{equation*}
\begin{split}
    \max_x f\big (x,y_1^\star(x), y_2^\star(x) \big ) & =  y_1^\star(x) - \kappa \big ( y_2^\star(x) \big )^{1/p} - \frac{\lambda \| x\|^2 }{2} , 
    \\
    \mbox{s.t. } \  y_1^\star(x)  & = \argmin_{y_1 \in \RR }  \EE_{\xi^1,\cdots, \xi^K} \Big  (y_1 - \frac{1}{K} \sum_{k \in \cK} U^k(x,\xi^k  )  \Big )^2 , 
    \\
    y_2^\star(x) & = \argmin_{y_2  \in \RR } \EE_{\xi^1,\cdots, \xi^K}  \Big (  y_2 - \big ( y_1^\star(x) - \frac{1}{K} \sum_{k \in \cK} U^k (x,\xi^k) \big  )_+^p  \Big )^2,
    \end{split}
\end{equation*}
where $p > 1$ is  a constant. 
The above problem is $\lambda$-strongly concave for any $\kappa \in (0,1]$ and $\lambda >0$~\citep{ruszczynski2006optimization}. 
\\
\noindent 
\textbf{Policy Optimization in Finite-Horizon MDPs.} We consider the policy optimization problem for collaborative multi-agent MDP over a finite horizon. Letting $\cS$ be the state space, $\cA$ be the action space, and $H$ be the horizon length, agent $k$ obtains an instantaneous reward  $r_h^k(s,a)$ if it 
takes action $a \in \cA$ at state $s \in \cS$ and time $h \in[H]$.
For any policy $\pi \in \Pi$, state $a\in \cA$, and any time $h \in [H]$, 
we denote by $V_h^\pi(s)$ the \textit{joint value function}, which can be recursively expressed as $V_h^\pi(s) = \frac{1}{K} \sum_{k=1}^K \EE_{s'\sim P(\cdot|s, a),a\sim\pi_h(\cdot|s)} \left[r_h^{k}(s,a) + V_{h+1}^\pi(s')\right]$. In other words, $V_h^\pi(s)$ measures the sum of expected cumulative rewards each agent receives. The goal is to find a policy $\pi$ which maximizes the joint value function:
\begin{equation}
\min_{\pi \in \Pi} \EE_{s_1\sim\rho} V_1^\pi(s_1). 
\end{equation}
Putting into the multi-level SMO formulation, we have
$x = \pi_h(a|s)\in \RR^{H\times |\cS| \times |\cA|}$, and
$$
y_h^\star(x) = V_h^\pi(s) = \argmin_{y_h \in \RR^{|\cS|} } \sum_{s\in \cS} \Big (\frac{1}{K} \sum_{k=1}^K \EE_{s'\sim P(\cdot|s, a),a\sim\pi_h(\cdot|s)} \left[r_h^{k}(s,a) + V_{h+1}^\pi(s')\right]-y_h \Big )^2.
$$

\subsection{Challenges with Distributed SMO}\label{sec:challenges}
\textbf{From SBO to SMO.}
Despite the recent rapid development of single-level and bilevel optimization, a method appropriate to Stochastic Multilevel Optimization (SMO) remains elusive. The major hurdle to solving SMO lies in the absence of explicit knowledge of $y^\star(x)$, so that an unbiased gradient for $\nabla f(x, y_M^\star(x))$ is not available. Specifically, by using the implicit function theorem, we can write the gradient of SMO as 
\begin{equation}\label{eq:biased_grad}
\nabla F(x) = \nabla_1 f(x,y_M^\star(x)) + \nabla y_{M}^\star(x) \nabla_2 f(x, y_M^\star(x))
\end{equation}
whose expression is not readily known due to the unavailability of $\nabla y_M^\star(x)$. In the SBO regime where $M=1$, \cite{couellan2016convergence,ghadimi2018approximation} write out the full gradient of non-distributed SBO as 
\begin{equation} \label{eq:biased_grad_SBO}
    \nabla F(x) = \nabla_1 f(x,y^\star(x))  - \nabla_{12}^2 g(x,y^\star(x)) [\nabla_{22}^2 g(x,y^\star(x)) ]^{-1}\nabla_2 f(x,y^\star(x)),
\end{equation}
where $y^\star(x) = y_M^\star(x)$, providing a connection between SBO and classical stochastic optimization. As $y^\star(x)$ is the optimal solution to a stochastic program, which cannot be explicitly computed within finite steps but has to be approximated, the majority of bias comes from the inaccurate estimation of $y^\star(x)$.
To overcome the issue, for non-distributed SBO, various algorithms have been proposed to obtain sharp estimators for $y^\star(x)$ and reduce the bias of the constructed gradients \citep{chen2021single,hong2020two,ji2021bilevel,yang2021provably}. These techniques result in tight convergence analysis and give rise to algorithms widely used in modern applications.

Unfortunately,  the SMO problem~\eqref{prob:1} is more challenging compared with SBO, because (i) the explicit expression of $\nabla y_M^\star(x)$ is unknown, and (ii) $y_M^\star(x)$ is harder to estimate. To solve SMO, we have to explicit derive $\nabla y_M^\star(x)$, which requires applying implicit function theorem to each level $j=1,\cdots,M$ and carefully combine them. Meanwhile, to compute $y_M^\star(x)$ for each $x$, it can be seen in \eqref{prob:1} that it requires solving a sequence of $M$ stochastic optimization problems to compute $\{ y_1^\star(x), y_2^\star(x),\cdots, y_M^\star(x)\}$ rather than solving a vanilla strongly convex program in SBO. This is significantly more challenging because the errors induced by different layers may interact and compound with each other, which has to be carefully handled.
As a result, no prior SBO algorithm can be applied to the SMO. It is also unclear  what the best achievable convergence rate is for SMO.

\medskip 
\noindent 
\textbf{From non-distributed to the decentralized regime.} In addition to the challenges from multi-level optimization, the distributed decentralized regime also brings unique challenges that have to be carefully handled, even for the SBO scenario where $M=1$. Let us use decentralized SBO for illustration and consider its full gradient \eqref{eq:biased_grad_SBO}. Computing a sharp estimator of \eqref{eq:biased_grad_SBO} suffers from the following obstacles:
\begin{enumerate}
    \item Even for SBO, $y^\star(x)$ is the shared optimal inner-level solution for $K$ heterogeneous agents, who can only communicate with their neighbors. An estimator of $y^\star(x)$ has to be computed through decentralized algorithms, with carefully designed stochastic approximation, weighting, and communication strategies.
    \item Calculating the outer gradient is highly nontrivial, even when we have an inner solution $y$. Note that 
\begin{eqnarray}
 \frac{1}{K}\sum_{k \in \cK} \Big [ \nabla_{12}^2 g^k(x,y) [ \nabla_{22}^2 g^k(x,y) ]^{-1} \nabla_2 f^k(x,y) \Big ]  \neq \nabla_{12}^2 g(x,y) [\nabla_{22}^2 g(x,y) ]^{-1}\nabla_2 f(x,y).
\end{eqnarray}
In other words, even if the inner problem is solved, the outer gradient requires a new estimation mechanism.
\item Estimating the Hessian inverse $[\nabla_{22}^2 g(x,y)]^{-1}$ is nontrivial in decentralized networks. Note that 
\begin{equation}\label{eq:hessian_inverse}
 \nabla_{22}^2 g(x,y) = \frac{1}{K} \sum_{k=1}^K g^k(x,y) \text{ but }    [ \nabla_{22}^2 g(x,y) ]^{-1}\neq \frac{1}{K} \sum_{k=1}^K [g^k(x,y)]^{-1},
\end{equation}
for non-identical $g^k(x,y)$'s. That is, the shared Hessian inverse is not available even if the explicit value of hessian inverse for each agent is known. Hence, we cannot compute the Hessian inverse $[g^k(x,y)]^{-1}$ for each agent $k$ and then averaging them through gossip communication. Meanwhile, even if $g(x,y)$ is known, it requires $O(d_x^3)$ cost to invert it directly, which is computationally expensive for high dimensional machine learning applications.
A more sophisticated strategy has to be developed to overcome these issues.

\item  In the decentralized network learning, communication among agents can be limited by the network structure and communication protocol, so taking a simple average across agents may require multiple communication rounds.
\end{enumerate}
All of the above challenges in SBO would further exacerbated in the decentralized SMO regime because both estimation and consensus error would interact and compound among different levels, making it particularly challenging to solve.
Because of the above difficulties, it remains unclear how to estimate the outer gradient sharply for a decentralized network.

In an attempt to tackle this problem, this paper studies the convergence theory and sample complexity of gossip-based algorithms. In particular, we ask two theoretical questions: \begin{center}
\emph{
    (i) How does the sample complexity of DSMO scale with the optimality gap and network size?\\
    (ii) How is the efficiency of DSMO affected by the network structure?
}
\end{center}
\noindent 
\textbf{Contributions.} 
In this paper, we develop a gossip-based stochastic approximation scheme where each agent  solves an optimization problem collaboratively by sampling stochastic first- and second-order information using its data and making gossip communications with its neighbors. 
In addition, we develop novel techniques for convergence analysis to characterize the convergence behavior of our algorithm. 
To the best of our knowledge, our work is the first to  formulate DSMO mathematically and propose an algorithm with theoretical convergence guarantees. Specifically, we show that our algorithm enjoys an $\widetilde \cO(\frac{1}{K \epsilon^2})$ sample complexity for finding $\epsilon$-stationary points for nonconvex objectives regardless of the number of levels $M$, where $\widetilde \cO(\cdot)$ hides logarithmic factors, and enjoys an $\widetilde \cO(\frac{1}{K \epsilon})$ sample complexity for Polyak-Łojasiewicz (PL) functions, subsuming strongly convex optimization. 
These results subsume the state-of-the-art results for non-federated  stochastic bilevel optimization~\citep{chen2021tighter} and central-server stochastic bilevel optimization~\citep{tarzanagh2022fednest}, showing that almost no degradation is induced by network consensus. 
Further, the above results suggest that our algorithm exhibits a linear speed-up effect for decentralized settings; that is, the required per-agent sample complexities decrease linearly with the number of agents. It is worthy emphasizing that our convergence rate results are optimal even for SBO, implying its optimality for decentralized SMO.

\section{Related Works}
Bilevel optimization was first formulated by~\cite{bracken1973mathematical} for solving resource allocation   problems. Later, a class of constraint-based algorithms was proposed by \cite{hansen1992new,shi2005extended},  which treats the inner-level optimality condition as constraints to the out-level problem. 
Recently,  \cite{couellan2016convergence} examined the finite-sum case for unconstrained strongly convex lower-lower problems and proposed a gradient-based algorithm that exhibits asymptotic convergence under certain step-sizes. 
For SBO, \cite{ghadimi2018approximation} developed a double-loop algorithm and established the first known complexity results. Subsequently, various methods  have been employed to improve the sample complexity, including two-timescale stochastic approximation~\citep{hong2020two}, acceleration  \citep{chen2021single}, momentum~\citep{khanduri2021near},  and variance reduction
\citep{guo2021randomized,ji2021bilevel,yang2021provably}.

Distributed optimization was developed to  handle real-world large-scale datasets~\citep{dekel2012optimal,feyzmahdavian2016asynchronous} and graph estimation \citep{wang2015distributed}. Centralized and decentralized systems are two important problems that have drawn significant attention. A centralized system considers the network topology where there is a central agent that communicates with the remaining agents \citep{lan2018random}, while in a decentralized system \citep{gao2020periodic,koloskova2019decentralized,lan2020communication,mcmahan2017communication}, each agent can only communicate with its neighbors by using gossip \citep{lian2017can} or  gradient tracking \citep{pu2021distributed} communication strategies, with applications in multi-agent reinforcement learning \citep{xu2020voting}. Variance reduction approaches \citep{xin2020variance,xin2021improved,lian2017finite} have also been applied to improve the convergence rate of decentralized optimization. Random projection schemes have been studied to handle large sets of constraints \citep{wang2015incremental,wang2016stochastica,7178639}.
All of the above trials were made on vanilla stochastic optimization problems.

Notably, existing studies only focus on SBO in the nondistributed or star-network setting. Stochastic multilevel optimization has never been studied even in the single-server setting. Our work subsumes prior work along both directions: we generalize the communication protocol to decentralized gossip-based network, and extend the underlying problem from stochastic bilevel optimization to stochastic multi-level optimization.

\section{Preliminaries}

\noindent\textbf{Expression of $\bf{\nabla F(x)}$:} 
We start by writing the first-order derivative to problem~\eqref{prob:1}. 
Letting $f(x,y_M) = \frac{1}{K} \sum_{k \in \cK} f^k(x,y_M)$ and $g_j(x_j,y_j) =   \frac{1}{K} \sum_{k=1}^K g_j^k(x_j, y_j)   $ for each $j  =1,\cdots, M$. With a slight abuse of notation, we denote by $\nabla_1 f(x,y_M) = \frac{\partial }{\partial x} f(x,y)$, $\nabla_2 f(x,y_M) = \frac{\partial }{\partial y_M} f(x,y)$, $\nabla_1 g_j(y_{j-1},y_j) = \frac{\partial }{ \partial y_{j-1}} g_j(y_{j-1},y_j)$,  and  $\nabla_2 g_j(y_{j-1},y_j) = \frac{\partial }{ \partial y_j} g_j(y_{j-1},y_j)$ for $j=1,\cdots, M$. 
For any fixed $y_{j-1}^\star(x)$,  by using the optimality of $y_{j}^\star(x)$, we have 
$$
\nabla_2 g_j(y_{j-1}^\star(x) ,  y_{j}^\star(x)) = 0.
$$
Taking derivative with respect to $x$, we further have 
$$
\nabla y_{j-1}^\star(x)  \nabla_{12}^2 g_j(y_{j-1}^\star(x) ,  y_{j}^\star(x))  + \nabla y_j^\star(x)  \nabla_{22}^2 g_j(y_{j-1}^\star(x) ,  y_{j}^\star(x)) = 0,
$$
which implies that 
\begin{equation}\label{eq:grad_y_multilevel}
\nabla y_j^\star(x)  =  - \nabla y_{j-1}^\star(x)  \nabla_{12}^2 g_j(y_{j-1}^\star(x) ,  y_{j}^\star(x)) [ \nabla_{22}^2 g_j(y_{j-1}^\star(x) ,  y_{j}^\star(x))]^{-1}.
\end{equation}
By recursively applying the above relationship, we express the gradient as 
\begin{equation}\label{eq:gradient_multilevel}
\nabla F(x) = \nabla_1 f(x,y_M^\star(x)) + \nabla y_{M}^\star(x) \nabla_2 f(x, y_M^\star(x))
\end{equation}
where 
\begin{equation*}
\nabla y_M^\star(x)  =(-1)^M   \prod_{j=1}^{M}  \nabla_{12}^2 g_j(y_{j-1}^\star(x) ,  y_{j}^\star(x)) [ \nabla_{22}^2 g_j(y_{j-1}^\star(x) ,  y_{j}^\star(x))]^{-1}. 
\end{equation*}

We assume each agent has access to the following sampling oracle.
\begin{AS}[Sampling Oracle $\mathcal{SO}$]\label{assumption:SO}
Agent $k$ may query the sampler, receive an independent locally sampled unbiased first-order information $\nabla_1 f^k(x,y_M; \zeta^k)$ and $\nabla_2 f^k(x,y_M; \xi^k)$ for the objective, 
and receive unbiased first- and second-order information $\nabla_2 g_j^k(x,y;\xi^k)$, $\nabla_{22}^2 g_j^k(x,y;\xi^k) $, and $ \nabla_{12}^2 g_j^k(x,y; \xi^k)$  for each inner level $j=1,\cdots, M$.
\end{AS}
\begin{AS}[Gossip Protocol] \label{assumption:W}
The network gossip protocol is specified by a $K\times K$ symmetric matrix $W$ with nonnegative entries. Each agent $k$ may receive information from its neighbors, e.g., $z_j, j\in\mathcal{N}_k$, and aggregate them by a weighted sum $\sum_{j\in\mathcal{N}_k} w_{k,j} z_j$. Further, 
matrix $W$ satisfies
\begin{enumerate}
\item[(i)] $W$ is doubly stochastic such that $\sum_{i } w_{i,j} = 1$ and $\sum_{j} w_{i,j} = 1$ for all $i,j \in [K]$.
\item[(ii)] There exists a constant $\rho \in (0,1)$ such that $\| W - \frac{1}{K} \1 \1^\top \|_2^2 = \rho $, where $\| A\|_2$ denotes the spectral norm of $A \in \RR^{K \times K}$.
\end{enumerate}
\end{AS}
These assumptions on the adjacency matrix are crucial to ensure the convergence of decentralized algorithms and are commonly made in the literature of decentralized optimization  \citep{lian2017can}.

We also impose the following smoothness, boundedness, and convexity assumptions for $\{ f^k \}_{k \in \cK}$ and $\{ g_m^k \}_{k \in \cK}$ throughout this paper.

\begin{AS}\label{assumption:1}
Let $C_f, L_f$ be positive scalars. The outer level functions $\{ f^k \}_{k \in \cK} $ satisfy the following.  
\begin{enumerate}
\item[(i)] There exists at least one optimal solution to problem~\eqref{prob:1}.

{\item[(ii)] Both $\nabla_x f^k(x,y)$ and $\nabla_y f^k(x, y)$ are $L_f$-Lipschitz continuous in $(x,y)$ such that for all $x,x' \in \RR^{d_x}$ and $y_M,y_M' \in \RR^{d_y}$, 
\begin{align*}
   & \|  \nabla_1 f^k(x,y_M) - \nabla_1 f^k(x',y_M') \| \leq L_f (\| x - x' \| + \| y_M - y_M' \|),
    \\
    \text{ and }  &  \|  \nabla_2 f^k(x,y_M) - \nabla_2 f^k(x',y_M')  \| \leq L_f (\| x - x'\| + \| y_M - y_M' \|).
\end{align*}
}
 \item[(iii)] For all $x \in \RR^{d_x}$ and $y \in \RR^{d_y}$, 
 $$\EE [ \| \nabla_1 f^k(x,y_M; \zeta^k )\|^2] \leq~C_f^2 \text{ and } \EE [ \| \nabla_2 f^k(x,y_M; \zeta^k )\|^2] \leq C_f^2 .$$
\end{enumerate}
\end{AS}

Before proceeding, we also assume the following smoothness and boundedness conditions to facilitate our analysis. 
\begin{AS}\label{assumption:2}
For $m=1,\cdots,M$, let $C_{g,m}, L_{g,m}, \widetilde L_{g,m}, \mu_{g,m},\kappa_{g,m} $ be positive scalars, the inner level functions $\{ g_m^k\}_{ k \in \cK}$ satisfy the following. 
\begin{enumerate}
\item[(i)] For all $y_{m-1} \in \RR^{d_{m-1}}$,  $g_m(y_{m-1},y_{m})$ is $\mu_{g,m}$-strongly convex in $y_m$ 
\item[(ii)] For all $y_{m-1}  \in \RR^{d_{m-1}}$ and $ y_{m} \in \RR^{d_m}$, $g_m^k(y_{m-1},y_{m})$ is twice continuously differentiable in $(y_{m-1},y_{m})$. 
\item[(iii)] 
$\nabla_2 g_m^k(y_{m-1},y_{m})$, $ \nabla_{12}^2 g_m^k(y_{m-1},y_{m})$, and $\nabla_{22}^2 g_m^k(y_{m-1},y_{m})$ are Lipschitz continuous in $(y_{m-1},y_{m})$ such that for all $y_{m-1},y_{m-1}' \in \RR^{d_{m-1}}$ and $y_{m},y_{m}' \in \RR^{d_m}$, 
\begin{align*}
\| \nabla_{2} g_m^k(y_{m-1},y_{m}) - \nabla_{2} g_m^k(y_{m-1}',y_{m}')\| & \leq L_{g,m} (\| y_{m-1} - y_{m-1}' \| + \| y_{m} - y_{m}'\| ),
\\
\| \nabla_{12}^2 g_m^k(y_{m-1},y_{m}) - \nabla_{12}^2 g_m^k(y_{m-1}',y_{m}')\|_F & \leq \widetilde L_{g,m} (\| y_{m-1} - y_{m-1}' \| + \| y_{m} - y_{m}'\| ),\\
\| \nabla_{22}^2 g_m^k(y_{m-1},y_{m}) - \nabla_{22}^2 g_m^k(y_{m-1}',y_{m}')\|_F & \leq \widetilde L_{g,m} (\| y_{m-1} - y_{m-1}' \| + \| y_{m} - y_{m}'\| ).
\end{align*}

\item[(iv)] For all  $y_{m-1} \in \RR^{d_{m-1}}, y_{m} \in \RR^{d_{m}} $,  $\nabla_{2} g_m^k( y_{m-1}, y_{m}; \xi_m^k )$,  $\nabla_{22}^2 g_m^k( y_{m-1}, y_{m}; \xi_m^k )$,  and $\nabla_{12}^2 g_m^k( y_{m-1}, y_{m}; \xi_m^k)$ have bounded second-order moments such that 
\begin{align*}
\EE[ \| \nabla_2 g^k( y_{m-1}, y_{m}; \xi_m^k) \|_2^2 ] \leq C_{g,m}^2
, \EE[ \|  \nabla_{22}^2 g_m^k( y_{m-1}, y_{m}; \xi_m^k ) \|_2^2] \leq L_{g,m}^2, 
\\ \text{ and }\EE[ \| \nabla_{12}^2 g_m^k( y_{m-1}, y_{m}; \xi_m^k) \|_2^2 ] \leq L_{g,m}^2.
\end{align*}
\item[(v)] For all $y_{m-1} \in \RR^{d_{m-1}}, y_{m} \in \RR^{d_{m}} $,  $(\mathbf{I} - \frac{1}{L_{g,m}}\nabla_{22}^2 g_m^k( y_{m-1}, y_{m}; \xi_m^k ) )$ has bounded second moment such that 
$
\EE[ \| \mathbf{I}   - \frac{1}{L_{g,m}}\nabla_{22}^2 g_m^k( y_{m-1}, y_{m}; \xi_m^k )\|_2^2] \leq (1-\kappa_{g,m} )^2
$, where $0< \kappa_{g,m} \leq   \frac{\mu_{g,m} }{L_{g,m}} \leq 1$.
\end{enumerate}
\end{AS}

Note the we denote by $\| A\| = \| A\|_2 = \sigma_{\max}(A)$ the induced $2$-norm for any matrix $A$. 
Here we point out that the above assumptions allow  heterogeneity between functions $f^k$'s and $g_m^k$'s over the agents.

\begin{algorithm}[t!]
\caption{Gossip-Based Decentralized Stochastic Multi-level Optimization}\label{alg:1}
\begin{algorithmic}[1]
\REQUIRE Step-sizes $\{ \alpha_t \}$,  $ \{ \beta_t \}$, $\{ \gamma_t\}$, total iterations $T$, sampling oracle $\mathcal{SO}$, adjacency matrix $W$, smoothness constants~$\{ L_{g,m} \}_{m=1}^M$, Hessian sampling parameter $b$, number of levels $M$
\\
$x_0^k = {\bf 0}_{d_x}$,  $s_0^k = \bf{0}$, $h_0 = \bf{0}$,  $y_{m,0}^k = {\bf 0}_{d_m}$, $u_{m,0}^k = {\bf 0}_{d_{m-1} \times d_m}$, $q_{m,0}^k = {\bf 0}_{d_m \times d_m}$,  
\\
$v_{m,0,i}^k = \mu_g {\bf I}_{d_m \times d_m}$ for $i=1,2,\cdots,b$, for $m=1,\cdots,M$
\FOR{$t=0, 1, \cdots, T-1$}
	\FOR{$k=1,\cdots, K$}
	\STATE \textcolor{blue}{Local sampling}: 
 \\
 \hspace{1cm} Query $\mathcal{SO}$ at $(x_t^k,y_{M,t}^k)$ to obtain $\nabla_1 f^k(x_t^k,y_{M,t}^k;\zeta_t^k)$,
	$\nabla_2 f^k (x_t^k,y_{M,t}^k;\zeta_t^k)$. 
\STATE \textcolor{blue}{Outer loop update}: $x_{t+1}^k = \sum_{j \in \cN_k}w_{k,j} x_t^j  - \alpha_t  \left ( s_t^k + (-1)^M u_{1,t}^k q_{1,t}^k  \cdots u_{M,t}^k q_{M,t}^k  h_t^k \right ) $.

\STATE 
\textcolor{blue}{Estimate $\nabla_1 f(x_t,y_{M,t})$:}
$
s_{t+1}^k   =(1-\beta_t) \sum_{j \in \cN_k} w_{k,j} s_{ t}^j + \beta_t  \nabla_1 f^k(x_t^k, y_{M, t}^k ; \zeta_t^k )$.
\STATE 
\textcolor{blue}{Estimate $\nabla_2 f(x_t,y_{M, t})$:}
$
h_{t+1}^k  =(1-\beta_t) \sum_{j \in \cN_k} w_{k,j} h_{ t}^j + \beta_t  \nabla_2 f^k(x_t^k, y_{M, t}^k ;  \zeta_t^k).  
$
	\STATE \textcolor{blue}{Level $m$ Update:} 
	\FOR{$m=1,\cdots, M$}  
 \STATE \textcolor{blue}{Local sampling}: Query $\cS\cO$ at $(y_{m-1,t}^k, y_{m,t}^k)$ to obtain $\nabla_2 g_m^k (y_{m-1,t}^k, y_{m,t}^k;\xi_{m,t}^k )$, 
 \\
 \hspace{1cm} $\nabla_{12}^2  g_m^k(y_{m-1,t}^k, y_{m,t}^k;\xi_{m,t}^k)$,   and  $\{ \nabla_{22}^2 g_m^k(y_{m-1,t}^k, y_{m,t}^k;\xi_{m,t,i}^k)  \}_{i=1}^b$. 
		\STATE 
	\textcolor{blue}{Inner loop update}:  
$
	y_{m, t+1}^k  =  \sum_{j \in \cN_k} w_{k,j} y_{m, t}^j   - \gamma_t \nabla_2 g_m^k ( y_{m-1, t}^k,y_{m,t}^k;\xi_{m,t}^k ).
$
\STATE \textcolor{blue}{Estimate $\nabla_{12}^2 g_m(y_{m-1,t}^k, y_{m,t}^k)$:} 
\\
\hspace{2cm}
$
u_{m, t+1}^k   =(1-\beta_t) \sum_{j \in \cN_k} w_{k,j} u_{m, t}^j + \beta_t \nabla_{12}^2 g_m^k( y_{m-1, t}^k,  y_{m, t}^k; \xi_{m,t}^k).
$
\STATE 
\textcolor{blue}{Estimate $[\nabla_{22}^2  g_m(y_{m-1,t}^k, y_{m,t}^k)]^{-1}$:} 
Set 
$Q_{m,t+1,0}^k = \mathbf{I}$
\FOR{ $i= 1,\cdots,b$}
\STATE $v_{m,t+1,i}^k  =(1-\beta_t) \sum_{j \in \cN_k} w_{k,j} v_{t,i}^j + \beta_t  \nabla_{22}^2 g_m^k(y_{m-1,t}^k, y_{m,t}^k;\xi_{m,t,i}^k),$ 
\STATE $Q_{m,t+1,i}^k = \mathbf{I} + ( \mathbf{I} - \frac{1}{L_{g,m}} v_{m,t+1,i}^k)Q_{m,t+1,i-1}^k$
\ENDFOR 
\STATE Set $q_{m,t+1}^k = \frac{1}{L_{g,m} } Q_{m,t+1,b}^k$
\ENDFOR
 	\ENDFOR
 \ENDFOR
 \ENSURE $\bar{x}_t=\frac{1}{K} \sum_{k \in\cK} x_t^k$
\end{algorithmic}
\end{algorithm}

\section{Algorithm}

As discussed in Section~\ref{sec:challenges}, the key challenge to solving DSMO is that each agent only has  access to its own data but is required to construct estimators for the gradients and Hessian averaged across all agents. It is particularly challenging to construct such estimators when limited by the network's communication protocol. 

Now we propose a gossip-based DSMO to tackle problem~\eqref{prob:1}. In our algorithm,  each agent $k \in \cK$ iteratively updates a sequence of solutions $(x_t^k,y_{1,t}^k,\cdots, y_{M,t}^k)$ by using the combination of gossip communications and weighted-average stochastic approximation, where $y_{m,t}^k$ is agent $k$'s estimator of the best response to the $m$-th level solution $y_{m,t}^\star := y_m^\star(\bar x_t)$ for all levels $m=1,\cdots, M$, with  $\bar x_t : = \frac{1}{K}\sum_{k\in \cK} x_t^k$ representing the solution averaged over all agents.

We provide the details of our DSMO algorithm in Algorithm~\ref{alg:1} and explain the concept here. 
Briefly speaking, under the decentralized SMO setting, the key to solving problem~\eqref{prob:1} is to provide good approximation for each component within the gradient expression of $\nabla F(\bar x_t)$ \eqref{eq:gradient_multilevel}. To do so, we first update our estimators of  $\nabla_1 f(x_t,y_{M, t})$.
Suppose agent $k$ would like to estimate $\nabla_1 f(x_t^k,y_{M,t}^k)$ by  $s_t^k$, under Algorithm~\ref{alg:1} Step 5, it would query the stochastic first-order information using its own data,  make gossip communications with neighbors, and update its estimators by taking the weighted average of its previous estimate $ s_{t-1}^{k} $, neighbors $j$'s estimate $s_{t-1}^{j}$ and the newly sampled gradient $\nabla_1 f^k(x_t^k,y_{M,t}^k;\zeta_t^k)$ as 
\begin{equation}\label{eq:update_st}
    s_{t+1}^k   =(1-\beta_t) \sum_{j \in \cN_k} w_{k,j} s_{ t}^j + \beta_t  \nabla_1 f^k(x_t^k, y_{M, t}^k ; \zeta_t^k ).
\end{equation}
Roughly speaking, this procedure can be viewed as taking the weighted average of gradients sampled by all agents over the network, 
except that the effect of consensus should also been taken into account. The  outer-level gradient  $\nabla_2 f(x_t,y_{M, t})$ can be estimated similarly (Step 6).

Next, we move to estimate the inner-level gradients and Hessians for each level $m=1,\cdots, M$. We start from the level $m=1$ for the purpose of illustration. Let $y_{1,t}^k$ be the estimator of $y_{1,t}^\star$ maintained by agent $k$, we first perform an inner-loop update as  
$$
	y_{1, t+1}^k  =  \sum_{j \in \cN_k} w_{k,j} y_{1, t}^j   - \gamma_t \nabla_2 g_1^k (x_t^k,y_{M,t}^k;\xi_{1,t}^k ),
$$
which  communicates the inner-level solutions $y_{1,t}^j$'s over the neighbors $j \in \cN_k$ and conducts a stochastic gradient descent step with pre-fixed stepsize $\gamma_t$. 
We then update 
$\nabla_{12}^2 g_1$ by using a similar manner as \eqref{eq:update_st} and summarize it in Algorithm~\ref{alg:1} Step 10.  

However, it requires extra effort to evaluate $[\nabla_{22}^2  g_1 ]^{-1}$, because it has no unbiased estimator. 
To be specific, we note that $\frac{1}{K}\sum_{k\in \cK} [\nabla_{22}^2  g_1^k]^{-1} \neq [\frac{1}{K} \sum_{k \in \cK} \nabla_{22}^2 g_1^k]^{-1}$, making the unbiased estimator of the desired term unavailable even if each agent has an unbiased estimator for $[\nabla_{22}^2  g_1^k]^{-1}$. This is a unique challenge for decentralized multilevel optimization, as discussed in Section~\ref{sec:challenges}. 
To overcome this issue, we propose a \emph{novel} approach that each agent $k$ constructs $b$ independent estimators $\{ v_{1,t,j}^k \}_{j=1}^b$ for $\nabla_{22}^2 g(x_t^k,y_{M,t}^k)$ using consensus and stochastic approximation. We then estimate $ \nabla_{22}^2 g(x_t^k,y_{M,t}^k) $ by utilizing the following approximation.
$$
   \Big  (\frac{1}{L_g} \nabla_{22}^2 g_1(x_t^k,y_{M,t}^k)  \Big )^{-1} \approx \mathbf{I} +  \sum_{j=1}^b  \Big (\mathbf{I} - \frac{1}{L_g} \nabla_{22}^2 g_1(x_t^k,y_{M,t}^k) \Big )^j \approx \mathbf{I} +  \sum_{i=1}^b \prod_{j=1}^i \Big (\mathbf{I}  - \frac{1}{L_g} v_{1,t,j}^k   \Big ). 
$$
We provide details in Algorithm~\ref{alg:1} Steps 11 - 16. 

After conducting the above updates for the first level $m=1$, we employ this updating scheme and recursively estimates all essential components $y_{m,t}$, $\nabla_{12}^2 g_m(  y_{m-1,t},  y_{m,t})$, and $\nabla_{22}^2 g_m( y_{m-1,t},  y_{m,t}))$
for all rest levels $m=2,\cdots, M$. We summarize the details in Steps 8-17. 

Finally, each agent computes the full gradient  $\nabla F(\bar x_t)$ \eqref{eq:gradient_multilevel}  using   the estimators obtained in the above procedure and updates the outer solution $x_t^k$ by using the combination of gossip communication and stochastic gradient descent (Step 4).

\noindent 
\textbf{Key features.}
We highlight the following key features of our DSMO algorithm:
(1) each agent only communicates their current iterates, as well as gradient and Hessian estimates instead of the raw data in the gossip-communication process,  preserving data privacy. 
(2) agent $k$  makes $\cO(|\cN_k|)$ communications with its neighbors in each round, which is much smaller than the total number of agents in a naive approach.
(3) the algorithm is robust to contingencies in the network. If a communication channel fails, the agents can still jointly learn provided that the network is still connected. By contrast, a single-center-multi-user network would fail completely in case of a center failure. 
(4) the algorithm estimates the Hessian inverse $[\nabla_{22}^2g_m(x,y)]^{-1}$ by using $b$ independent estimators $\{ v_{m,t,j}^k\}_{j=1}^b$, which avoids the expensive $O(d_m^3)$ computational cost of directly inverting the Hessian when the dimension $d_m$ is large.

%%%%%%%%%%
%%%%%%%%%%
%%%%%%%%%%
\medskip 

\section{Theoretical Guarantee}
In this section, we analyze the performance of our DSMO algorithm for  both nonconvex and $\mu$-PL objectives and derive the convergence rates in both cases.

\subsection{Nonconvex Objectives}\label{sec:nonconvex}
We first consider the scenario where the overall objective function $F(x)$ is nonconvex. For nonconvex objectives, given the total number of iterations $T$, we employ the step-sizes in a constant form such that 
\begin{equation} \label{eq:stepsize_multilevel}
\alpha_t = C_0 \sqrt{\tfrac{K}{T}}, \   \beta_t = \gamma_t = \sqrt{\tfrac{K}{T}},  \ \text{ and } b_m = 3 \lceil \log_{\frac{1}{1-\kappa_{g,m}}} T \rceil ,    \text{ for all }t=0,1,\cdots, T,
\end{equation} 
where $C_0>0$ is a small constant and the number of iterations $T $ is large   such that $\beta_t ,\gamma_t \leq 1$. 
\medskip 
\\
\textbf{Compounded effect of consensus and SMO:}
As discussed earlier, to derive the convergence rate of SMO under a decentralized federated setting, the key step is to quantify the compounded effect between the consensus errors induced by the network structure and the biases induced by estimating gradients within~\eqref{eq:gradient_multilevel}. Unlike the central-server or non-federated regimes, the consensus errors induced by the decentralized network structure must be handled carefully. 
We conduct a thorough  analysis to derive the contraction of consensus errors, and further show that both bias and variance of the averaged estimator diminish to zero, establishing a nontrivial convergence argument for the desired gradient and Hessian. In particular, the estimators preserve a concentration property so that their variances decrease proportionally to $1/K$, suggesting that the network consensus effect does not degrade the concentration of the generated stochastic samples. To achieve the best possible convergence rate, we carefully set the algorithm parameters, including the step-sizes $\alpha_t, \gamma_t$ and averaging weights $\beta_t$, to control the above consensus errors and biases.

%%%%%%%%%%%%%
%%%%%%%%%%%%%%

We derive the convergence rate as follows and provide the detailed proof in Appendix Section~\ref{app:nonconvex}.

\begin{theorem}\label{thm:nonconvex_multilevel}
Suppose Assumptions
\ref{assumption:SO}, \ref{assumption:W}, \ref{assumption:1},  and \ref{assumption:2} hold. Letting $\bar x_t = \frac{1}{K}\sum_{k \in \cK} x_t^k$, then
    \begin{equation*}
\begin{split}
&\frac{1}{T}\sum_{t=0}^{T-1} \EE[ \| \nabla F(\bar x_t)\|^2]  \leq  \cO \left (\frac{1}{ \sqrt{ KT} } \right )  + \cO \left ( \frac{K}{T(1- \rho)^2 } \right ) . 
  \end{split}
\end{equation*}
\end{theorem}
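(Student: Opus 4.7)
The plan is a descent argument on the averaged outer iterate $\bar x_t = \tfrac{1}{K}\sum_{k\in\cK} x_t^k$, combined with a careful tracking of consensus and estimation errors at every level. Because $W$ is doubly stochastic, averaging Step~4 of Algorithm~\ref{alg:1} collapses the gossip mixing and produces $\bar x_{t+1} = \bar x_t - \alpha_t \bar d_t$, where $\bar d_t := \tfrac{1}{K}\sum_k [s_t^k + (-1)^M u_{1,t}^k q_{1,t}^k\cdots u_{M,t}^k q_{M,t}^k h_t^k]$. I first establish Lipschitz continuity of $\nabla F$ by differentiating \eqref{eq:grad_y_multilevel} through the chain, which under Assumptions~\ref{assumption:1}--\ref{assumption:2} yields some finite $L_F$ depending polynomially on $L_{g,m}, \widetilde L_{g,m}$ and inversely on $\mu_{g,m}$. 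The standard smoothness inequality then gives $F(\bar x_{t+1}) \le F(\bar x_t) - \alpha_t \langle \nabla F(\bar x_t), \bar d_t\rangle + \tfrac{\alpha_t^2 L_F}{2}\|\bar d_t\|^2$, which upon telescoping produces $\tfrac{1}{T}\sum_t \EE\|\nabla F(\bar x_t)\|^2 \lesssim \tfrac{F(\bar x_0)-F^*}{\alpha_t T} + \tfrac{1}{T}\sum_t\EE\|\bar d_t - \nabla F(\bar x_t)\|^2 + L_F \alpha_t \cdot \tfrac{1}{T}\sum_t\EE\|\bar d_t\|^2$.

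Next I would derive contraction of consensus errors. For each tracked local sequence $z_t^k \in \{x_t^k, y_{m,t}^k, s_t^k, h_t^k, u_{m,t}^k, v_{m,t,i}^k\}$, the stacked update takes the form $Z_{t+1} = (1-\beta_t) W Z_t + \beta_t S_t$ for the estimator sequences, and $X_{t+1} = WX_t - \alpha_t D_t$ for the outer iterate. Using Assumption~\ref{assumption:W}(ii), $\|W - \tfrac{1}{K}\1\1^\top\|^2 = \rho$, together with the uniform second-moment bounds in Assumptions~\ref{assumption:1}(iii) and \ref{assumption:2}(iv), iterating and summing the geometric series in $\sqrt{\rho}$ yields $\sum_k \EE\|z_t^k - \bar z_t\|^2 = \cO(K\beta_t^2/(1-\rho)^2)$, and an analogous bound in $\alpha_t^2$ for $x_t^k$. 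Under the step-size choice $\beta_t = \sqrt{K/T}$, this is precisely the source of the second term $\cO(K/(T(1-\rho)^2))$ in the final bound.

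The core of the argument is the estimation-error analysis for the \emph{averaged} quantities $\bar s_t, \bar h_t, \bar u_{m,t}, \bar q_{m,t}, \bar y_{m,t}$. For $\bar y_{m,t}$, strong convexity of $g_m$ in $y_m$ combined with the SGD recursion gives $\EE\|\bar y_{m,t} - y_m^\star(\bar x_t)\|^2 \lesssim (1-\Theta(\gamma_t \mu_{g,m}))\,\EE\|\bar y_{m,t-1} - y_m^\star(\bar x_{t-1})\|^2 + \cO(\gamma_t^2/K) + \cO(\alpha_t^2/\gamma_t) + (\text{consensus and lower-level terms})$, where the $1/K$ in the variance reflects averaging of the $K$ independent oracles at each step. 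A parallel recursion holds for $\bar s_t, \bar h_t, \bar u_{m,t}$ with contraction factor $(1-\beta_t)$ and $\cO(\beta_t/K)$ variance. For $\bar q_{m,t}$, I invoke the Neumann identity $[A/L_{g,m}]^{-1} = \I + \sum_{j\ge 1}(\I - A/L_{g,m})^j$ together with Assumption~\ref{assumption:2}(v): the $b_m = 3\lceil\log_{1/(1-\kappa_{g,m})}T\rceil$-term truncation contributes bias of order $(1-\kappa_{g,m})^{b_m}/\mu_{g,m} = \cO(T^{-3})$, which is negligible, while independence of the $b_m$ Hessian samples keeps the variance at the same $1/K$ scale.

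The main obstacle will be controlling the compounded bias of the product $\bar u_{1,t}\bar q_{1,t}\cdots \bar u_{M,t}\bar q_{M,t}$ that estimates $\nabla y_M^\star(\bar x_t)$, since a naive triangle inequality blows up multiplicatively with $M$ and with the per-level errors. The key trick is the telescoping identity $\prod_m A_m - \prod_m B_m = \sum_{m=1}^M \bigl(\prod_{j<m}A_j\bigr)(A_m - B_m)\bigl(\prod_{j>m}B_j\bigr)$, combined with uniform operator-norm bounds $\|\bar u_{m,t}\|\le L_{g,m}$ and $\|\bar q_{m,t}\|\le 1/\mu_{g,m}$ from Assumption~\ref{assumption:2}(i,iv), so that the product bias becomes \emph{linear} in the per-factor errors. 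Plugging the resulting bound on $\EE\|\bar d_t - \nabla F(\bar x_t)\|^2$ back into the descent inequality, summing over $t=0,\dots,T-1$, and substituting $\alpha_t = C_0\sqrt{K/T}$ and $\beta_t = \gamma_t = \sqrt{K/T}$, the optimization term $\tfrac{F(\bar x_0)-F^*}{\alpha_t T}$ and the averaged noise term $\alpha_t L_F/K$ both become $\cO(1/\sqrt{KT})$, while the consensus contribution $\beta_t^2/(1-\rho)^2$ yields $\cO(K/(T(1-\rho)^2))$, giving exactly the rate stated in Theorem~\ref{thm:nonconvex_multilevel}.
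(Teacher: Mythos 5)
Your overall architecture matches the paper's: smoothness of $F$ via the Lipschitz continuity of the $y_m^\star$ chain, a descent inequality on $\bar x_t$, geometric contraction of consensus errors giving $\cO(K\beta_t^2/(1-\rho)^2)$, tracking recursions for $\bar y_{m,t},\bar s_t,\bar h_t,\bar u_{m,t},\bar v_{m,t,j}$ with $1/K$ variance, the truncated Neumann series with $b=\Theta(\log T)$ for the Hessian inverse, and a telescoping decomposition of the product estimator (the paper's $(2M+1)$-term split in Lemma~\ref{lemma:nonconvex_main_multilevel}). However, there is a genuine gap in how you handle the terms involving $\EE[\|\bar d_t\|^2]$ (the paper's $\EE[\|\bar z_t\|^2]$). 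First, your claim that the term $L_F\alpha_t\cdot\frac1T\sum_t\EE[\|\bar d_t\|^2]$ is an ``averaged noise term $\alpha_t L_F/K$'' of order $\cO(1/\sqrt{KT})$ is unjustified: $\bar d_t$ is an average of estimators of the \emph{full} pseudo-gradient, so $\EE[\|\bar d_t\|^2]$ contains the gradient and bias magnitudes and does not scale as $1/K$; with only the uniform bound $\EE[\|\bar d_t\|^2]\le C$ this term is $\cO(\sqrt{K/T})$, which already destroys the claimed rate.

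Second, and more fundamentally, every tracking recursion you cite carries a feedback term proportional to $\EE[\|\bar z_t\|^2]$ with coefficient $\cO(\alpha_t^2/\beta_t)$ or $\cO(\alpha_t^2/\gamma_t)$ (hidden in your ``$\cO(\alpha_t^2/\gamma_t)$''). When you unroll these recursions (divide by $\beta_t$ or $\gamma_t\mu_{g,m}$ and average over $t$), these terms contribute $\Theta\bigl(\tfrac{\alpha_t^2}{\beta_t^2}\bigr)\EE[\|\bar z_t\|^2]=\Theta(C_0^2)\,\EE[\|\bar z_t\|^2]$, i.e.\ a \emph{non-vanishing} constant if $\EE[\|\bar z_t\|^2]$ is only bounded crudely. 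The paper resolves this by keeping the negative term $-\tfrac{\alpha_t}{2}(1-\alpha_t L_F)\EE[\|\bar z_t\|^2]$ from the descent inequality inside a weighted potential $P_t$ and choosing $C_0$ small enough that the aggregate coefficient $\Upsilon(C_0,T)=1-\alpha_tL_F-C_0^2(\tilde C+\sum_m 3r_{5,m}L_{y,m}^2/\mu_{g,m}^2)$ is nonnegative, so all $\EE[\|\bar z_t\|^2]$ contributions cancel. Your sketch discards (or never uses) this negative term and never invokes smallness of $C_0$, so as written the argument yields at best $\cO(1/\sqrt{KT})+\cO(C_0^2)+\cO(K/(T(1-\rho)^2))$ rather than the theorem's bound. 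To repair it, either reinstate the negative $\|\bar d_t\|^2$ term and perform the paper's Lyapunov-weighted cancellation, or bound $\EE[\|\bar d_t\|^2]\le 2\EE[\|\nabla F(\bar x_t)\|^2]+2\EE[\|\bar d_t-\nabla F(\bar x_t)\|^2]$ throughout and absorb the gradient part into the left-hand side for sufficiently small $C_0$; either way the small-step-size absorption step is the missing idea.
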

\textbf{Effect of consensus:}
In this result, the $\cO(\tfrac{K}{T(1- \rho)^2})$ term represents the errors induced by the consensus of the network. Despite depending on  the network structure, this term diminishes to zero in the order of $\cO(1/T)$, becoming a small order term when $T$ is large. Consequently, our result indicate that the asymptotic convergence behavior of DSMO is independent of the network structure, answering question (ii) raised in Section~\ref{sec:intro}. 
\medskip 
\\
\textbf{Linear speedup:} 
Because each agent queries $\cO(b)$ stochastic samples per round, clearly the required iteration and per-agent sample complexities for finding an $\epsilon$-stationary point such that 
$
\frac{1}{T}\sum_{t=0}^{T-1} \EE[ \| \nabla F(\bar x_t)\|^2]  \leq  \epsilon
$
are  $\cO(\frac{1}{K \epsilon^2})$ and $\widetilde\cO(\frac{1}{K \epsilon^2})$, respectively. This result implies that our algorithm achieves a linear speed-up effect proportionate to the number of agents $K$, regardless of the number of levels $M$. In other words, in the presence of more agents, each agent needs to obtain fewer stochastic samples to achieve a specified accuracy. 
Meanwhile, our rate also matches the best-known $\cO(1/K\epsilon^2)$ iteration and per-agent sample complexities under the decentralized vanilla stochastic gradient descent settings~\citep{lian2018asynchronous}.  This is the first time  such a   result has been established for DSBO and general DSMO problems. 
\medskip 
\\
\textbf{Single-center-multi-user-federated SMO}: We point out that a simplified version of our algorithm solves SMO in star networks, where the central server  collects information directly from each agent and calculates the gradient by employing the weighted-average stochastic approximation scheme for the collected information. In such a scenario, the agents no longer  communicate by gossip  with neighbors but synchronously receive a common solution from the central server, so that the consensus effect disappears.

\subsection{Proof Sketch of Theorem~\ref{thm:nonconvex_multilevel}.}
First, we characterize the smoothness properties of the best response function $y_m^\star(x)$ for each decision level $j=1,\cdots, M$. By using Lemma~\ref{lemma:Lip_y_multilevel}, we show that under Assumption~\ref{assumption:2}, the best response for the $m$-th decision level $y_{m}^\star(x)$ is $L_{y,m}$-Lipschitz continuous such that 
 $$
 \|  y_m^\star(x) -  y_m^\star(x') \| \leq L_{y,m} \| x - x'\|,  \ \ \,\,\forall x,x'\in \RR^{d_{x}},
 $$
 where $L_{y,m}= \prod_{j=1}^m\frac{L_{g,j}}{\mu_{g,j}}$. By using this property, we are able to analyze the compounding effect of the consensus error $\|x_t^k - \bar x_t\|^2$ when each agent  keeps its own solution $x_t^k$. For the $m$-th decision level, Lemma~\ref{lemma:y_multilevel} suggests that 
 \begin{equation}
\begin{split}
& \EE [ \| \bar y_{m,t+1} - y_{m,t+1}^\star \|^2 ] 
\\
&
\leq ( 1 - \gamma_t \mu_{g,m})  \EE [ \| \bar y_{m,t} -  y_{m,t}^\star \|^2 ]  + \cO\left (\frac{ \gamma_t \alpha_t^2  + \gamma_t^3 }{(1- \rho )^2 } \right ) + \frac{ \gamma_{t} L_{g,m}^2 }{ \mu_{g,m}} \EE[ \| \bar y_{m-1,t} - y_{m-1,t}^\star \|^2] 
\\
& \quad + \frac{2\gamma_t^2 \sigma_{g,m}^2 }{K}  + \frac{3L_{y,m}^2\alpha_t^2 }{\gamma_t \mu_{g,m}  } \EE [\| \bar z_t \|^2 ]  .
\end{split}
\end{equation}
which recursively bounds the error $\| \bar y_{m,t+1} - y_{m,t+1}^\star \|^2$ incurred in iteration $t+1$ by the error $\| \bar y_{m,t} -  y_{m,t}^\star \|^2$ incurred in iteration $t$  and the error $\| \bar y_{m-1,t} - y_{m-1,t}^\star \|^2$ incurred in the previous level $m-1$. This answers how the estimation and consensus errors are compounded across different levels in DSMO under a general network topology.

Second, by using the $L_F$-smoothness property of $F(x)$, 
Lemma~\ref{lemma:nonconvex_main_multilevel} suggests that 
   \begin{equation*}
\begin{split}  
& \EE[ \| \nabla F(\bar x_t)\|^2] 
\\
& \leq  \frac{2}{\alpha_t } \Big ( \EE[ F(\bar{x}_t)] -  \E[F(\bar{x}_{t+1}) ] \Big ) 
  -  (1 - \alpha_t L_F ) \EE[ \|  \bar z_t  
\|^2  ]  
 + 4 \underbrace{ \EE [ \|  \nabla_1 f( \bar x_t ,   y_{M,t}^\star   ) - \bar s_t \|^2 ]  }_{\Delta_1}
 \\
 & \quad +  C_{2} \underbrace{ \EE[  \| \nabla_2 f(  \bar x_t  ,   y^\star_{M,t}    ) -  \bar h_t \|^2 ]  }_{\Delta_2}  + \sum_{m=1}^M  C_{m,3} \underbrace{ \EE[ \|  \nabla_{12}^2 g_m( y_{m-1,t}^\star, y_{m,t}^\star    ) - \bar u_{m,t} \|_F^2 ]   }_{\Delta_{3,m}}
 \\
 & \quad 
+    \sum_{m=1}^M C_{m,4,j} \underbrace{ \EE[ \| \nabla_{22}^2 g_m(y_{m-1,t}^\star, y_{m,t}^\star  ) -  \bar v_{m,t,i} \|_F^2]  }_{\Delta_{4,m}} + \cO \left ( \frac{M \beta_t^2}{(1-  \rho  )^2}\right ),
  \end{split}
\end{equation*}
for some constants $C_2,C_{m,3}, C_{m,4,j} >0$. Here $\Delta_1$ quantifies the error of estimating the partial gradient $\nabla_1 f(\bar x_t, y_{M,t}^\star)$ by $\bar s_t = \frac{1}{K} \sum_{k=1}^K s_{t}^k$ averaged over all agents, $\Delta_2$ quantifies the approximation error for the partial gradient $\nabla_2 f(\bar x_t, y_{M,t}^\star)$, $\Delta_{3,m}$ is the estimation error of the second-order information $\nabla_{12}^2g_m( y_{m-1,t}^\star, y_{m,t}^\star    )  $ for the $m$-level function $g_m$, and $\Delta_{4,m}$ represents the estimation error of the Hessian $\nabla_{22}^2 g_m( y_{m-1,t}^\star, y_{m,t}^\star    )$ for the $m$-th level. Meanwhile, the term $-(1-\alpha_t L_F)\EE[\| \bar z_t\|^2]$ becomes negative when $\alpha_t$ is properly chosen to be small. 
It remains to bound each of the approximation error within the above inequality to establish the overall convergence result.

To analyze the estimation error $\| \bar s_t - \nabla_1 f( \bar x_{t}, y_{M,t}^\star ) \|^2 $ of the averaged estimator $\bar s_t = \frac{1}{K}\sum_{k=1}^K s_t^k$, Lemma~\ref{lemma:error_multilevel} (a) shows that the estimation error incurred in iteration $t+1$ can be bounded by the error in iteration $t$ as
\begin{equation}\label{eq:sketch_1}
    \begin{split}
    &    \EE[ \|  \bar s_{t+1} - \nabla_1 f( \bar x_{t+1}, y_{M,t+1}^\star ) \|^2]  
        \\
              & \leq (1-\beta_t)\EE [ \| \bar s_t - \nabla_1 f (   \bar x_{t}, y_{M}^\star(\bar x_t) ) \|^2] + \frac{2\beta_t^2   C_f^2}{K }    + \frac{4\alpha_t^2  L_f^2 (1 + L_{y,M}^2 )}{\beta_t }  \EE[  \| \bar z_t \|^2  ] 
              \\
              & \quad + \frac{2\beta_t^2  \sigma_f^2}{K } +    \cO \left ( \frac{ \beta_t ( \alpha_t^2 +  \beta_t^2) }{(1- \rho)^2 } \right )  + 6\beta_t L_f^2  \EE[ \| \bar y_{M,t} - y_{M,t}^\star \|^2 ]. 
        \end{split}
\end{equation}
The error $\|  \bar h_{t+1} - \nabla_2 f( \bar x_{t+1}, y_M^\star(\bar x_{t+1}) ) \|^2$ can be analyzed analogously in Lemma~\ref{lemma:error_multilevel}~(b). In addition, Lemma~\ref{lemma:ut} quantifies the estimation errors $ \|  \bar u_{m,t} - \nabla_{12}^2 g_m( y_{m-1,t}^\star, y_{m,t}^\star ) \|_F^2 $ and $\| \nabla_{22}^2 g_m(y_{m-1,t}^\star, y_{m,t}^\star  ) -   v_{m,t,i}^k \|_F^2$.

After establishing the error bound in the above terms, by combining the above blocks and  setting the step-sizes as 
$\alpha_t = C_0 \sqrt{\tfrac{K}{T}}$ and $\beta_t = \gamma_t = \sqrt{\tfrac{K}{T}}$ for some  properly chosen  small $C_0 >0$, we conclude that 
  \begin{equation*}
\begin{split}
\frac{1}{T}\sum_{t=0}^{T-1} \EE[ \| \nabla F(\bar x_t)\|^2] 
 \leq  \cO \Big ( \frac{M}{\sqrt{ TK} } \Big )   + \cO \left ( \frac{KM}{T (1- \rho)^2 }\right ),
  \end{split}
\end{equation*}
completing the proof. We defer the detailed proofs to Appendix Section~\ref{app:nonconvex}. 
\QED 

%%%%%%%%%%%%%
%%%%%%%%%%%%%
%%%%%%%%%%%%%
\subsection{$\mu$-PL Objectives}\label{sec:PL}

Next we study the case where the objective function satisfies the following $\mu$-PL condition. 
\begin{AS}\label{assumption:PL}
There exists a constant $\mu >0$ such that the objective satisfies the PL condition: 
\begin{equation*}\label{def:PL}
    2 \mu (F(x ) - F^*) \leq \| \nabla F(x)\|^2.
\end{equation*}
\end{AS}

Note that the class of 
strongly convex functions is a special case of $\mu$-PL functions. 
To utilize the $\mu$-PL property and achieve fast convergence, unlike the nonconvex case, where the step-sizes \eqref{eq:stepsize_multilevel} are set to constants depending on the total number of iterations $T$,  we employ  step-sizes in a diminishing form such that 
\begin{equation}\label{eq:stepsize_PL}
\alpha_t = \frac{2}{\mu(C_1 + t)} ,   \beta_t =  \gamma_t = \frac{C_1}{C_1 + t}, \text{ and }   b = \Theta( \log( T )) 
%b_t = 2| \lceil \log_{\tilde c} t \rceil | \ \ 
\text{ for } t \geq 1,
\end{equation}
where $C_1 >0$ is  a large constant. 
By following an analytical process similar to that of the nonconvex scenario, in the next result, we derive the convergence rate of Algorithm~\ref{alg:1}  for $\mu$-PL objectives. 

\begin{theorem}\label{thm:PL}
Suppose Assumptions \ref{assumption:SO}, \ref{assumption:W}, \ref{assumption:1}, and \ref{assumption:2} hold and the function satisfies the $\mu$-PL Assumption~\ref{assumption:PL}.  Letting $\bar x_T = \frac{1}{K}\sum_{k \in \cK} x_T^k$, then
\begin{equation*}
\EE[ F(\bar x_T) ] - F^* \leq \cO \left (\frac{1}{KT} \right ) + \cO \left (\frac{\ln T}{T^2(1-\rho)^2} \right ).
\end{equation*}
The iteration and per-agent sample complexities for finding an $\epsilon$-optimal point $\EE[F(\bar x_T)] - F^* \leq \epsilon$ are $\cO(\frac{1}{K\epsilon})$ and $\widetilde \cO(\frac{1}{K\epsilon})$, respectively.
\end{theorem}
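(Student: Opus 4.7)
The plan is to parallel the proof of Theorem~\ref{thm:nonconvex_multilevel} but exploit the $\mu$-PL condition to upgrade the sublinear nonconvex rate into a linearly contracting recursion on the suboptimality gap. I would first reuse the structural building blocks already established for the nonconvex case: the Lipschitz continuity of $y_m^\star(x)$ from Lemma~\ref{lemma:Lip_y_multilevel}, the per-level recursion on $\EE[\|\bar y_{m,t} - y_{m,t}^\star\|^2]$ from Lemma~\ref{lemma:y_multilevel}, the descent-type inequality of Lemma~\ref{lemma:nonconvex_main_multilevel}, and the error recursions for $\bar s_t$, $\bar h_t$, $\bar u_{m,t}$, $\bar v_{m,t,i}$ from Lemmas~\ref{lemma:error_multilevel} and~\ref{lemma:ut}. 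These recursions continue to hold verbatim; only the behaviour of the coefficients changes because $\alpha_t,\beta_t,\gamma_t$ now diminish as $\Theta(1/(C_1+t))$. Substituting the PL inequality $\|\nabla F(\bar x_t)\|^2 \geq 2\mu(F(\bar x_t)-F^*)$ into the descent inequality yields the one-step bound
\[
\EE[F(\bar x_{t+1})] - F^* \;\leq\; (1-\alpha_t\mu)\,(\EE[F(\bar x_t)] - F^*) + \alpha_t\, E_t + \cO\!\Big(\tfrac{\beta_t^2}{(1-\rho)^2}\Big),
\]
where $E_t$ aggregates the weighted estimation errors for the $M$ inner solutions, the two outer gradients, and the two types of Hessian estimates.

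Next I would construct a Lyapunov function
\[
V_t = (\EE[F(\bar x_t)]-F^*) + \sum_{m=1}^M \lambda_{y,m}\,\EE[\|\bar y_{m,t}-y_{m,t}^\star\|^2] + \lambda_s\EE[\|\bar s_t-\nabla_1 f\|^2] + \lambda_h\EE[\|\bar h_t-\nabla_2 f\|^2] + \sum_{m=1}^M\bigl(\lambda_{u,m} E^u_{m,t}+\lambda_{v,m} E^v_{m,t}\bigr),
\]
with positive constants $\lambda_\star$ chosen so that, when the per-term recursions are added, the cross-couplings cancel. The truncated Neumann Hessian-inverse bias decays geometrically in $b$, so the choice $b=\Theta(\log T)$ makes this bias $\cO(1/T)$, which is absorbed by the $1/(KT)$ target. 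Matching coefficients and taking $C_1$ sufficiently large would yield
\[
V_{t+1} \;\leq\; \Big(1-\tfrac{c}{C_1+t}\Big) V_t + \tfrac{A}{K(C_1+t)^2} + \tfrac{B\log T}{(1-\rho)^2 (C_1+t)^3},
\]
for absolute constants $A,B,c>0$. Unrolling this scalar recursion via the standard diminishing step-size lemma, the two forcing sequences telescope to $\cO(1/(KT))$ and $\cO(\log T/(T^2(1-\rho)^2))$ respectively, delivering the stated bound on $\EE[F(\bar x_T)]-F^*$. The per-agent sample complexity $\widetilde\cO(1/(K\epsilon))$ then follows because each iteration consumes $\cO(b)=\widetilde\cO(1)$ samples.

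The main obstacle is handling the compound of errors across the $M$ levels under time-varying step-sizes. Each level $m$ feeds its error into level $m{+}1$ through a $\gamma_t L_{g,m}^2/\mu_{g,m}$ coupling term visible in Lemma~\ref{lemma:y_multilevel}, and analogous couplings show up in the Hessian estimates through the dependence of $v_{m,t,\cdot}^k$ on $(y_{m-1,t},y_{m,t})$. To make $V_t$ genuinely contractive at rate $c/(C_1+t)$, the Lyapunov weights $\lambda_{y,m},\lambda_{u,m},\lambda_{v,m}$ must be chosen (likely increasing geometrically in $m$) so that the leak from level $m$ is strictly dominated by the self-contraction at level $m{+}1$. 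Ensuring that a single constant $C_1$ simultaneously exceeds every level-specific admissibility threshold, while keeping the driving noise at $\cO(1/(K(C_1+t)^2))$ and the consensus remainder at $\cO(\log T/((1-\rho)^2(C_1+t)^3))$, is where the argument is most delicate; unlike in the nonconvex case, the constant step-size trick is unavailable, so each level's contraction has to be earned directly from the diminishing $\gamma_t$.
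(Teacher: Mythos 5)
Your proposal follows essentially the same route as the paper's proof: substitute the $\mu$-PL bound into the descent inequality to get a $(1-\alpha_t\mu)$-contraction, build a Lyapunov function combining the suboptimality gap with the $\bar s_t,\bar h_t,\bar u_{m,t},\bar v_{m,t,j}$ estimation errors and the per-level errors $\|\bar y_{m,t}-y_{m,t}^\star\|^2$ with weights chosen recursively from level $M$ downward to cancel the cross-level leaks, pick $C_1$ large enough to kill the $\|\bar z_t\|^2$ term, and unroll the resulting scalar recursion (the paper's Lemmas~\ref{lemma:PL_01} and~\ref{lemma:sequence_alpha}). The only points you leave implicit, which the paper pins down explicitly, are that the contraction constant must exceed one (hence $\alpha_t=\tfrac{2}{\mu(C_1+t)}$, so the unrolling lemma with $\Gamma_k\sim k^{-2}$ yields $\cO(1/(KT))$ rather than a slower rate) and that the Neumann-series bias must be pushed to a sufficiently high polynomial order (the paper takes $b=3\lceil\log_{1/(1-\kappa_{g,m})}T\rceil$ so the bias is $\cO(T^{-3})$, since an additive $\cO(1/T)$ bias would not be absorbed by the $\cO(1/(KT))$ target).
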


Details and proof are deferred to  Section \ref{app:proof_of_thm_PL} of the supplement.
This result shows that our algorithm achieves a faster convergence rate for functions satisfying the $\mu$-PL condition in terms of both iteration and sample complexities. 
First, as in the nonconvex scenario, the consensus error decays in the order of $\widetilde \cO(\frac{1}{T^2(1-\rho)^2})$. Dominated by $\cO(\frac{1}{KT})$, such consensus decaying order indicates that the network structure would not affect Algorithm~\ref{alg:1}'s asymptotic convergence behavior under $\mu$-PL objectives. 
Meanwhile, the above result implies that Algorithm~\ref{alg:1} speeds up  linearly  with the number of agents and matches the optimal $\cO(1/\epsilon)$ sample complexity for single-server vanilla strongly-convex stochastic optimization~\citep{rakhlin2011making}. As a result, our algorithm achieves the \emph{optimal} sample complexity for decentralized stochastic bilevel optimization, establishing the benchmark.

%%%%%%%%%%%%%%%%%%%%%%%%%%%%%%%
%%%%%%%%%%%%%%%%%%%%%%%%%%%%%%%5

\section{Numerical Experiments}\label{sec:numerics}
In this section, we validate the practical performance of our algorithm in three applications: hyper-parameter optimization, policy evaluation in Markov Decision Processes (MDP), and risk-averse optimization, on artificially constructed decentralized ring networks.

\subsection{Hyper-parameter Optimization}
We consider federated hyper-parameter optimization~\eqref{prob:hyperopt} for a handwriting recognition problem over the Australia handwriting dataset \citep{chang2011libsvm} consisting of data points $(w_i,z_i)$, where $w_i \in \RR^{14}$ is the feature and $z_i \in \{ 0,1\}$ indicates whether this data point belongs to category ``1'' or not.  
%\mw{intro dataset here} 
In our experiment, we consider the  sigmoid loss function that $l_i(z) = 1/(1+ \exp(-z))$ and a strongly convex regularizer 
 $\cR(x,y)  = \sum_{i=1}^d \frac{x_i }{2} \| y_i \|^2 $.
 We consider a ring network of $K$ agents where each agent $i$ preserves two neighbors $(i-1)$ and $(i+1)$ and conducts a gossip communication strategy with adjacency matrix $w_{i,j} = \frac{1}{3}$ for $ j  \in \{ i-1,i,i+1 \}$. We tackle this problem by our DSMO Algorithm~\ref{alg:1} with $M=2$ levels. 
 
 Before testing Algorithm~\ref{alg:1}, we first randomly split the dataset for training and validation, and then allocates 
both training and validation dataset over $K$ agents. We then run  Algorithm~\ref{alg:1} for $T=20000$ iterations, with $b = 200$, $\alpha_t = 0.1\sqrt{K/T}$, and $\beta_t = \gamma_t = 10\sqrt{K/T}$.

To provide a benchmark for comparison, we implement a baseline  Decentralized Bilevel Stochastic Approximation (DBSA) algorithm, a naive extension of the double-loop BSA algorithm \citep{ghadimi2018approximation} in the decentralized setting,  formally stated in Section~\ref{app:hyper_opt} of the supplementary materials.

We first consider $K = 5$, test Algorithm~\ref{alg:1} for $5 \times 10^4$ iterations, and compare its performance with DBSA.  We report the validation loss against total samples  in Figure~\ref{fig:1} and observe that DSMO exhibits better performance than DBSA. In particular, Further, we observe that our algorithm outperforms the baseline algorithm DBSA in that it requires 
fewer samples for DSMO to achieve the same accuracy.

To investigate the efficiency of Algorithm~\ref{alg:1} to the network structure, we test Algorithm~\ref{alg:1} 
over $K = 5,10,20$, and report the details of training and validation loss in Figure~\ref{fig:1}. Further, comparing the performances of Algorithm~\ref{alg:1} over different agents $K=5,10,20$, we observe that Algorithm~\ref{alg:1} converges faster when using more  agents. 
This observation suggests that Algorithm~\ref{alg:1} exhibits a speed-up effect when using more agents. 
We provide additional experiments on networks of larger size ($K=100$) and various topologies (fully-connected and randomly-connected) in Section~\ref{app:hyper_opt} of the supplement.

\begin{figure}[t]
\begin{minipage}{0.5\textwidth}\label{fig:source_v_trans}
    \centering
    \includegraphics[width=0.65\linewidth]{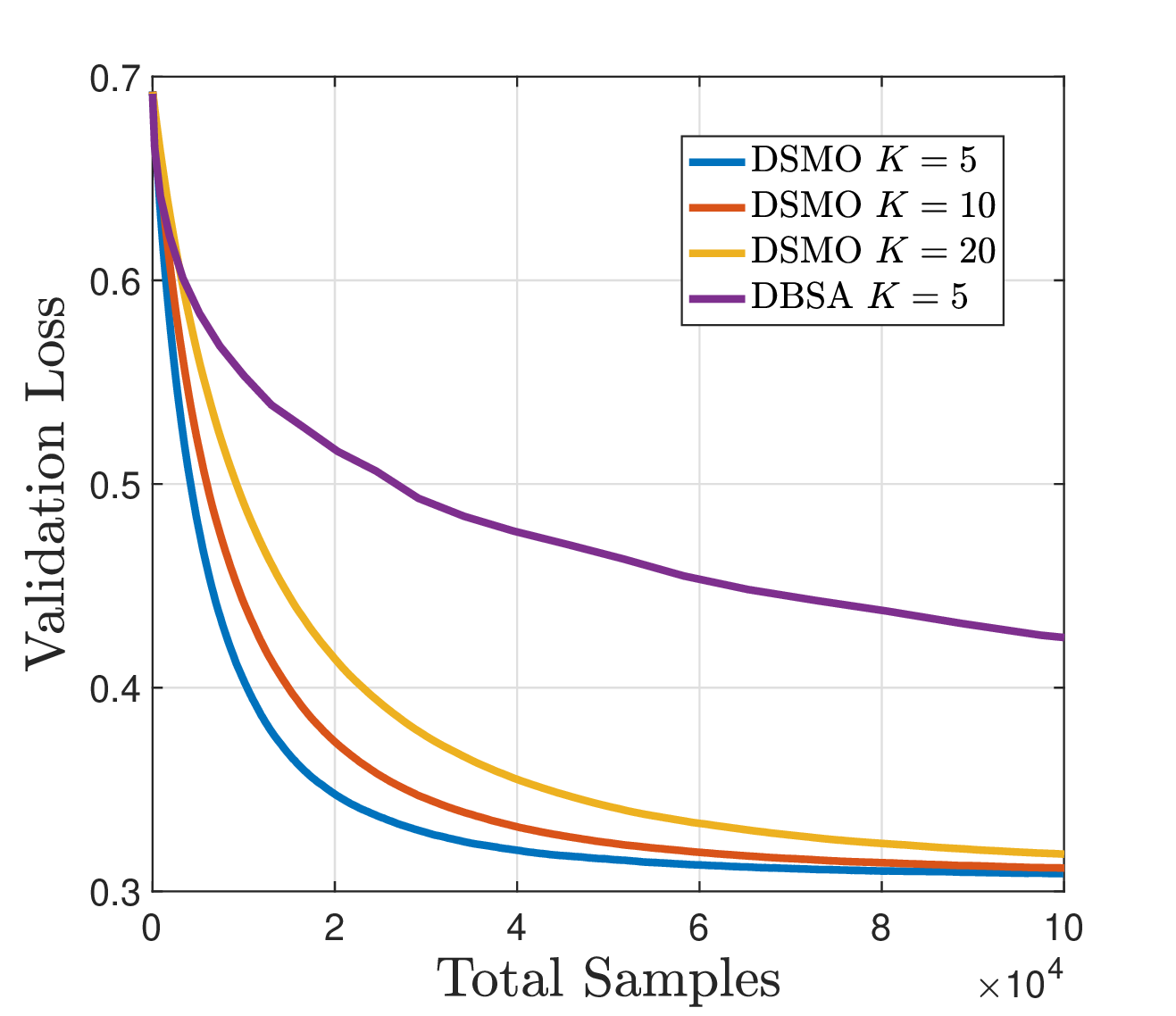}\\
    (a)
\end{minipage}
\begin{minipage}{0.5\textwidth}
    \centering
    \includegraphics[width=0.65\linewidth]{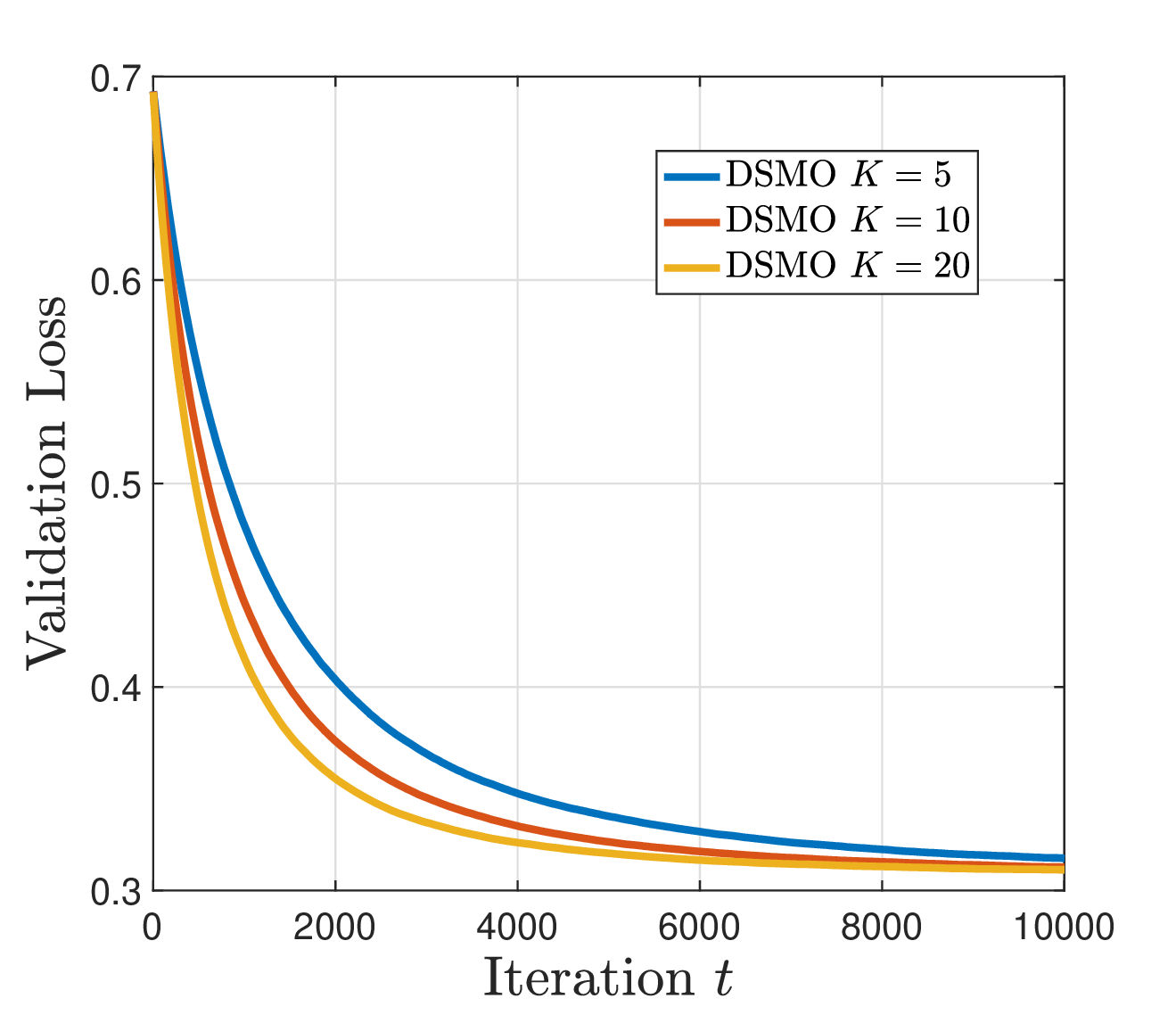}\\
    (b)
\end{minipage}
\caption{(a) Empirical averaged training loss against total samples  for DSMO $K=5,10,20$ and DBSA $K = 5$. 
(b) Empirical averaged validation loss against iteration for DSMO $K=5,10,20$. All figures are generated through 10 independent simulations over the Australia handwriting dataset.
}
\label{fig:1}
\vskip -3mm
\end{figure}

\subsection{Distributed Policy Evaluation for Reinforcement Learning}
We consider a multi-agent MDP problem that arises in reinforcement learning. Let $\cS$ be the state space. For any state $s \in \cS$, we denote by $V(s)$ the value function. 
we consider the scenario where the value function can be approximated by a linear function such that $V(s) = \phi_s^\top x^*$, where $\phi_s \in \RR^m$ is a feature and $x^* \in \RR^m$ is an unknown parameter.   
To obtain the optimal $x^*$, we consider the following regularized Bellman minimization problem
\begin{equation*}
    \min_{x} \ F(x) = \tfrac{1}{2|\cS|} \sum_{s \in \cS} \big (\phi_s^\top x - \EE_{s'}[r(s,s') + \gamma \phi_{s'}^\top x \mid s ] \big )^2 + \tfrac{\lambda \|x \|^2}{2},
\end{equation*}
where $r(s,s')$ is the random reward incurred from a transition $s$ to $s'$, $\gamma \in (0, 1)$ is the discount factor, $\lambda$ is the coefficient for the $\ell_2$-regularizer, and the expectation is taken over all random transitions from $s$ to $s'$. 

In the federated learning setting, we consider a ring network of $K$ agents. Here each agent $k$ has access to its own data with a heterogeneous random reward function $r^k$ and can only communicate with its two neighbors $k+1$ and $k-1$. 
We denote by 
\begin{equation*}
\begin{split}
y_s^\star(x) & = \argmin_y \EE_{s' } \Big ( \phi_s^\top x -  \frac{1}{K} \sum_{k\in\cK}r^k(s,s') + \gamma \phi_{s'}^\top x   - y \  \Big |  \ s\Big )^2 
\\
& 
= \phi_s^\top x -  \EE_{s'} \Big [\frac{1}{K} \sum_{k\in\cK}r^k(s,s') + \gamma \phi_{s'}^\top x \ \Big | \  s \Big ] 
\end{split}
\end{equation*}
where $r^k(s,s')$ is the random reward function for agent $k$. 
The above problem can be recast as a bilevel optimization problem 
\begin{equation*}
    \min_{x \in \RR^d }f(x,y^\star(x)) = \tfrac{1}{2|\cS|} \sum_{ s \in \cS}( y_s^\star(x) )^2 + \tfrac{\lambda \|x \|^2}{2} . 
\end{equation*}
As pointed out by \citep{wang2016accelerating}, the above problem is $\lambda$-strongly convex. 

In our experiments, we simulate an environment with state space $|\cS| = 100$ and set the regularizer parameter $\lambda = 1$. 
We test the performance of Algorithm\ref{alg:1} over three scenarios with $K= 5,10,20$ and conduct 10 independent simulations for each $K$. We implement a baseline double-loop algorithm DSGD that first estimates $y_s^\star(x_t)$ with $t$ samples in iteration $t$ and then optimizes the solution $x_t$. We defer the implementation details of the environment and  above algorithms to Section \ref{app:MDP} of the supplement.

We first consider $K = 5$, run Algorithm~\ref{alg:1} for $10^4$ iterations and compare its performance with DSGD. We plot the empirical averaged mean square error $\|\bar  x_t - x^*\|^2$ against total samples  generated by \emph{all} agents in Figure \ref{fig:1}. This empirical result suggests that Algorithm~\ref{alg:1} outperforms DSGD. 
To investigate the convergence rate of DSMO, we compare the performance of DSMO over all three setups $K=5,10,20$ and plot the trajectory of the averaged log-error $ \log( \| \bar x_t - x^* \|^2)$ averaged, with a straight line of slope -1 provided for comparison. We observe that for all three scenarios, the slopes of $\log(\| \bar x_t - x^*\|^2)$ are close to -1,  matching our theoretical claim in Theorem~\ref{thm:PL} that Algorithm~\ref{alg:1} converges at a rate of $\cO(1/t)$ for strongly convex objectives. 

In the above experiment, we also note that Algorithm~\ref{alg:1} converges faster when using more agents.
To further demonstrate the linear speedup effect, we compute the total  samples generated to find an $\epsilon$-optimal solution $\|\bar x_t - x^* \|^2 \leq \epsilon $ and plot the 75\% confidence region of the log-sample against the number of agents $K = 5,10,20$ in Figure~\ref{fig:MDP1}. We observe that it takes a roughly same amount of samples to find a $10^{-6}$-optimal solution despite different number of agents are involved. This suggests that the per-node sample complexity decreases linearly with $K$, validating the linear speedup claim in Theorem~\ref{thm:PL}. We provide additional numerical results for other optimality level $\epsilon$ in Section~\ref{app:MDP} of the supplementary material to further demonstrate the linear speedup effect.

\begin{figure}[t]
\begin{minipage}{0.32\textwidth}
    \centering
        \includegraphics[width=1\linewidth]{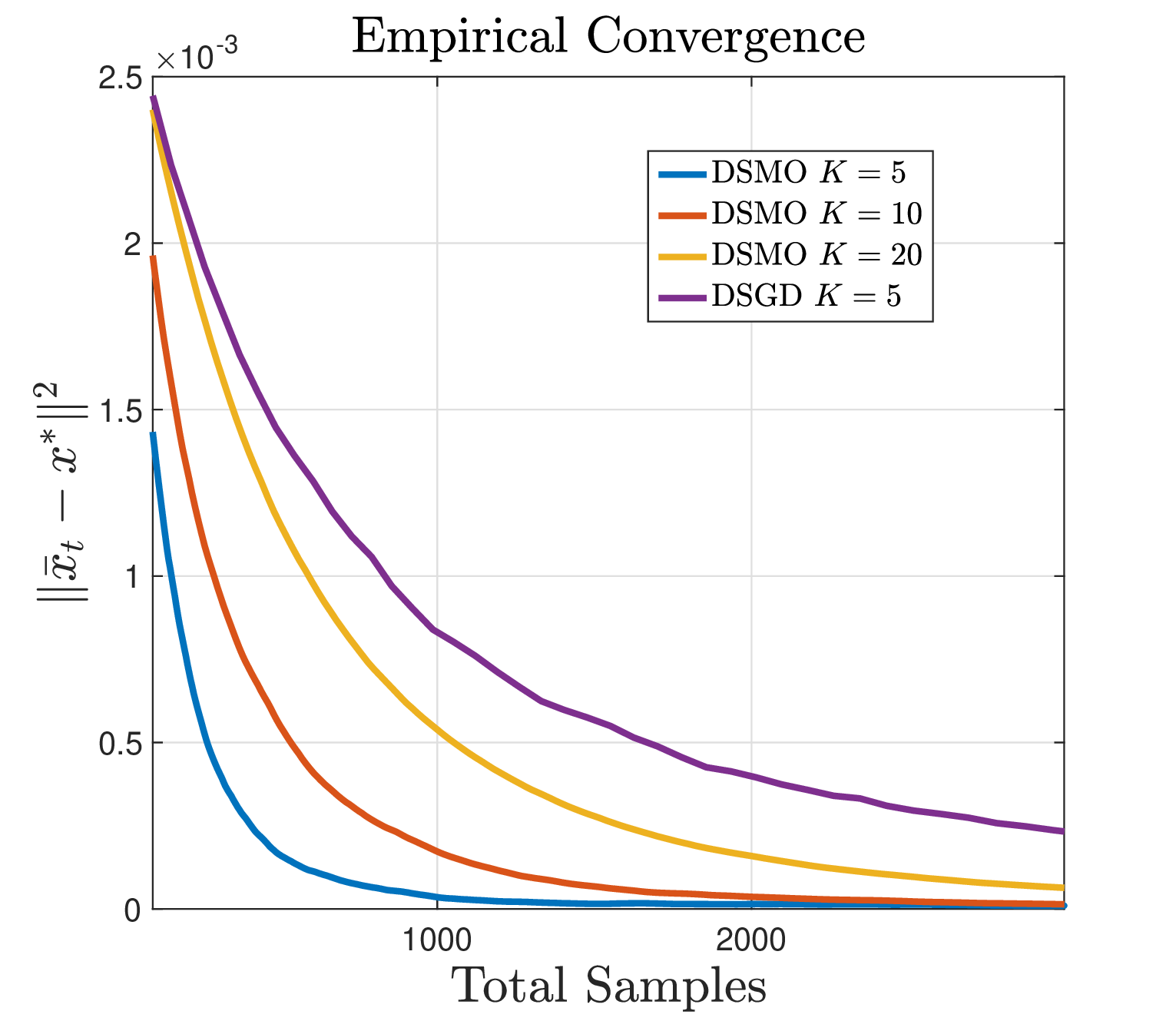}
        (a)
        \end{minipage}  
        \begin{minipage}{0.32\textwidth}
        \centering
   \includegraphics[width=1\linewidth]{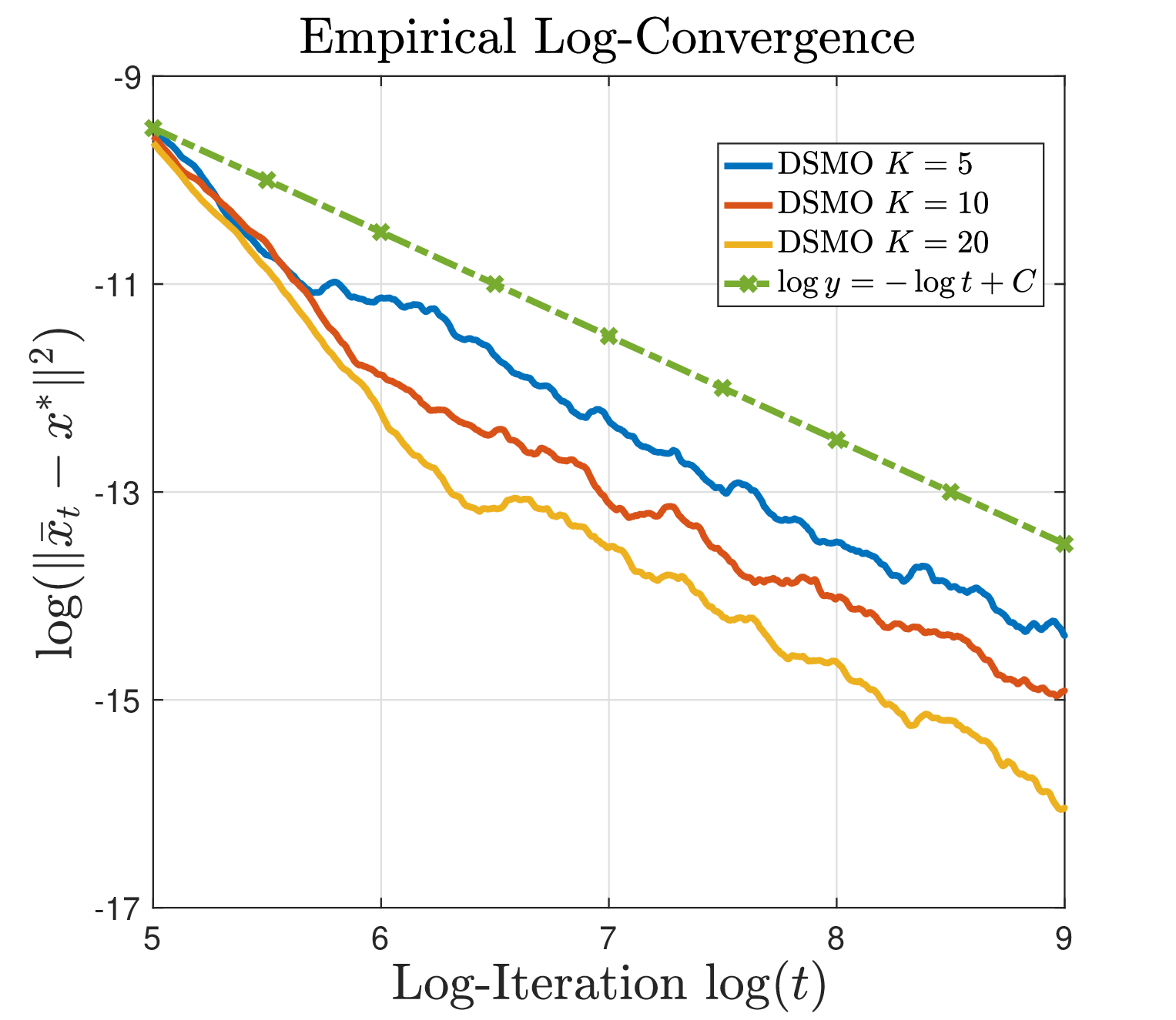} 
   (b)
   \end{minipage}
                  \begin{minipage}{0.32\textwidth}
        \centering
   \includegraphics[width=1\linewidth]{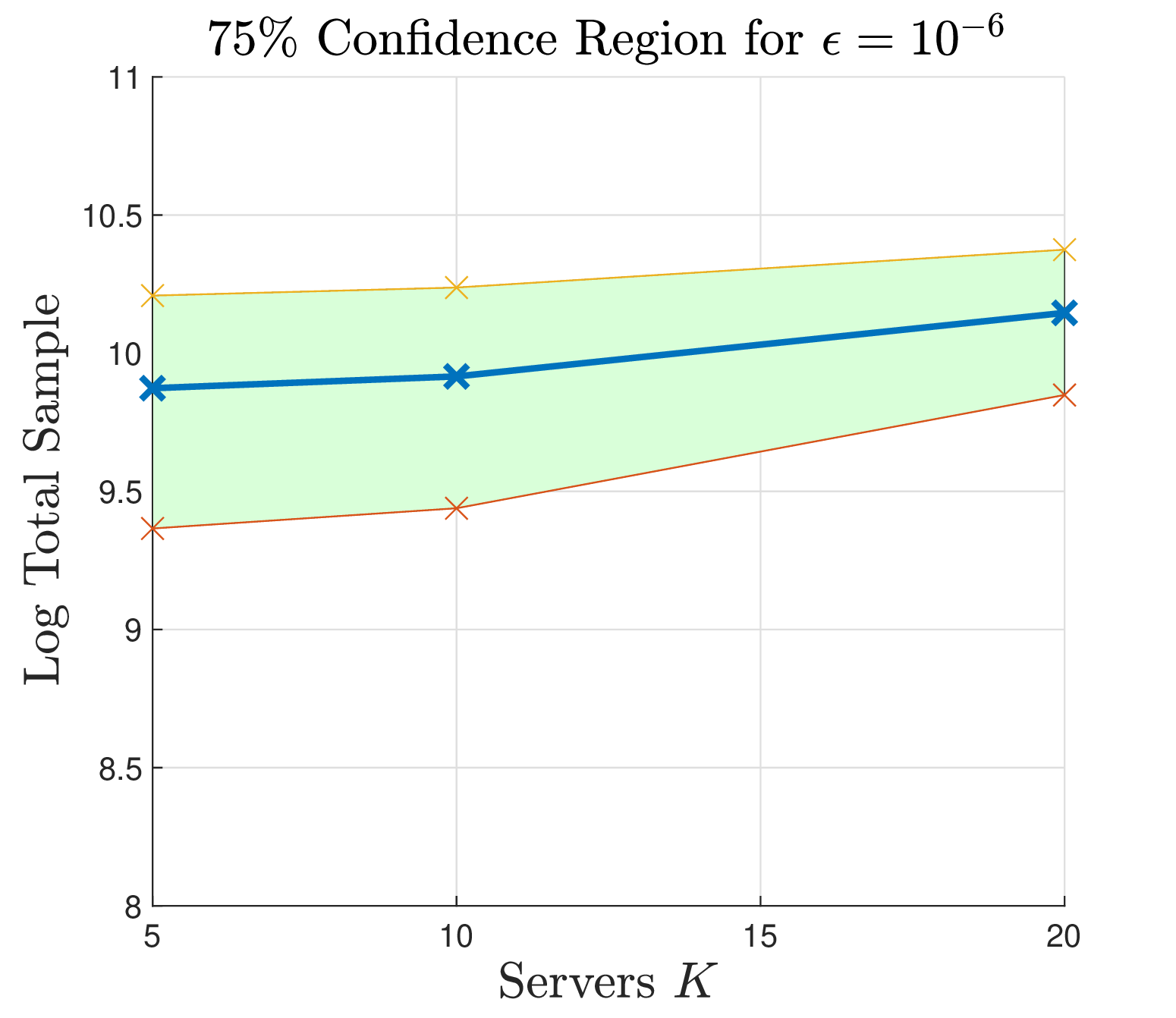}
   (c)
   \end{minipage}
    \caption{(a) Empirical averaged MSE$\| \bar x_t - x^*\|^2$ against total samples  for DSMO $K=5,10,20$ and DSGD $K = 5$.
    (b) Empirical averaged log-MSE $\log(\| \bar x_t - x^*\|^2)$ against log-iteration $\log(t)$ for DSMO $K=5,10,20$.
    (c) 75\% confidence region of log- total samples for achieving $\|\bar x_t - x^* \|^2 \leq \epsilon$, with varying network sizes $K = 5,10,20$. All figures are generated through 10 independent simulations. 
    }
    \label{fig:MDP1}
    \vskip -3mm
\end{figure}

\subsection{Federated Risk-averse Optimization} We consider the federated risk-averse optimization problem \eqref{prob:dist_riskaverse} over a decentralized network, which can be viewed as a decentralized stochastic three-level optimization problem. 
We consider the case where $K$ agents are connected over a ring network. Letting $\xi_i^k = (w_i^k, y_i^k)$ be a random feature-label pair accessible to agent $k$, we assume a linear model that $y_i = w_i^\top \tilde x + \epsilon_i$ where $\epsilon_i \sim \cN(0,0.2)$ and each entry in $\tilde x$ is independently generated such that $\tilde x_i \sim \text{Unif}[0,1]$. We consider a least-squared utility function $U^k(x,\xi_i^k)$ that 
$$
U^k(x,\xi_i^k) = -(y_i^k - x^\top w_i^k)^2. 
$$
Here, problem \eqref{prob:dist_riskaverse} is $\lambda$-strongly concave and we consider the case where  $p=2$ and $\lambda = 1$. 
We employ the DSMO algorithm~\ref{alg:1} ($M=3$) to solve this problem and conduct 10 independent simulations. In each simulation, we run our algorithm for $T = 25000$ rounds and adopt adaptive stepsizes such that $\alpha_t = \frac{2}{1+t}$, $\beta_t = \gamma_t = \frac{50}{50 +t }$ for all $t\leq T$. 
We generate a batch of data $\{ (w_i^k,y_i^k)\}$ of size $10^4$ and split them to each agent so that each agent only has  access to its own data in simulation. 
We test our algorithm over ring networks of different sizes that $K \in \{ 5,10,20\}$. For a benchmark comparison,  we  derive the optimal solution $x^*$ by solving the batch version of problem \eqref{prob:dist_riskaverse}. For each simulation, letting $\bar x_t = \frac{1}{K} \sum_{k=1}^K x_t^k$, we plot the averaged MSE  $\| \bar x_t - x^*\|^2$ against the iteration number $t$ in Figure~\ref{fig:risk_averse}~(a). We also report $\| \bar x_t - x^*\|^2$ against the number of total samples over all agents in Figure~\ref{fig:risk_averse}~(b). In addition, to study the empirical convergence rate of our algorithm, we plot the log-error $\log( \| \bar x_t - x^*\|^2 )$ against the log-iteration $\log t$ in Figure~\ref{fig:risk_averse}~(c), and provide a straight line of slope $-1$ for comparison with the benchmark.

From Figure~\ref{fig:risk_averse} (a), we observe that our algorithm generates a sequence converging to $x^*$ in all simulations, and it accelerates with the number of agents $K$ increasing. From Figure~\ref{fig:risk_averse} (b), it can be seen that to  obtain a solution of a certain level of accuracy, the required total number of samples are roughly the same among tested networks of different sizes. Further, Figure~\ref{fig:risk_averse} (c) suggests that slopes of $\log(\| \bar x_t - x^* \|^2)$ against $\log t$ are around $-1$ in all tested networks, which further implies that Algorithm~\ref{alg:1} enjoys a convergence rate of $\cO(1/T)$ for strongly convex SMO problems and matches Theorem~\ref{thm:PL} that Algorithm~\ref{alg:1} converges to the optimal solution at the rate of $\cO(\frac{1}{KT})$. The above numerical results demonstrate the practical efficiency of our SMO algorithm over networks of different sizes and validates our theoretical convergent rate results. 

\begin{figure}[t]
\begin{minipage}{0.32\textwidth}
    \centering
        \includegraphics[width=1\linewidth]{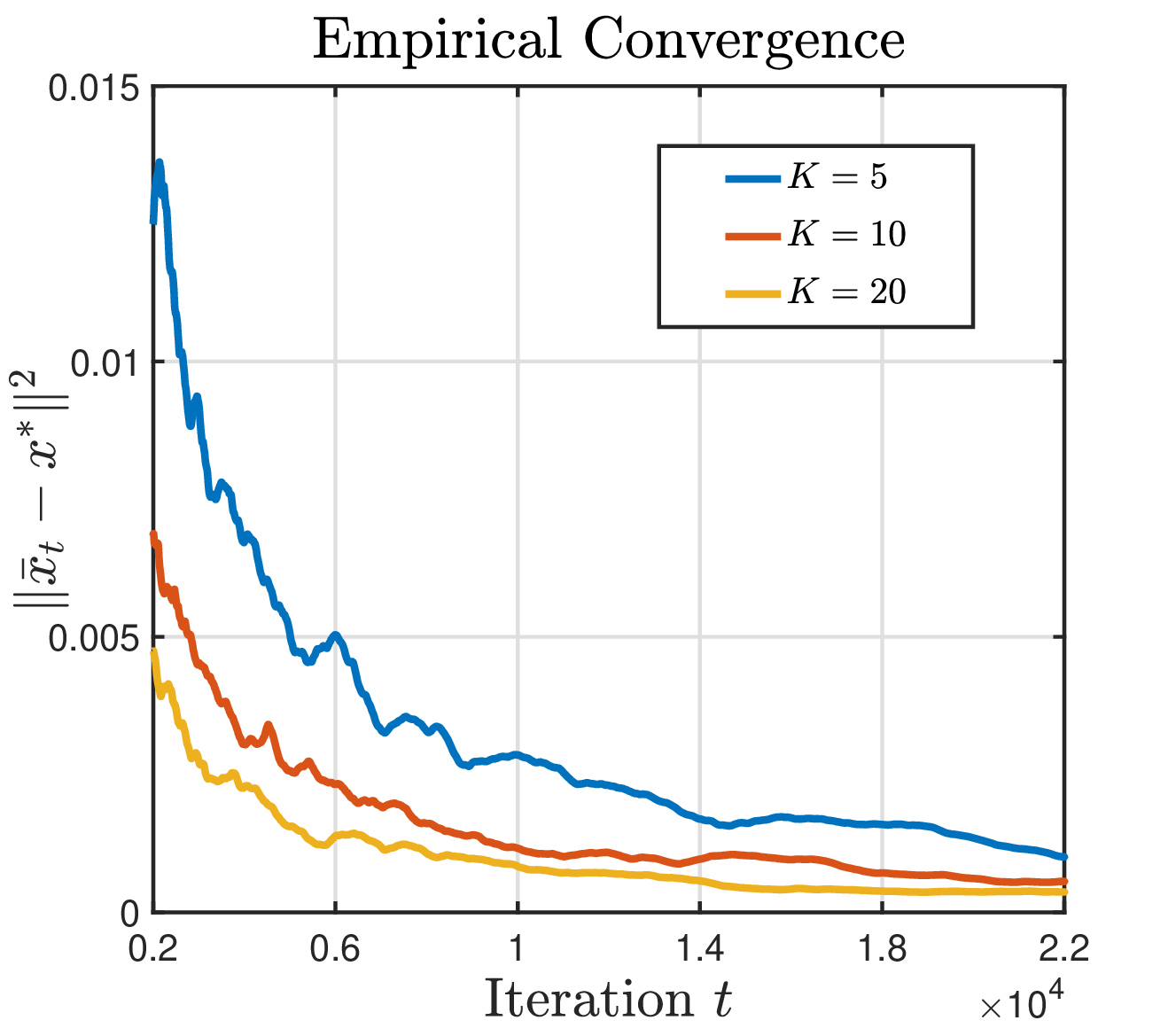}
        (a)
        \end{minipage}  
        \begin{minipage}{0.32\textwidth}
        \centering
   \includegraphics[width=1\linewidth]{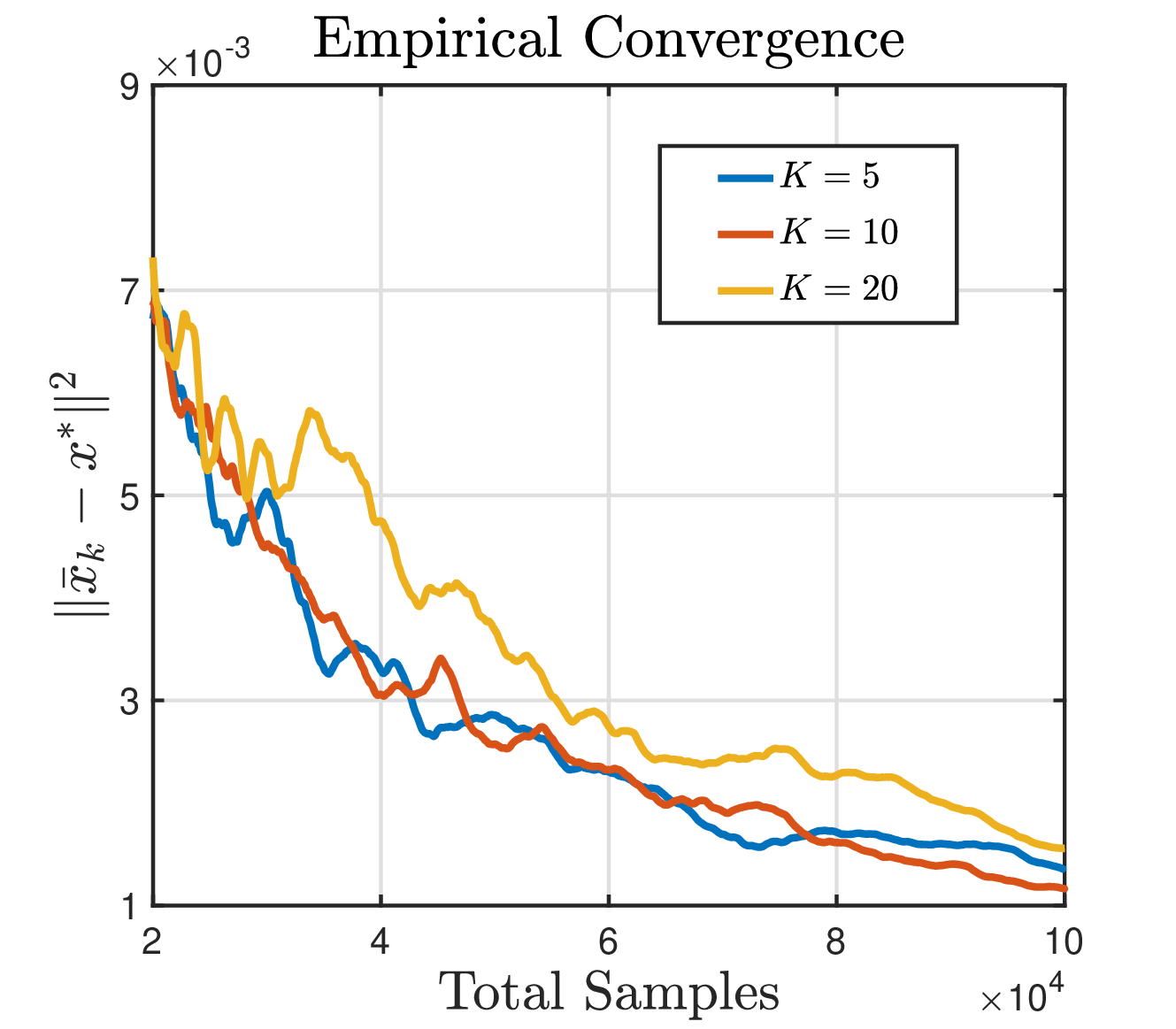} 
   (b)
   \end{minipage}
                  \begin{minipage}{0.32\textwidth}
        \centering
   \includegraphics[width=1\linewidth]{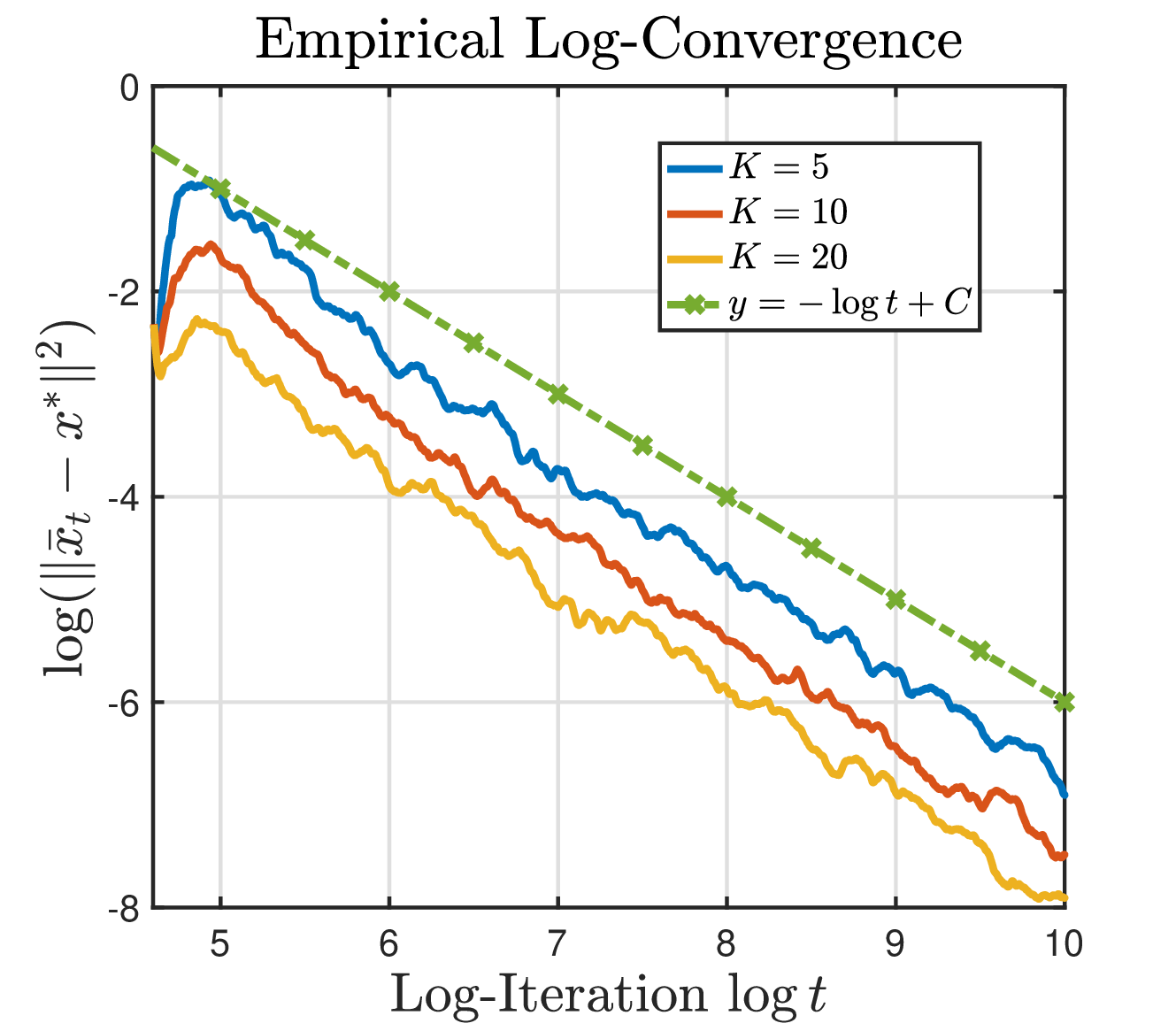}
   (c)
   \end{minipage}
    \caption{(a) Empirical averaged MSE$\| \bar x_t - x^*\|^2$ against the total number of samples  for DSMO $K=5,10,20$.
    (b) Empirical averaged MSE $\| \bar x_t - x^*\|^2$ against iteration  for DSMO $K=5,10,20$.
    (c) Empirical averaged log-MSE $\log(\| \bar x_t - x^*\|^2)$ against log-iteration $\log(t)$ for DSMO $K=5,10,20$.
     All figures are generated from 10 independent simulations. 
    }
    \label{fig:risk_averse}
    \vskip -3mm
\end{figure}

\section{Conclusion}
In this paper, we propose a novel formulation for decentralized stochastic multilevel optimization. We develop a gossip-based stochastic approximation scheme to solve this problem in various settings. We show that our proposed algorithm finds a stationary point at a  rate of $\cO(\tfrac{1}{\sqrt{KT}})$ for nonconvex objectives, and converges to the optimal solution at a  rate of $\cO(\tfrac{1}{KT})$ for PL objectives, regardless of the number of levels $M$ and network topology. Numerical experiments on hyper-parameter optimization, multi-agent federated MDP, and federated risk-averse optimization demonstrate the practical efficiency of our algorithm, exhibit the effect of speed-up in a decentralized setting, and validate our theoretical claims.
In future work, we wish to develop algorithms that achieve lower iteration complexities and enjoy lower communication costs.

\section*{Acknowledgement}
Mengdi Wang acknowledges support by NSF grants DMS-1953686, IIS-2107304, CMMI-1653435, and ONR grant 1006977.
\bibliographystyle{plain}
\bibliography{SCGD,bib}
\newpage
\appendix
\section*{Appendix}

\section*{Outline}

\begin{itemize} 
    \item Section \ref{app:A}:  Notation, assumptions, and supporting lemmas
    \begin{itemize}
        \item Subsection~\ref{app:detailed_assumption} Detailed assumptions
        \item Subsection~\ref{app:lemmas} Technical lemmas for Lipschitz properties and Hessian inverse estimation
    \end{itemize}
    \item Section \ref{app:nonconvex}: Proof of results for nonconvex objectives 
    \begin{itemize} 
        \item Subsection~\ref{app:multilevel_supporting}: Technical Lemmas for  consensus and estimation errors
    \item Subsection~\ref{app:proof_nonconvex_multilevel} Proof of Theorem \ref{thm:nonconvex_multilevel}
    \end{itemize}
    \item Section~\ref{app:proof_PL_multilevel}: Proof of results for $\mu$-PL objectives
    \begin{itemize}
    \item Subsection~\ref{app:main_PL}: Technical Lemma for convergence properties of $\mu$-PL functions
        \item Subsection~\ref{app:proof_of_thm_PL}: Proof of Theorem~\ref{thm:PL}
    \end{itemize}

    \item Section~\ref{app:numerics}: Additional numerical details. 

\end{itemize}

\section{Notation, Detailed Assumptions, and Technical Lemmas} \label{app:A}
For notational convenience, we denote by $\bar u_{m,t} = \frac{1}{K} \sum_{k=1}^K u_{m,t}^k$ the averaged estimates of $\nabla_{12}^2 g_m(y_{m-1,t}^\star, y_{m,t}^\star)$   over $u_{m,t}^k$'s within the network.  We denote by $\bar v_{m,t,j} = \frac{1}{K} \sum_{k=1}^K v_{m,t,j}^k \in \RR^{d_y \times d_y}$, $ \bar s_t = \frac{1}{K} \sum_{k=1}^K s_t^k$,  and $ \bar h_t = \frac{1}{K} \sum_{k=1}^K h_t^k$.   We denote by $\bar z_k = \sum_{k =1}^K z_t^k$. We denote by $\| a\| = \| a\|_2$ for a vector $a$ and denote by $\| A\| = \| A\|_2 = \sigma_{\max}(A)$ for a matrix $A$. We denote by $\|A \|_F $ the Frobenius norm for a matrix $A$ and  denote by $\lv A, B \rv_F = \sum_{i,j} A_{ij} B_{ij}$ the Frobenius inner product for two matrices $A$ and $B$. 
For any $\bar x_t \in \RR^{d_x}$, we denote by $y_{m,t}^\star = y_m^\star(\bar x_t)$. 
For notational convenience, we drop the sub-scripts $\xi^k,\zeta^k$ within the expectations $\EE_{\xi^k}[\cdot]$ and $\EE_{\zeta^k}[\cdot]$.

\subsection{Detailed Assumptions}\label{app:detailed_assumption}
We denote by $L_{q,m}  = \frac{1}{\kappa_{g,m} L_{g,m}} \geq  \mu_{g,m}^{-1}$ and observe that $\|[\nabla_{22}^2g_m(y_{m-1},y_m) ]^{-1}\|_2^2 \leq \mu_{g,m}^{-2} \leq L_{q,m}^2$  for all  $y_{m-1} \in \RR^{d_{m-1}}$ and $y_m \in \RR^{d_m} $. For notational convenience, we use $\sigma_{g,m}, \sigma_f >0$ to represent the upper bounds of standard deviations such that 
\begin{align*}
\EE [ \| \nabla_1 f^k(x,y_M;\zeta^k) - \nabla_1 f^k(x,y_M) \|^2 ] \leq \sigma_f^2, \EE [ \| \nabla_2 f^k(x,y_M;\zeta^k) - \nabla_2 f^k(x,y_M) \|^2] \leq \sigma_f^2,
\end{align*}
and 
\begin{align*}
    & \EE [ \| \nabla_2 g_m^k(y_{m-1},y_m;\xi_m^k) - \nabla_y g^k(y_{m-1},y_m) \|^2 ]  \leq \sigma_{g,m}^2,  \\
     & \EE [ \| \nabla_{12}^2 g_m^k(y_{m-1},y_m ;\xi_m^k) - \nabla_{12}^2 g_m^k(y_{m-1},y_m) \|_F^2 ]\leq \sigma_{g,m}^2, 
     \\
     &  \EE [ \| \nabla_{22}^2 g_m^k(y_{m-1},y_m;\xi_m^k) - \nabla_{22}^2 g_m^k(y_{m-1},y_m ) \|_F^2 ] \leq \sigma_{g,m}^2. 
\end{align*}
We also 
adopt constants $L_F, L_y >0$ to quantify the Lipschitz properties, specified in Section~\ref{app:lemmas}. Given $(x,y_M)$, we use  $\nabla_1 f^k(x,y_M; \zeta^k )$, $\nabla_2 f^k(x,y_M; \zeta^k )$, $\nabla_2 g_m^k(y_{m-1}, y_m; \xi^k )$, \\
$\nabla_{12}^2 g_m^k( y_{m-1}, y_m; \xi_m^k)$, and $\nabla_{22}^2 g_m^k( y_{m-1}, y_m; \xi_m^k) $ to represent the independent stochastic information sampled in round $t$ by agent $k$.  Such independent samples can be obtained by independently querying the $\cS \cO$  three times.

\subsection{Technical Lemmas for Lipschitz Properties and Hessian Inverse Estimation}\label{app:lemmas}

\begin{lemma}\label{lemma:inverse_diff}
 Let $A $ be a positive definite matrix such that $ \delta {\bf I}  \succeq A \succ \bf{0}$ for some $0 < \delta < 1$, and $A_1,\cdots, A_k$ be $k$ matrices such that 
 $\EE[ \| \prod_{i=j}^k A_i\|_2^2] \leq \delta^{2(k-j+1)}$ for $1\leq j \leq k$. Let $   Q_k = {\bf I} + A_k +A_{k-1}  A_k  + \cdots + A_1 A_2 \cdots A_k$, then the following holds. 
 \begin{equation*}
    \begin{split}
     \EE [ \|  ({\bf I}-A)^{-1}   - Q_k \|_2^2 ] 
  \leq \frac{1}{1-\delta} \Big ( \sum_{1 \leq j \leq k+1 }  \delta^{k-j} \EE[ \| A_j - A \|_2^2]  \Big ) + \frac{ \delta^{k+1} }{(1-\delta)^3}.
    \end{split} 
\end{equation*}
\end{lemma}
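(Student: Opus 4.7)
My plan is to expand $(\mathbf{I}-A)^{-1}$ as a Neumann series and compare it term-by-term against $Q_k$. Since $A \preceq \delta \mathbf{I}$ with $\delta < 1$, the identity $(\mathbf{I}-A)^{-1} = \sum_{j=0}^\infty A^j$ holds. Writing $Q_k = \sum_{j=0}^k P_j$ with $P_0 = \mathbf{I}$ and $P_j = A_{k-j+1}\cdots A_k$ for $j \geq 1$, I split
\begin{equation*}
(\mathbf{I}-A)^{-1} - Q_k \;=\; \sum_{j=1}^k (A^j - P_j) \;+\; \sum_{j \geq k+1} A^j.
\end{equation*}
The infinite tail is elementary: $\|A\|_2 \leq \delta$ yields $\|\sum_{j\geq k+1} A^j\|_2 \leq \delta^{k+1}/(1-\delta)$, which after squaring produces the $\delta^{k+1}/(1-\delta)^3$ summand of the target bound (absorbing one factor of $\delta^{k+1}$ against $1/(1-\delta)$ using $\delta < 1$).

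For the finite part I would use the standard telescoping identity
\begin{equation*}
A^j - A_{k-j+1}\cdots A_k \;=\; \sum_{l=1}^j A^{l-1}\bigl(A - A_{k-j+l}\bigr)\bigl(A_{k-j+l+1}\cdots A_k\bigr),
\end{equation*}
obtained by inserting intermediate products $A^{l-1}A_{k-j+l}\cdots A_k$ and taking successive differences. Summing over $j$ and reindexing via $m = k-j+l$ collapses the double sum into a single sum of diagonal terms:
\begin{equation*}
\sum_{j=1}^k (A^j - P_j) \;=\; \sum_{m=1}^k \Big(\sum_{l=0}^{m-1} A^l\Big)(A - A_m)(A_{m+1}\cdots A_k).
\end{equation*}
The geometric bound $\|\sum_{l=0}^{m-1}A^l\|_2 \leq 1/(1-\delta)$ combined with the triangle inequality gives $\|\sum_j(A^j - P_j)\|_2 \leq \frac{1}{1-\delta}\sum_m \|A-A_m\|_2\,\|A_{m+1}\cdots A_k\|_2$. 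To produce the $\delta^{k-m}$ weighting in the target bound I would then apply weighted Cauchy--Schwarz,
\begin{equation*}
\Big(\sum_m x_m y_m\Big)^2 \leq \Big(\sum_m \delta^{k-m}\Big)\Big(\sum_m \delta^{-(k-m)} x_m^2 y_m^2\Big) \leq \frac{1}{1-\delta}\sum_m \delta^{-(k-m)} x_m^2 y_m^2,
\end{equation*}
with $x_m = \|A-A_m\|_2$ and $y_m = \|A_{m+1}\cdots A_k\|_2$, and take expectation, invoking the hypothesis $\mathbb{E}[\|A_{m+1}\cdots A_k\|_2^2] \leq \delta^{2(k-m)}$ to convert $\delta^{-(k-m)}\mathbb{E}[y_m^2]$ into $\delta^{k-m}$. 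This produces exactly the $\frac{1}{1-\delta}\sum_m \delta^{k-m}\mathbb{E}[\|A-A_m\|_2^2]$ contribution in the lemma. Combining with the tail estimate and absorbing numerical constants finishes the proof.

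\textbf{Main obstacle.} The sharpest step is justifying the factorization $\mathbb{E}[\|A-A_m\|_2^2\,\|A_{m+1}\cdots A_k\|_2^2] \leq \mathbb{E}[\|A-A_m\|_2^2]\cdot\mathbb{E}[\|A_{m+1}\cdots A_k\|_2^2]$ so that the clean weighted sum emerges. Because the $A_i$ arise from independent oracle draws (Assumption \ref{assumption:SO}), I would condition on the block $(A_{m+1},\ldots,A_k)$ first and exploit independence of $A_m$ from that block to split the product. Carefully tracking the index reshuffle in the telescope, together with the single Cauchy--Schwarz application above, is what compresses the naively quadratic $\delta^{2(k-m)}$ decay back into the linear $\delta^{k-m}$ weight on the right-hand side.
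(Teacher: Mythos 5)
Your proposal is sound and lands close to, but not identical with, the paper's own argument. The paper never writes the Neumann series explicitly: it uses the fixed-point identity $Q=\mathbf{I}+AQ$ for $Q=(\mathbf{I}-A)^{-1}$ together with the auxiliary recursion $Q_j=\mathbf{I}+A_jQ_{j-1}$, telescopes $(\mathbf{I}-A)^{-1}-Q_k$ into terms of the form $(A_k\cdots A_{i+1})(A-A_i)Q$ plus a residual power of $A$ times $Q$, and then expands the square of the resulting sum, treating each cross term separately via $\|AB\|_2\le\tfrac12(\|A\|_2^2+\|B\|_2^2)$. You instead compare degree by degree, telescope each $A^j-A_{k-j+1}\cdots A_k$, resum to $\sum_{m}\big(\sum_{l<m}A^l\big)(A-A_m)(A_{m+1}\cdots A_k)$, and then use one weighted Cauchy--Schwarz to generate the $\delta^{k-m}$ weights. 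Two points in your favor: (i) your truncation $P_j=A_{k-j+1}\cdots A_k$ matches the product ordering actually displayed in the lemma, whereas the paper's recursion generates the products in reverse order; (ii) your route needs only the ``diagonal'' factorization $\EE[\|A-A_m\|_2^2\,\|A_{m+1}\cdots A_k\|_2^2]\le\EE[\|A-A_m\|_2^2]\,\EE[\|A_{m+1}\cdots A_k\|_2^2]$ over disjoint index blocks, while the paper also factors cross-term expectations such as $\EE[b_ib_ja_ia_j]$ in which the same matrix appears in both factors. The paper performs all such factorizations silently (the lemma statement contains no independence hypothesis); your plan to justify them by conditioning on the independent oracle draws is, if anything, more careful than the source.

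One caveat on constants: the steps as you wrote them do not deliver the prefactor $\frac{1}{1-\delta}$ of the stated bound. Squaring $\|\sum_j(A^j-P_j)\|_2\le\frac{1}{1-\delta}\sum_m x_my_m$ and then applying your weighted Cauchy--Schwarz yields $\frac{1}{(1-\delta)^3}\sum_m\delta^{k-m}\EE[\|A-A_m\|_2^2]$, not $\frac{1}{1-\delta}\sum_m(\cdot)$, and gluing the finite part to the tail costs a further cross term (or a factor $2$) for which the stated inequality leaves no slack, so ``absorbing numerical constants'' is not literally available. This is not a gap relative to the paper, however: the paper's proof reaches $\frac{1}{1-\delta}$ only by silently dropping the $\|Q\|_2^2\le(1-\delta)^{-2}$ factor that multiplies every term of its own expansion, so an honest reading of either argument proves the inequality with $(1-\delta)^{-3}$ in place of $(1-\delta)^{-1}$ on the main sum --- which is all that the subsequent Hessian-inverse lemma and the choice $b=\Theta(\log T)$ actually use. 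So your proposal matches the paper's proof in rigor and in what it ultimately establishes; just state the weaker constant rather than claiming the displayed one.
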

\begin{proof}
Recall that for any positive definite matrix $A$ such that $\delta I  \succ A \succeq 0 $ for some $0<\delta <1$, we have 
\begin{equation*}
    ({\bf I}- A)^{-1} = \sum_{i=0}^\infty A^i = {\bf I} + A + A^2 + \cdots  .
\end{equation*}
Letting $Q = ({\bf I}- A)^{-1}$, we have 
\begin{equation*}
    Q = {\bf I} + AQ \text{ and } \|Q \|_2^2  = \| ({\bf I}- A)^{-1} \|_2^2 \leq \frac{1}{(1-\delta)^2}.
\end{equation*}
We define an auxiliary sequence $Q_1 = {\bf I} + A_1$, $Q_2 = {\bf I} + A_2  Q_1,\cdots$, and  $Q_{k-1} ={\bf I} + A_{k-1} Q_{k-2}$,  and note that $Q_k = {\bf I} + A_k Q_{k-1}$. Consider $ ({\bf I}-A)^{-1} - Q_k$, by utilizing the above  sequence, we obtain 
\begin{equation*}
\begin{split}
   ({\bf I}-A)^{-1} - Q_k & = A Q  - A_k Q_{k-1} = (A- A_k) Q + A_k (Q - Q_{k-1})  + A^{k+1}Q.
   \end{split}
\end{equation*}
We note that $Q - Q_{k-1} = AQ -A_{k-1}Q_{k-2} = (A- A_{k-1})Q + A_{k-1}(Q - Q_{k-2})$. 
By using such an induction relationship, we can quantify the estimation error by 
\begin{equation*}
\begin{split}
  \|  ({\bf I}-A)^{-1} - Q_k \|_2  
   & \leq \| A- A_k \|_2 \| Q \|_2 + \| A_k \|_2  \| A - A_{k-1} \|_2 \| Q\|_2  + \cdots 
   \\
   & \quad + \| A_k A_{k-1} \cdots A_2 \|_2 \| A - A_1 \|_2  \| Q \|_2  + \|  A^{k+1}Q \|_2.
   \end{split}
\end{equation*}
 Letting $a_i = \|A - A_i \|_2$ and $b_i = \|  A_{i+1} \cdots A_k\|_2 \| Q\|_2$, and taking expectations on both sides of the above inequality, we obtain $\EE[ \|b_i \|_2^2] \leq \delta^{2(k-i)} \| Q \|_2$ and $\EE[\| A^{k+1}\|_2^2] \leq \delta^{2(k+1)}$. By using the fact that $\|AB \|_2 \leq \| A\|_2 \| B\|_2 \leq \frac{\|A \|_2^2}{2} + \frac{\|B \|_2^2}{2}$, we  further have that 
\begin{equation*}
    \begin{split}
      & \EE [ \|  ({\bf I}-A)^{-1}   - Z_k \|_2^2 ] 
       \\
     & \leq \sum_{i=1}^k \EE[ \|  b_i a_i \|_2^2 ] +  \sum_{1 \leq i < j\leq k} 2\EE[ \| b_i  b_j a_i a_j \|_2]
 +   \sum_{1\leq i \leq k} 2 \EE [ \| a_ib_i A^{k+1}Q \|_2]  +  \EE [ \|  A^{k+1}Q  \|_2^2 ]
      \\
      & \leq  \sum_{i=1}^k \EE[ \|  b_i \|_2^2] \EE[ \| a_i \|_2^2 ] +  \sum_{1 \leq i < j\leq k} \EE[ \| b_i  b_j \|_2 ] \EE [ \| a_i \|_2^2 + \| a_j \|^2  ] 
      \\
      & \quad +   \sum_{1\leq i \leq k} 2 \delta^{2k+1-i} \EE[ \|  Q \|_2  \| a_i\|_2 ]
      +  \delta^{2(k+1)} \| Q  \|_2^2 
      \\
      & \leq \sum_{i=1}^k  \delta^{2(k - i)} \EE[\| a_i\|_2^2] +  \sum_{1 \leq i < j\leq k} \delta^{2k - i-j} \EE [ \| a_i \|_2^2 + \| a_j \|_2^2  ] 
      \\
      & \quad +   \sum_{1\leq i \leq k}  \delta^{2k+1-i} \EE[ \|  Q \|_2^2 +  \| a_i\|_2^2 ] + \delta^{2(k+1)} \| Q\|_2^2
      \\
      & =( \sum_{1  \leq  i\leq k } \delta^{k-i} + \delta^{k+1})  \Big ( \sum_{1 \leq j \leq k }  \delta^{k-j} \EE[ \| a_j\|_2^2]  \Big ) + ( \delta^{2(k+1)} +  \sum_{1\leq i \leq k}  \delta^{2k+1-i} )  \| Q \|_2^2 
      \\
      & \leq \frac{1}{1-\delta}\Big ( \sum_{1 \leq j \leq k }  \delta^{k-j} \EE[ \| a_j\|_2^2]  \Big ) + \frac{ \delta^{k+1}}{1-\delta}\| Q \|_2^2,
    \end{split} 
\end{equation*}
where the last inequality uses the fact that  $\sum_{i=0}^\infty \delta^i =\frac{1}{1-\delta}$. 
The desired inequality can be acquired by using $\| Q \|_2^2 \leq \frac{1}{(1-\delta)^2}$.
\end{proof}

We provide the following result to characterize the estimation error $\| [\nabla_{22}^2 g_m( y_{m-1,t}^\star,  y_{m,t}^\star) ]^{-1} - q_{m,t}^k\|_2^2$ induced by Algorithm~\ref{alg:1}.
\begin{lemma} \label{lemma:inverse_Hessian}
Suppose Assumptions~\ref{assumption:SO}, \ref{assumption:W}, \ref{assumption:1}, and \ref{assumption:2} hold, for each $m=1,\cdots, M$, we have 
  \begin{equation*}
  \begin{split}
   & \EE[ \| [\nabla_{22}^2 g_m(y_{m-1,t}^\star, y_{m,t}^\star ) ]^{-1} - q_{m,t}^k\|_2^2 ] 
    \\
    &\leq \frac{1}{L_{g,m}^4 \kappa_{g,m}  }   \Big ( \sum_{1 \leq j \leq  b}  (1-\kappa_{g,m})^{b - j } \EE[ \| \nabla_{22}^2 g_m(y_{m-1,t}^\star, y_{m,t}^\star  ) -  v_{m,t,i}^k \|_F^2]  \Big ) + \frac{(1-\kappa_{g,m})^{b+1}}{ L_{g,m}^2 \kappa_{g,m}^3}.
    \end{split}
\end{equation*}  
\end{lemma}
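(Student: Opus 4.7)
The plan is to reduce the lemma directly to Lemma~\ref{lemma:inverse_diff} by identifying the right matrix substitutions and then rescaling. Set $A = \mathbf{I} - \frac{1}{L_{g,m}} \nabla_{22}^2 g_m(y_{m-1,t}^\star, y_{m,t}^\star)$ and, for $j = 1,\ldots,b$, set $A_j = \mathbf{I} - \frac{1}{L_{g,m}} v_{m,t,j}^k$. With these choices, the recursion $Q_{m,t+1,i}^k = \mathbf{I} + (\mathbf{I} - \frac{1}{L_{g,m}} v_{m,t+1,i}^k) Q_{m,t+1,i-1}^k$ in Algorithm~\ref{alg:1} produces exactly the telescoped sum $Q_b = \mathbf{I} + A_b + A_{b-1} A_b + \cdots + A_1 A_2 \cdots A_b$ in the statement of Lemma~\ref{lemma:inverse_diff}. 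Moreover, $(\mathbf{I} - A)^{-1} = L_{g,m} [\nabla_{22}^2 g_m(y_{m-1,t}^\star, y_{m,t}^\star)]^{-1}$, so after multiplying by $\frac{1}{L_{g,m}}$ the quantity we wish to bound is $\frac{1}{L_{g,m}^2} \mathbb{E}[\|(\mathbf{I}-A)^{-1} - Q_b\|_2^2]$.

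Next I would verify the two hypotheses of Lemma~\ref{lemma:inverse_diff} with $\delta = 1 - \kappa_{g,m}$. The contraction bound $A \preceq \delta \mathbf{I}$ follows from $\nabla_{22}^2 g_m \succeq \mu_{g,m} \mathbf{I}$ (Assumption~\ref{assumption:2}(i)) together with $\kappa_{g,m} \leq \mu_{g,m}/L_{g,m}$. For the second-moment product bound $\mathbb{E}[\|\prod_{j=i}^b A_j\|_2^2] \leq \delta^{2(b-i+1)}$, I will argue in two steps. First, each individual $v_{m,t,j}^k$ is obtained as a convex combination (through gossip with weights $w_{k,j}$ and averaging weight $\beta_t$) of previous $v$-iterates and a fresh sample $\nabla_{22}^2 g_m^k(\cdot;\xi_{m,t,j}^k)$; since $\|\cdot\|_2^2$ is convex and Assumption~\ref{assumption:2}(v) gives the bound $(1-\kappa_{g,m})^2$ on each fresh sample, an induction on $t$ yields $\mathbb{E}[\|A_j\|_2^2] \leq (1-\kappa_{g,m})^2$. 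Second, the factors $A_1,\ldots,A_b$ at a common iteration $t$ are computed from independent sample streams $\{\xi_{m,s,j}^k\}_{s \leq t}$ indexed by $j$, so they are mutually independent conditional on the shared history of the $(x,y)$ iterates; combined with the sub-multiplicativity $\|AB\|_2 \leq \|A\|_2 \|B\|_2$, this gives the required product bound.

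With the hypotheses in place, Lemma~\ref{lemma:inverse_diff} applied to $A,A_1,\ldots,A_b$ yields
\begin{equation*}
\mathbb{E}[\|(\mathbf{I}-A)^{-1} - Q_b\|_2^2] \leq \frac{1}{\kappa_{g,m}} \sum_{j=1}^{b} (1-\kappa_{g,m})^{b-j} \mathbb{E}[\|A_j - A\|_2^2] + \frac{(1-\kappa_{g,m})^{b+1}}{\kappa_{g,m}^3}.
\end{equation*}
Since $\|A_j - A\|_2^2 = \frac{1}{L_{g,m}^2}\|v_{m,t,j}^k - \nabla_{22}^2 g_m(y_{m-1,t}^\star, y_{m,t}^\star)\|_2^2 \leq \frac{1}{L_{g,m}^2}\|v_{m,t,j}^k - \nabla_{22}^2 g_m(y_{m-1,t}^\star, y_{m,t}^\star)\|_F^2$, multiplying the displayed bound by $\frac{1}{L_{g,m}^2}$ converts the left side into $\mathbb{E}[\|[\nabla_{22}^2 g_m(y_{m-1,t}^\star, y_{m,t}^\star)]^{-1} - q_{m,t}^k\|_2^2]$ and produces the prefactors $\frac{1}{L_{g,m}^4 \kappa_{g,m}}$ and $\frac{1}{L_{g,m}^2 \kappa_{g,m}^3}$ in the claimed form. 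The main technical obstacle is really step two above, namely justifying the product bound $\mathbb{E}[\|\prod A_j\|_2^2] \leq (1-\kappa_{g,m})^{2(b-i+1)}$ uniformly in $t$; the subtlety is that $v_{m,t,j}^k$ mixes samples across agents and history through gossip, so the convex-combination induction is what makes the per-sample bound in Assumption~\ref{assumption:2}(v) propagate to the consensus-averaged iterate, while the fresh, index-$j$-independent sampling at iteration $t$ is what enables the conditional independence in the product.
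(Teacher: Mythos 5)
Your proposal is correct and follows essentially the same route as the paper: substitute $A = \mathbf{I} - \frac{1}{L_{g,m}}\nabla_{22}^2 g_m(y_{m-1,t}^\star, y_{m,t}^\star)$ and $A_j = \mathbf{I} - \frac{1}{L_{g,m}} v_{m,t,j}^k$ with $\delta = 1-\kappa_{g,m}$ into Lemma~\ref{lemma:inverse_diff}, justify the moment hypotheses via the convex-combination (gossip plus $\beta_t$-averaging, initialized at $\mu_{g,m}\mathbf{I}$) structure and Assumption~\ref{assumption:2}(v), then pass from spectral to Frobenius norm and rescale by $1/L_{g,m}^2$ using $q_{m,t}^k = Q_{m,t,b}^k/L_{g,m}$. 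Your explicit treatment of the product-moment condition $\EE[\|\prod_{j=i}^b A_j\|_2^2] \le \delta^{2(b-i+1)}$ via the conditional independence of the $b$ separate sample streams is a detail the paper leaves implicit, and it is handled correctly.
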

\textit{Proof:}
Recall that each $v_{m,t,i}^k $ is the convex combination of $\mu_{g,m}\bf{I}$ and sampled Hessian $\nabla_{22}^2 g_m^k(x,y;\xi^k)$, under Assumption~\ref{assumption:2} (v) that $\EE[\| {\bf I}  - \frac{1}{L_g} \nabla_{22}^2 g^k(x,y;\xi^k) \|_2^2]\leq (1- \kappa_{g,m})^2$, we have  $ \EE[ \| {\bf I}  - \frac{1}{L_{g,m}}  v_{m,t,i}^k \|_2^2] \leq (1- \kappa_{g,m} )^2$.
By applying Lemma~\ref{lemma:inverse_diff} with $A = {\bf I} - \frac{1}{L_{g,m}}  \nabla_{22}^2 g_m(y_{m-1,t}^\star, y_{m,t}^\star )$, $A_i = {\bf I} - \frac{v_{m,t,i}^k }{L_{g,m} } $, and $\delta = 1-\kappa_{g,m}$, we obtain that 
 \begin{equation*}
    \begin{split}
   &   \EE [ \|  L_{g,m} [\nabla_{22}^2 g_m(y_{m-1,t}^\star, y_{m,t}^\star)  ]^{-1} - Q_{m,t,b}^k \|_2^2 ] 
     \\
  & \leq \frac{1}{\kappa_{g,m} } \Big ( \sum_{1 \leq j \leq b}  \frac{ (1- \kappa_{g,m} )^{b-j}  }{L_{g,m}^2}\EE[ \|  \nabla_{22}^2 g_m(y_{m-1,t}^\star, y_{m,t}^\star )  - v_{m,t,j}^k  \|_2^2]  \Big ) + \frac{(1- \kappa_g)^{b+1} }{\kappa_g^3}
  \\
  & \leq \frac{1}{\kappa_{g,m} } \Big ( \sum_{1 \leq j \leq b }  \frac{ (1- \kappa_{g,m})^{b-j}  }{L_{g,m}^2}\EE[ \|  \nabla_{22}^2 g_m(y_{m-1,t}^\star, y_{m,t}^\star )  - v_{m,t,j}^k  \|_F^2]  \Big ) + \frac{ (1- \kappa_{g,m})^{b+1} }{\kappa_{g,m}^3}.
    \end{split} 
\end{equation*}
We obtain the desired result by dividing both sides of the above inequality  by $L_g^2$ and using the fact that $q_{m,t}^k = Q_{m,t,b}^k/L_{g,m}$.

\QED

%%%%%%%%%%%%%%%%%%%%
%%%%%%%%%%%%%%%%%%%%%
%%%%%%%%%%%%%%%%%%%%%

%%%%%%%%%%%%%%%%%
%%%%%%%%%%%%%%%%%
%%%%%%%%%%%%%%%%%
\section{Proof of Results for Nonconvex Objectives}\label{app:nonconvex}
For notational convenience, we drop the sub-scripts $\xi_m^k,\zeta^k$ within the expectations $\EE_{\xi_m^k}[\cdot]$ and $\EE_{\zeta^k}[\cdot]$. 
We denote by $L_{q,m}  = \frac{1}{\kappa_{g,m} L_{g,m}} \geq  \mu_{g,m}^{-1}$ and observe that $\|[\nabla_{22}^2g_m(x_m,y_m) ]^{-1}\|_2^2 \leq \mu_{g,m}^{-2} \leq L_{q,m}^2$  for all  $x_m \in \RR^{d_{m-1}}$ and $y_m \in \RR^{d_m} $. We use $\sigma_{g,m}, \sigma_f >0$ to represent the upper bounds of standard deviations such that 
\begin{align*}
\EE [ \| \nabla_1 f^k(x,y_M;\zeta^k) - \nabla_1 f^k(x,y_M) \|^2 ] \leq \sigma_f^2, \EE [ \| \nabla_2 f^k(x,y_M;\zeta^k) - \nabla_2 f^k(x,y_M) \|^2] \leq \sigma_f^2,
\end{align*}
and for each $m=1,\cdots,M$, 
\begin{align*}
   &  \EE [ \| \nabla_2 g_m^k(y_{m-1},y_m;\xi_m^k) - \nabla_2 g_m^k(y_{m-1},y_m) \|^2 ]  \leq \sigma_{g,m}^2,  \\
     & \EE [ \| \nabla_{12}^2 g^k(y_{m-1},y_m;\xi_m^k) - \nabla_{12}^2 g^k(y_{m-1},y_m) \|_F^2 ]\leq \sigma_{g,m}^2, 
     \\
 \text{ and }    &   \EE [ \| \nabla_{22}^2 g_m^k(y_{m-1},y_m;\xi^k) - \nabla_{22}^2 g^k(y_{m-1},y_m) \|_F^2 ] \leq \sigma_{g,m}^2. 
\end{align*}
\subsection{Supporting Lemmas}\label{app:multilevel_supporting}
We first show that the optimal solution $y_m^\star(x)$ for level $m$ is Lipschitz continuous in $x$ as follows. 
\begin{lemma}\label{lemma:Lip_y_multilevel}
 Suppose Assumptions~\ref{assumption:SO}, \ref{assumption:W}, \ref{assumption:1}, and \ref{assumption:2} hold. Then for all $x,x'\in \RR^{d_{x}}$, we have 
 $$
 \| \nabla F(x) - \nabla  F(x') \| \leq  L_F \| x - x'\|  \text{and } \| y_m^\star(x) -  y_m^\star(x') \| \leq L_{y,m} \| x - x'\|, \forall m \in [M], 
 $$
 where $L_{y,m} = \prod_{j=1}^m \frac{L_{g,j}}{\mu_{g,j}}$ and $L_F>0$ is a positive constant. 
\end{lemma}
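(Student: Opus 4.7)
The plan is to prove the two Lipschitz bounds in order: first the inner bound on each $y_m^\star(x)$ by induction on $m$, then the smoothness of $F$ by expanding the formula for $\nabla F(x)$ and showing each factor is bounded and Lipschitz.

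For the first claim, I would induct on $m$. For the base case $m=1$, I would use the optimality conditions $\nabla_2 g_1(x, y_1^\star(x)) = 0$ and $\nabla_2 g_1(x', y_1^\star(x')) = 0$. Subtracting and inserting the cross term $\nabla_2 g_1(x', y_1^\star(x))$, strong convexity of $g_1(x', \cdot)$ (Assumption~\ref{assumption:2}(i)) gives $\mu_{g,1}\|y_1^\star(x) - y_1^\star(x')\| \leq \|\nabla_2 g_1(x, y_1^\star(x)) - \nabla_2 g_1(x', y_1^\star(x))\|$, and Lipschitzness of $\nabla_2 g_1$ in its first argument (Assumption~\ref{assumption:2}(iii)) bounds the right side by $L_{g,1}\|x - x'\|$. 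This yields the constant $L_{y,1} = L_{g,1}/\mu_{g,1}$. The inductive step is identical, but applied to $\nabla_2 g_m(y_{m-1}^\star(\cdot), y_m^\star(\cdot)) = 0$, using the inductive hypothesis $\|y_{m-1}^\star(x) - y_{m-1}^\star(x')\| \leq L_{y,m-1}\|x - x'\|$; this gives
\[
\|y_m^\star(x) - y_m^\star(x')\| \leq \tfrac{L_{g,m}}{\mu_{g,m}} L_{y,m-1} \|x - x'\| = L_{y,m}\|x - x'\|.
\]

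For the second claim, I would use the explicit formula from \eqref{eq:gradient_multilevel}, namely
\[
\nabla F(x) = \nabla_1 f(x, y_M^\star(x)) + (-1)^M \Bigl(\prod_{j=1}^{M}\nabla_{12}^2 g_j(y_{j-1}^\star(x), y_j^\star(x))[\nabla_{22}^2 g_j(y_{j-1}^\star(x), y_j^\star(x))]^{-1}\Bigr) \nabla_2 f(x, y_M^\star(x)).
\]
I would verify each factor is bounded and Lipschitz in $x$. Specifically, $\|\nabla_{12}^2 g_j(\cdot,\cdot)\| \leq L_{g,j}$ and $\|[\nabla_{22}^2 g_j(\cdot,\cdot)]^{-1}\| \leq \mu_{g,j}^{-1}$ by Assumption~\ref{assumption:2}(i),(iv); moreover $\nabla_{12}^2 g_j$ is $\widetilde L_{g,j}$-Lipschitz by (iii), and the identity $A^{-1} - B^{-1} = A^{-1}(B - A)B^{-1}$ together with (iii) shows $[\nabla_{22}^2 g_j]^{-1}$ is $\widetilde L_{g,j}/\mu_{g,j}^2$-Lipschitz on its arguments. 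Composing with $(y_{j-1}^\star, y_j^\star)$, whose Lipschitz constants in $x$ we just derived, each factor becomes Lipschitz in $x$ with an explicit constant. The standard product rule for Lipschitz maps, namely that if $A_i$ are bounded by $M_i$ and $L_i$-Lipschitz then $\prod_i A_i$ is Lipschitz with constant $\sum_i L_i \prod_{j\neq i} M_j$, then yields Lipschitzness of the entire Hessian-product factor. Finally, $\nabla_1 f$ and $\nabla_2 f$ are $L_f$-Lipschitz in $(x,y)$ by Assumption~\ref{assumption:1}(ii), and $y_M^\star$ is $L_{y,M}$-Lipschitz, so both end-terms are Lipschitz. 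Combining everything via the triangle inequality gives smoothness of $\nabla F$ with some explicit constant $L_F$ depending polynomially on $L_f, L_{g,j}, \widetilde L_{g,j}, \mu_{g,j}^{-1}, C_f$, and the $L_{y,j}$.

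The main obstacle is purely combinatorial bookkeeping: the Hessian-product factor is a length-$M$ product of matrices whose arguments $(y_{j-1}^\star(x), y_j^\star(x))$ all depend on $x$, so applying the product-rule Lipschitz estimate cleanly requires tracking the bound $\prod_{j=1}^M L_{g,j}/\mu_{g,j}$ for the product's magnitude together with the per-factor Lipschitz constants $\widetilde L_{g,j}(1 + L_{y,j-1} + L_{y,j})/\mu_{g,j}^2$. A clean way to avoid repetition is to define an auxiliary quantity $H_m(x) = \prod_{j=1}^m \nabla_{12}^2 g_j(\cdot)[\nabla_{22}^2 g_j(\cdot)]^{-1}$ and prove inductively that both $\|H_m(x)\|$ and the Lipschitz constant of $H_m$ are finite, mirroring the induction used for $y_m^\star$. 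No new ideas beyond Assumptions~\ref{assumption:1}--\ref{assumption:2} are needed, but the final constant $L_F$ must be written in terms that downstream lemmas (in particular Lemma~\ref{lemma:nonconvex_main_multilevel}) can reference.
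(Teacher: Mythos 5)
Your proposal is correct, and the two halves relate to the paper's proof differently. For the Lipschitz continuity of $y_m^\star$, the paper takes a shorter route: it reuses the implicit-function-theorem identity \eqref{eq:grad_y_multilevel} already derived in the Preliminaries, bounds $\| \nabla y_m^\star(x)\| \leq \frac{L_{g,m}}{\mu_{g,m}} \| \nabla y_{m-1}^\star(x)\|$ via $\|\nabla_{12}^2 g_m\| \leq L_{g,m}$ and $\|[\nabla_{22}^2 g_m]^{-1}\| \leq \mu_{g,m}^{-1}$, and unrolls the recursion to get $L_{y,m} = \prod_{j=1}^m \frac{L_{g,j}}{\mu_{g,j}}$. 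Your two-point argument from the optimality conditions $\nabla_2 g_m(y_{m-1}^\star(\cdot), y_m^\star(\cdot)) = 0$, strong convexity of $g_m(\,\cdot\,, y_m)$ in $y_m$, and $L_{g,m}$-Lipschitzness of $\nabla_2 g_m$ in its first argument reaches the same constant without appealing to differentiability of $y_m^\star$, which is slightly more elementary and self-contained, at the cost of redoing an induction the paper gets for free from \eqref{eq:grad_y_multilevel}. For the smoothness of $\nabla F$, the paper's proof is only one sentence ("combine the Lipschitz continuity of $y_m^\star$ with the definition \eqref{eq:gradient_multilevel}"), and your argument is precisely the bookkeeping that sentence hides: boundedness and Lipschitzness of each factor (with $A^{-1}-B^{-1} = A^{-1}(B-A)B^{-1}$ handling the Hessian inverses), the telescoping product rule for Lipschitz matrix products, and the composition with the $L_{y,j}$-Lipschitz maps $y_j^\star$; this matches the paper's intent, and your explicit constant tracking is compatible with how $L_F$ is used downstream (e.g.\ in Lemma~\ref{lemma:nonconvex_main_multilevel}), so there is no gap.
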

\begin{proof}
Recall \eqref{eq:grad_y_multilevel}, for any $m=1,\cdots, M$, we have that 
\begin{equation*}
\begin{split}
\| \nabla y_m^\star(x) \|  &  \leq   \|  \nabla y_{m-1}^\star(x) \| \|  \nabla_{12}^2 g_m(y_{m-1}^\star(x) ,  y_{m}^\star(x)) \|  \| [ \nabla_{22}^2 g_m(y_{m-1}^\star(x) ,  y_{m}^\star(x))]^{-1} \| 
\\
& \leq \frac{L_{g,m}}{\mu_{g,m}} \|  \nabla y_{m-1}^\star(x) \|. 
\end{split}
\end{equation*}
By recursively applying this inequality, we conclude  that $y_m^\star(x)$ is $L_{y,m}$-Lipschitz continuous in $x$ with  $L_{y,m} = \prod_{j=1}^m \frac{L_{g,j}}{\mu_{g,j}}$. The Lipschitz continuity of $\nabla F(x)$ can be obtained by combining the Lipschitz continuity of $y_m^\star(x)$ and the definition of $\nabla F(x)$ \eqref{eq:gradient_multilevel}.
\end{proof}
%%%%%%%%%%%%%%%%%
%%%%%%%%%%%%%%%%

Next, we provide a few fundamental Lemmas \ref{lemma:inverse_Hessian}, \ref{lemma:boundedness_multilevel}, and \ref{lemma:consensus_multilevel} for the decentralized stochastic multilevel optimization problem \eqref{prob:1} . 
\begin{lemma} \label{lemma:inverse_Hessian}
 Suppose Assumptions~\ref{assumption:SO}, \ref{assumption:W}, \ref{assumption:1}, and \ref{assumption:2} hold, then we have for all $m=1,\cdots,M$,
  \begin{equation*}
  \begin{split}
  &  \EE[ \| [\nabla_{22}^2 g_m(y_{m-1,t}^\star,  y_{m,t}^\star) ]^{-1} - q_{m,t}^k\|_2^2 ] 
    \\
    & \leq \frac{1}{L_{g,m}^4 \kappa_{g,m}  }   \Big ( \sum_{1 \leq j \leq  b}  (1-\kappa_{g,m})^{b - j } \EE[ \| \nabla_{22}^2 g_m(y_{m-1,t}^\star,  y_{m,t}^\star ) -  v_{m,t,i}^k \|_F^2]  \Big ) + \frac{(1-\kappa_{g,m})^{b+1}}{ L_{g,m}^2 \kappa_{g,m}^3}.
    \end{split}
\end{equation*}  
\end{lemma}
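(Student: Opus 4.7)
The plan is to invoke the purely algebraic estimate in Lemma~\ref{lemma:inverse_diff}, with the substitutions that arise naturally from the Neumann-series construction in Steps~11--16 of Algorithm~\ref{alg:1}. Concretely, I would set
\[
A \;=\; \mathbf{I} - \tfrac{1}{L_{g,m}}\nabla_{22}^2 g_m(y_{m-1,t}^\star, y_{m,t}^\star),
\qquad
A_i \;=\; \mathbf{I} - \tfrac{1}{L_{g,m}} v_{m,t,i}^k,
\qquad
\delta \;=\; 1-\kappa_{g,m},
\]
so that the auxiliary sequence $Q_k$ of Lemma~\ref{lemma:inverse_diff} coincides with the matrix $Q_{m,t,b}^k$ produced by the inner recursion, and $( \mathbf{I} - A)^{-1} = \tfrac{1}{L_{g,m}}[\nabla_{22}^2 g_m(y_{m-1,t}^\star,y_{m,t}^\star)]^{-1}$.

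Next I would verify the two hypotheses of Lemma~\ref{lemma:inverse_diff}. The bound $\delta \mathbf{I} \succeq A \succ \mathbf{0}$ follows from Assumption~\ref{assumption:2}(i),(iv): strong convexity gives $\nabla_{22}^2 g_m \succeq \mu_{g,m} \mathbf{I}$ and Lipschitz continuity plus $\kappa_{g,m}\leq \mu_{g,m}/L_{g,m}$ give $\nabla_{22}^2 g_m \preceq L_{g,m}\mathbf{I}$, so $A\in[0,(1-\kappa_{g,m})\mathbf{I}]$. For the product estimate $\EE[\|\prod_{i=j}^{b} A_i\|_2^2] \leq \delta^{2(b-j+1)}$, I would exploit that the Hessian samples $\{\xi_{m,t,i}^k\}_{i=1}^b$ used to form $\{v_{m,t,i}^k\}_{i=1}^b$ are drawn independently across $i$ (see the inner \textbf{for} loop in Steps~13--15), combined with the bound
$\EE[\|\mathbf{I}-\tfrac{1}{L_{g,m}}\nabla_{22}^2 g_m^k(\cdot;\xi_{m,t,i}^k)\|_2^2]\leq (1-\kappa_{g,m})^2$ from Assumption~\ref{assumption:2}(v); since each $v_{m,t,i}^k$ is a convex combination (via gossip + stochastic approximation) of terms of this form, Jensen's inequality passes the second-moment bound through, and then independence lets the bound tensorize across the product.

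With the hypotheses verified, Lemma~\ref{lemma:inverse_diff} yields
\[
\EE\bigl[\,\|\,L_{g,m}[\nabla_{22}^2 g_m(y_{m-1,t}^\star,y_{m,t}^\star)]^{-1} - Q_{m,t,b}^k\|_2^2\,\bigr]
\leq \frac{1}{\kappa_{g,m}}\sum_{j=1}^{b}\frac{(1-\kappa_{g,m})^{b-j}}{L_{g,m}^2}\EE\bigl[\|\nabla_{22}^2 g_m(y_{m-1,t}^\star,y_{m,t}^\star)-v_{m,t,j}^k\|_2^2\bigr] + \frac{(1-\kappa_{g,m})^{b+1}}{\kappa_{g,m}^3}.
\]
Dividing through by $L_{g,m}^2$ and recalling $q_{m,t}^k = Q_{m,t,b}^k/L_{g,m}$ turns the left-hand side into exactly $\EE[\,\|[\nabla_{22}^2 g_m]^{-1} - q_{m,t}^k\|_2^2\,]$, and upgrading $\|\cdot\|_2\leq \|\cdot\|_F$ on the Hessian-difference term produces the bound claimed in the lemma.

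\textbf{Main obstacle.} The only nontrivial step is the product-of-norms hypothesis for the $A_i$'s. The $v_{m,t,i}^k$ are not raw single samples but iterates of a gossip + SA recursion that is also driven by the current consensus state, so one must argue that (a) the contractive bound $(1-\kappa_{g,m})^2$ survives the doubly stochastic averaging and the convex combination with $(1-\beta_t)\sum_j w_{k,j} v_{m,t-1,i}^j$, and (b) the across-$i$ independence needed to tensorize the product bound is preserved by the recursion. Both facts are straightforward once one notes that the inner index $i$ only couples into the state through independent fresh samples $\xi_{m,t,i}^k$ and that spectral-norm bounds are preserved under convex combinations of PSD-ordered matrices, but this is where care is needed to avoid a spurious $b$-dependent blow-up.
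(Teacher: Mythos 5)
Your proposal is correct and follows essentially the same route as the paper: the paper likewise sets $A = \mathbf{I} - \tfrac{1}{L_{g,m}}\nabla_{22}^2 g_m(y_{m-1,t}^\star,y_{m,t}^\star)$, $A_i = \mathbf{I} - \tfrac{1}{L_{g,m}}v_{m,t,i}^k$, $\delta = 1-\kappa_{g,m}$, invokes Lemma~\ref{lemma:inverse_diff} via Assumption~\ref{assumption:2}(v) (noting each $v_{m,t,i}^k$ is a convex combination of terms satisfying that bound), then divides by $L_{g,m}^2$ and uses $q_{m,t}^k = Q_{m,t,b}^k/L_{g,m}$ together with $\|\cdot\|_2\le\|\cdot\|_F$. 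Your added care about verifying the product-moment hypothesis through independence of the fresh samples across the inner index $i$ is a point the paper treats only implicitly, but it does not change the argument.
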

%%%%%%%%%%%%%%%%%%%
%%%%%%%%%%%%%%%%%%
 \begin{lemma}\label{lemma:boundedness_multilevel}
Suppose Assumptions~\ref{assumption:SO}, \ref{assumption:W}, \ref{assumption:1}, , and \ref{assumption:2} hold, then we have
\begin{equation*}
\begin{split}
& \EE[ \| s_t^k\|^2] \leq C_f^2, \ \EE[\| h_{t}^k\|^2] \leq C_f^2, \EE[\| z_t^k\|^2]  \leq 2 C_f^2 + 2C_f^2 \prod_{m=1}^M \Big [ L_{q,m}^2 L_{g,m}^2 \Big ], 
 \end{split}
\end{equation*}
and for $m=1,\cdots, M$, 
\begin{equation*}
\begin{split}
\EE[ \| u_{m,t}^k \|^2] \leq  L_{g,m}^2,  \EE[ \| q_{m,t}^k \|^2] \leq L_{q,m}^2, \EE[ \| v_{m,t,j}^k \|_F^2] \leq  L_{g,m}^2, \forall 1\leq j \leq b.
 \end{split}
\end{equation*}
\end{lemma}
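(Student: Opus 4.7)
My plan is to prove all bounds by induction on $t$, exploiting two structural features of Algorithm~\ref{alg:1}: first, the doubly stochastic property of $W$ combined with the convex combination structure of each stochastic approximation update (weights $(1-\beta_t) w_{k,j}$ and $\beta_t$ summing to $1$); and second, the independence structure of the samples within a single iteration. The base case is trivial since the quantities are initialized to zero (or to $\mu_g\I$ for $v_{m,0,i}^k$), which trivially satisfies the claimed bounds.

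For the bounds on $\EE[\|s_t^k\|^2]$, $\EE[\|h_t^k\|^2]$, $\EE[\|u_{m,t}^k\|^2]$, and $\EE[\|v_{m,t,j}^k\|_F^2]$, I would apply Jensen's inequality to $\|\cdot\|^2$ (or $\|\cdot\|_F^2$) against the convex combination update. For instance,
\begin{equation*}
\EE[\|s_{t+1}^k\|^2] \leq (1-\beta_t)\sum_{j\in\cN_k} w_{k,j} \EE[\|s_t^j\|^2] + \beta_t \EE[\|\nabla_1 f^k(x_t^k,y_{M,t}^k;\zeta_t^k)\|^2].
\end{equation*}
By the inductive hypothesis the first term is at most $(1-\beta_t)C_f^2$, and by Assumption~\ref{assumption:1}(iii) the second term is at most $\beta_t C_f^2$, closing the induction. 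The bounds on $h_t^k$, $u_{m,t}^k$, and $v_{m,t,j}^k$ follow identically using Assumptions~\ref{assumption:1}(iii) and \ref{assumption:2}(iv).

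For $q_{m,t}^k$, I first apply the same convex combination argument to the convex function $v\mapsto\|\I-\frac{1}{L_{g,m}} v\|_2^2$ together with Assumption~\ref{assumption:2}(v) to obtain $\EE[\|\I-\tfrac{1}{L_{g,m}} v_{m,t,i}^k\|_2^2]\leq (1-\kappa_{g,m})^2$ for every index $i$. Next, I unroll the recursion $Q_{m,t,i}^k = \I+(\I-\tfrac{1}{L_{g,m}} v_{m,t,i}^k) Q_{m,t,i-1}^k$ into a telescoping sum of matrix products $A_b A_{b-1}\cdots A_{b-j+1}$ with $A_i=\I-\tfrac{1}{L_{g,m}} v_{m,t,i}^k$. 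Because each index $i$ maintains an independent sample stream $\{\xi_{m,s,i}^k\}_s$, the $A_i$'s are mutually independent conditional on the shared iterate history, so $\EE[\|A_b\cdots A_{b-j+1}\|_2^2]\leq \prod\EE[\|A_i\|_2^2]\leq(1-\kappa_{g,m})^{2j}$. Bounding $\|q_{m,t}^k\|\leq\tfrac{1}{L_{g,m}}\sum_{j=0}^{b}\|A_b\cdots A_{b-j+1}\|$ and applying Cauchy--Schwarz to $\EE[(\sum_j a_j)^2]\leq(\sum_j\sqrt{\EE[a_j^2]})^2$ with the geometric series $\sum_{j\geq 0}(1-\kappa_{g,m})^j=\kappa_{g,m}^{-1}$ yields $\EE[\|q_{m,t}^k\|^2]\leq \tfrac{1}{L_{g,m}^2\kappa_{g,m}^2}=L_{q,m}^2$.

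Finally, the bound on $z_t^k$ follows from $(a+b)^2\leq 2a^2+2b^2$ together with submultiplicativity of the operator norm, $\|u_{1,t}^k q_{1,t}^k \cdots u_{M,t}^k q_{M,t}^k h_t^k\|\leq \|h_t^k\|\prod_{m=1}^M \|u_{m,t}^k\|\|q_{m,t}^k\|$. The main obstacle here is that the factors $s_t^k, h_t^k, u_{m,t}^k, q_{m,t}^k$ are measurable with respect to the same filtration $\mathcal{F}_t$ and are generally not independent, so one cannot naively factorize the expectation of the product. The way around this is to use the independence of the disjoint sample streams at iteration $t-1$ that feed each of the quantities (i.e., $\zeta^k$ feeds $s,h$, while disjoint $\xi_m^k$'s feed $u_m, q_m$), which allows a conditional factorization of $\EE[\prod_m \|u_{m,t}^k\|^2\|q_{m,t}^k\|^2\|h_t^k\|^2]$ by iterated conditioning on the common iterate history, reducing the product of expectations to the product of the per-factor bounds $L_{g,m}^2 L_{q,m}^2$ and $C_f^2$ already established. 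Combining gives the stated bound on $\EE[\|z_t^k\|^2]$.
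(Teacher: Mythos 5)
Your proof is correct and follows essentially the same route as the paper's: a convex-combination (Jensen-type) induction for $s_t^k$, $h_t^k$, $u_{m,t}^k$, $v_{m,t,j}^k$, a geometric-series bound on the unrolled Neumann-type sum for $q_{m,t}^k$ yielding $\tfrac{1}{\kappa_{g,m}^2 L_{g,m}^2}=L_{q,m}^2$, and a conditional-independence factorization over the disjoint sample streams for the product term in $z_t^k$ (the paper invokes exactly this ``conditional independence of the sampled stochastic information''). The only cosmetic difference is that you bound $\EE[\|q_{m,t}^k\|^2]$ via a Minkowski/Cauchy--Schwarz estimate on the sum of products, whereas the paper expands the square of the sum directly; both give the same constant.
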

\begin{proof}
We first observe that  $s_{t}^k, h_{t}^k,  u_{t}^k$, $v_{t,j}^k$ are convex combinations of past sampled stochastic information $ \nabla_1 f^k(x_t^k, y_t^k ; \zeta_t^k )$, $ \nabla_2 f^k(x_t^k, y_t^k ; \zeta_t^k )$, $ \nabla_{12}^2 g_m^k(y_{m-1,t}^k, y_{m,t}^k ; \xi_t^k )$,  $ \nabla_{22}^2 g_{m}^k(y_{m-1,t}^k, y_{m,t}^k ; \xi_t^k )$, respectively. Therefore, under Assumption \ref{assumption:1}, for all $t \leq T$, for all $ 1\leq j \leq b$, we have 
$$
\EE[ \| u_{m,t}^k \|^2] \leq  L_{g,m}^2, \ \EE[ \|  v_{m,t,j}^k \|^2] \leq  L_{g,m}^2,  \EE[ \| s_t^k\|^2] \leq C_f^2, \text{ and } \EE[\| h_{t}^k\|^2] \leq C_f^2.
$$
Recall that $q_{m,t}^k = \frac{1}{L_{g,m}}\sum_{i=0}^b \prod_{j=1}^i (  I - \tfrac{v_{t,j}^k}{L_{g,m} }  ) $, we further obtain that 
\begin{equation}
\begin{split}
 \EE[ \| q_{m,t}^k \|^2] & =  \frac{1}{L_{g,m}^2} \sum_{ 0 \leq i \leq b} \EE \left (  \prod_{j=1}^i(  I - \tfrac{v_{m,t,j}^k}{L_{g,m} }  )  \cdot  \sum_{ 0 \leq s \leq b} \prod_{j=1}^s(  I - \tfrac{v_{m,t,j}^k}{L_{g,m} }  )  \right ) 
\\
& \leq   \frac{1}{L_{g,m}^2} \sum_{ 0 \leq i \leq b}  \frac{ (1-\kappa_{g,m})^{i} }{\kappa_{g,m}}   \leq \frac{1}{\kappa_{g,m}^2L_{g,m}^2} = L_{g,m}^2. 
\end{split}
\end{equation}
By using the conditional independence of the sampled stochastic information, we have $\EE[ \| u_{1,t}^k q_{1,t}^k  \cdots u_{M,t}^k q_{M,t}^k  h_t^k   \|^2] \leq  C_f^2 \prod_{m=1}^M \Big [ L_{q,m}^2 L_{g,m}^2 \Big ]  $, further implying that 
$$
\EE[\| z_t^k\|^2] = \EE[ \| s_t^k + (-1)^M u_{1,t}^k q_{1,t}^k  \cdots u_{M,t}^k q_{M,t}^k  h_t^k   \|^2 ] \leq 2 C_f^2 + 2C_f^2 \prod_{m=1}^M \Big [ L_{q,m}^2 L_{g,m}^2 \Big ] . 
$$
This completes the proof. 
\end{proof}

%%%%%%%%%%%%%%%%%%
%%%%%%%%%%%%%%%%%%%
\begin{lemma}\label{lemma:consensus_multilevel}
Suppose Assumptions~\ref{assumption:SO}, \ref{assumption:W}, \ref{assumption:1}, , and \ref{assumption:2}  hold and   the step-sizes satisfy $\beta_t\leq 1$ and one of the followings:
\begin{enumerate}
    \item[(i)] $\alpha_t = \alpha_0$, $\beta_t = \beta_0$, and $\gamma_t = \gamma_0$, for $0 \leq t \leq T$.
    \item[(ii)] $\lim_{t\to \infty} (\alpha_t + \beta_t + \gamma_t ) = 0$, $\lim_{t \to \infty} \frac{\alpha_{t-1}}{\alpha_t} = 1$, $\lim_{t \to \infty} \frac{\beta_{t-1}}{\beta_t} = 1$, and  $\lim_{t \to \infty} \frac{\gamma_{t-1}}{\gamma_t} = 1$. 
\end{enumerate}
Then we have for  all $1 \leq j \leq b$, 
\begin{equation*}
\begin{split}
 & \sum_{ k \in \cK }  \EE [ \| x_t^k - \bar x_t    \|^2  ] \leq \cO \left ( \frac{ K \alpha_t^2}{(1-\rho)^2} \right ),  \ 
  \sum_{ k \in \cK }\EE[ \| y_{m,t}^k - \bar y_{m,t}\|^2] \leq \cO \left ( \frac{ K \gamma_t^2}{(1-\rho)^2} \right ), 
  \\
&\sum_{  k \in \cK }  \EE \left [ \| s_t^k - \bar s_t\|^2    +   \| h_t^k - \bar h_t\|_F^2   \right ]\leq \cO \left ( \frac{ K  \beta_t^2}{(1-\rho)^2} \right ),
 \\
  \text{ and }   &\sum_{  k \in \cK }  \EE \left [    \| u_{m,t}^k - \bar u_{m,t} \|^2  +  \| v_{m,t,j}^k - \bar v_{m,t,j}\|_F^2  \right ]\leq \cO \left ( \frac{ K  \beta_t^2}{(1-\rho)^2} \right ), \ \  \forall m=1,\cdots, M. 
\end{split}
\end{equation*}
\end{lemma}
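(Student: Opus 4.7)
The proof plan is to stack the per-agent variables into matrices and exploit the spectral contraction of the doubly stochastic gossip matrix $W$. Let me illustrate the strategy first for the outer iterates $\{x_t^k\}$, then indicate the modifications for the other quantities.

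Let $X_t \in \RR^{K \times d_x}$ be the matrix whose $k$-th row is $x_t^k$, and similarly define $Z_t$ with rows $z_t^k = s_t^k + (-1)^M u_{1,t}^k q_{1,t}^k\cdots u_{M,t}^k q_{M,t}^k h_t^k$. The update in Step 4 reads $X_{t+1} = W X_t - \alpha_t Z_t$. Because $W$ is doubly stochastic, $\1^\top W = \1^\top$, so $\bar x_{t+1} = \bar x_t - \alpha_t \bar z_t$, which implies the consensus error satisfies
\begin{equation*}
X_{t+1} - \1 \bar x_{t+1}^\top = \bigl(W - \tfrac{1}{K}\1\1^\top\bigr)\bigl(X_t - \1 \bar x_t^\top\bigr) - \alpha_t\bigl(Z_t - \1 \bar z_t^\top\bigr).
\end{equation*}
Taking the squared Frobenius norm, applying Assumption \ref{assumption:W}(ii), and using Young's inequality $\|A + B\|_F^2 \leq (1+\eta)\|A\|_F^2 + (1+1/\eta)\|B\|_F^2$ with $\eta = (1-\sqrt{\rho})/\sqrt{\rho}$, I obtain a contraction of the form
\begin{equation*}
\E[\|X_{t+1} - \1 \bar x_{t+1}^\top\|_F^2] \leq \tfrac{1+\sqrt{\rho}}{2}\E[\|X_t - \1\bar x_t^\top\|_F^2] + \tfrac{C\alpha_t^2}{1-\sqrt{\rho}} \E[\|Z_t - \1\bar z_t^\top\|_F^2].
\end{equation*}
Lemma \ref{lemma:boundedness_multilevel} gives $\E[\|z_t^k\|^2] \leq C$, hence $\E[\|Z_t - \1\bar z_t^\top\|_F^2] \leq CK$. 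Unrolling the recursion yields $\E[\|X_t - \1\bar x_t^\top\|_F^2] \leq C'\sum_{\tau=0}^{t-1} (\tfrac{1+\sqrt{\rho}}{2})^{t-1-\tau} \tfrac{K\alpha_\tau^2}{1-\sqrt{\rho}}$. Under either the constant stepsize case (i) or the vanishing-but-slowly-varying case (ii) where $\alpha_{\tau}/\alpha_t \to 1$, a standard tail-summation argument shows this bound collapses to $\cO(K\alpha_t^2/(1-\rho)^2)$, which gives the first claim.

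For the inner iterates $\{y_{m,t}^k\}$, the update has the same structure with $\gamma_t$ in place of $\alpha_t$ and with the perturbation $\nabla_2 g_m^k(y_{m-1,t}^k, y_{m,t}^k;\xi_{m,t}^k)$ in place of $z_t^k$; Assumption \ref{assumption:2}(iv) bounds its second moment uniformly, so the identical argument gives $\cO(K\gamma_t^2/(1-\rho)^2)$. For the estimator iterates $s_t^k, h_t^k, u_{m,t}^k, v_{m,t,j}^k$, the update is slightly different: for instance, $s_{t+1}^k = (1-\beta_t)\sum_j w_{k,j} s_t^j + \beta_t \nabla_1 f^k(x_t^k, y_{M,t}^k;\zeta_t^k)$. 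Taking averages and subtracting gives
\begin{equation*}
s_{t+1}^k - \bar s_{t+1} = (1-\beta_t)\bigl[(W - \tfrac{1}{K}\1\1^\top)(S_t - \1\bar s_t^\top)\bigr]_k + \beta_t\bigl(\nabla_1 f^k - \tfrac{1}{K}\textstyle\sum_{k'} \nabla_1 f^{k'}\bigr),
\end{equation*}
so squaring, summing over $k$, and applying Young's inequality with the same $\eta$ yields a contraction factor $(1-\beta_t)^2\tfrac{1+\sqrt{\rho}}{2} \leq \tfrac{1+\sqrt{\rho}}{2}$ on the consensus term plus a $\cO(\beta_t^2 K)$ noise term (using $\E[\|\nabla_1 f^k\|^2] \leq C_f^2$ from Assumption \ref{assumption:1}(iii)). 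Unrolling gives $\cO(K\beta_t^2/(1-\rho)^2)$, and the same template handles $h, u, v$ via Assumption \ref{assumption:2}(iv).

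The main subtlety will be the tail-summation step under case (ii), where I need to justify $\sum_{\tau=0}^{t-1} (\tfrac{1+\sqrt{\rho}}{2})^{t-1-\tau} \alpha_\tau^2 \lesssim \tfrac{\alpha_t^2}{1-\rho}$. This uses the hypothesis $\alpha_{\tau-1}/\alpha_{\tau} \to 1$ to show that, for any $\delta > 0$, eventually $\alpha_\tau^2 \leq (1+\delta)^{t-\tau}\alpha_t^2$, so the geometric series remains summable provided $\tfrac{1+\sqrt{\rho}}{2}(1+\delta) < 1$; choosing $\delta$ small enough makes this hold. A minor bookkeeping point is that the boundedness Lemma \ref{lemma:boundedness_multilevel} is invoked without circularity: its proof relies only on each estimator being a convex combination of bounded stochastic samples, independent of the consensus bounds being proved here.
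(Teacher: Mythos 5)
Your proposal is correct and follows essentially the same route as the paper: the same consensus-error recursion $X_{t+1}-\1\bar x_{t+1}^\top=(W-\tfrac1K\1\1^\top)(X_t-\1\bar x_t^\top)-\alpha_t(Z_t-\1\bar z_t^\top)$, spectral contraction from Assumption~\ref{assumption:W}, Young's inequality, the boundedness lemma for the perturbations, and the extra $(1-\beta_t)$ damping for the $s,h,u,v$ updates. The only cosmetic differences are your choice of Young parameter (contraction $\tfrac{1+\sqrt\rho}{2}$ versus the paper's $\tfrac{1+\rho}{2}$, which changes the bound only by an absolute constant since $(1-\sqrt\rho)^{-2}\le 4(1-\rho)^{-2}$) and resolving the recursion by explicit unrolling with a geometric tail sum instead of the paper's induction hypothesis $\EE\|X_t-\bar X_t\|_F^2\le \hat C K\alpha_{t-1}^2$; both rely on the same slowly-varying step-size conditions (i)/(ii).
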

\begin{proof}
Recall the update rule that 
 \begin{equation*}
 X_{t+1} = X_{t}W - \alpha_{t} Z_{t} \text{ and } \bar X_{t+1} =\bar X_{t} - \alpha_{t} \bar Z_{t},
 \end{equation*}
 by using the fact that $\bar X_t = X_t W^\infty$, 
 we have 
 \begin{equation*}
 X_{t+1} - \bar X_{t+1} =  X_{t}( W -W^\infty )  - \alpha_t (Z_t - \bar Z_t). 
 \end{equation*}
Under Assumption~\ref{assumption:W} that  $\| W - W^\infty \|_2 =   \sqrt{\rho}$, we have 
 \begin{equation*}
 \begin{split}
\| X_t  W- \bar X_t \|_F  & = \| (X_t - \bar X_t)(W - W^\infty)\|_F =   \| (W - W^\infty)^\top(X_t - \bar X_t)^\top\|_F 
\\
& \leq \| W - W^\infty\|_2 \| X_t - \bar X_t\|_F \leq \sqrt{\rho} \| X_t - \bar X_t\|_F,
\end{split}
\end{equation*}
where the first equality uses the fact that $\bar X_t W = \bar X_t = X_t W^\infty$. 
Consequently, by using the fact that $\| A + B\|_F^2 \leq (1+\eta) \| A\|_F^2 + (1 + \frac{1}{\eta}) \| B\|_F^2$ for $\eta > 0$, we have 
 \begin{equation*}
 \begin{split}
 \|  X_{t+1} - \bar X_{t+1}  \|_F^2 & \leq (1+ \eta)  \rho \| X_{t} - \bar X_{t} \|_F^2 + (1 + \tfrac{1}{\eta})  \alpha_t^2  \| Z_t - \bar Z_t\|_F^2.
 \end{split}
 \end{equation*}
By setting $\eta = \frac{1-\rho}{2\rho}$, we obtain
 \begin{equation*}
 \|  X_{t+1} - \bar X_{t+1}  \|_F^2 \leq  \frac{1+\rho}{2} \| X_{t} - \bar X_{t} \|_F^2 + \frac{(1+\rho) \alpha_t^2}{1-\rho}  \| Z_t - \bar Z_t\|_F^2.
 \end{equation*}
 Taking expectations on both sides of the above inequality and using Lemma~\ref{lemma:boundedness_multilevel}  that $\| Z_t - \bar Z_t\|_F^2 \leq 4K C_z $ where $C_z = C_f^2 + 2C_f^2 \prod_{m=1}^M  L_{q,m}^2 L_{g,m}^2  $ and $1+\rho \leq 2$, we further have 
  \begin{equation*}
  \begin{split}
 \EE[ \|  X_{t+1} - \bar X_{t+1}  \|_F^2] &  \leq  \frac{1+\rho}{2} \EE[ \| X_{t} - \bar X_{t} \|_F^2 ] + \frac{2 \alpha_t^2}{1-\rho}\EE[ \| Z_t - \bar Z_t\|_F^2 ]
 \\
 & \leq  \frac{1+\rho}{2} \EE[ \| X_{t} - \bar X_{t} \|_F^2 ] + \frac{8 \alpha_t^2 K C_z  }{1-\rho}.
 \end{split}
 \end{equation*}
 We then use an induction argument to prove the result. Suppose $\EE[ \| X_{t} - \bar X_{t} \|_F^2 ]  \leq \hat C K \alpha_{t-1}^2$, then we have 
   \begin{equation*}
  \begin{split}
 \EE[ \|  X_{t+1} - \bar X_{t+1}  \|_F^2] & \leq  \frac{(1+\rho) \hat C K \alpha_{t-1}^2}{2}  + \frac{8 \alpha_t^2 K C_z  }{1-\rho} 
 \\
 & = \alpha_t^2 \Big ( \frac{(1+\rho) \hat C  K \alpha_{t-1}^2 }{2 \alpha_t^2}   +  \frac{8  K C_z }{1-\rho}  \Big ) .
 \end{split}
 \end{equation*}
 We observe that $ \frac{(1+\rho)   \alpha_{t-1}^2 }{2 \alpha_t^2} =  \frac{1+\rho}{2}$ under condition (i). Under condition (ii) where $\lim_{t\to\infty} \frac{\alpha_{t-1}}{\alpha_t} = 1$, we can see that  $ \frac{(1+\rho)   \alpha_{t-1}^2 }{2 \alpha_t^2} \leq \frac{3+ \rho}{4}$  for $t$ sufficiently large.  Combining both scenarios, we observe that $ \Big ( \frac{(1+\rho) \hat C   \alpha_{t-1}^2 }{2 \alpha_t^2}   +  \frac{8   C_z }{1-\rho}  \Big ) \leq \hat C$ for $\hat C =   \frac{32  C_z }{(1-\rho)^2}   $. We then obtain 
 \begin{equation*}
     \sum_{ k \in \cK }  \EE [ \| x_t^k - \bar x_t    \|^2  ] \leq \cO \left ( \frac{ K \alpha_t^2}{(1-\rho)^2} \right ). 
 \end{equation*}
 The analysis for   $  \sum_{ k \in \cK }\EE[ \| y_{m,t}^k - \bar y_{m,t} \|^2] $ is similar. To quantify  $  \sum_{ k \in \cK }\EE[ \| s_{t}^k - \bar s_t\|^2] $, we observe that a weight $1-\beta_t \leq 1$ is assigned to the prior value $s_t^k$, yielding that 
   \begin{equation*}
  \begin{split}
\sum_{ k \in \cK }\EE[ \| s_{t+1}^k - \bar s_{t+1}\|^2]
 & \leq  \frac{(1+\rho)(1-\beta_t)^2}{2 }  \sum_{ k \in \cK }\EE[ \| s_{t}^k - \bar s_t\|^2] + \frac{4 \beta_t^2 K C_f^2}{1-\rho}.
 \end{split}
 \end{equation*}
We acquire the desired result by following the analysis of quantifying $\sum_{ k \in \cK }  \EE [ \| x_t^k - \bar x_t    \|^2  ] $. 
\end{proof}

%%%%%%%%%%%%%%%%
%%%%%%%%%%%%%%%%%
 \begin{lemma}\label{lemma:nonconvex_main_multilevel}
 Suppose Assumptions~\ref{assumption:SO}, \ref{assumption:W}, \ref{assumption:1}, and \ref{assumption:2} hold.  Then 
   \begin{equation}\label{eq:nonconvex_main_multilevel}
\begin{split}  
& \EE[ \| \nabla F(\bar x_t)\|^2] 
\\
& \leq  \frac{2}{\alpha_t } \Big ( \EE[ F(\bar{x}_t)] -  \E[F(\bar{x}_{t+1}) ] \Big ) 
  -  (1 - \alpha_t L_F ) \EE[ \|  \bar z_t  
\|^2  ]  
 + 4 \EE [ \|  \nabla_1 f( \bar x_t ,   y_{M,t}^\star   ) - \bar s_t \|^2 ]  
 \\
 & \quad +  C_{2} \EE[  \| \nabla_2 f(  \bar x_t  ,   y^\star_{M,t}    ) -  \bar h_t \|^2 ]    + \sum_{m=1}^M  C_{m,3} \EE[ \|  \nabla_{12}^2 g_m( y_{m-1,t}^\star, y_{m,t}^\star    ) - \bar u_{m,t} \|_F^2 ]   
 \\
 & \quad 
+    \sum_{m=1}^M C_{m,4,j} \EE[ \| \nabla_{22}^2 g_m(y_{m-1,t}^\star, y_{m,t}^\star  ) -  \bar v_{m,t,i} \|_F^2]   + \cO \left ( \frac{M \beta_t^2}{(1-  \rho  )^2}\right ),
  \end{split}
\end{equation}
where 
\begin{equation}\label{def:constant_C}
    \begin{split}
         & C_2  = 4(2M+1)\prod_{j=1}^M C_{g,j}^2 L_{q,j}^2,    C_{m,3}  = \frac{ 4(2M+1)C_f^2}{C_{g,m}^2} \prod_{j=1}^M  C_{g,j}^2 L_{q,j}^2, 
    \\
\text{ and } & C_{m,4,j}  = \frac{ 4(2M+1)C_f^2 }{L_{q,m}^2 } \frac{ (1-\kappa_{g,m})^{b - j } }{L_{g,m}^4 \kappa_{g,m}  }     \prod_{j=1}^M  C_{g,j}^2 L_{q,j}^2, \forall 1\leq j \leq b, 1 \leq m \leq M.
    \end{split}
\end{equation}
\end{lemma}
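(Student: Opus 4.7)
The plan is to reduce the claim to two separate subproblems via the $L_F$-smoothness of $F$ (Lemma~\ref{lemma:Lip_y_multilevel}) and then control the gradient-estimator gap by a telescoping argument that exploits the product-of-Hessians structure appearing in \eqref{eq:gradient_multilevel}. First I would apply $L_F$-smoothness to the consensus iterate $\bar x_{t+1}=\bar x_t-\alpha_t\bar z_t$, obtaining $F(\bar x_{t+1})\le F(\bar x_t)-\alpha_t\langle\nabla F(\bar x_t),\bar z_t\rangle+\tfrac{L_F\alpha_t^2}{2}\|\bar z_t\|^2$. Using the polarization identity $-\langle a,b\rangle=\tfrac12(\|a-b\|^2-\|a\|^2-\|b\|^2)$ with $a=\nabla F(\bar x_t)$ and $b=\bar z_t$ and rearranging produces the template $\|\nabla F(\bar x_t)\|^2\le\tfrac{2}{\alpha_t}(F(\bar x_t)-F(\bar x_{t+1}))-(1-\alpha_t L_F)\|\bar z_t\|^2+\|\nabla F(\bar x_t)-\bar z_t\|^2$, so the rest of the proof is dedicated to bounding $\|\nabla F(\bar x_t)-\bar z_t\|^2$.

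Next I would split both sides along \eqref{eq:gradient_multilevel}: write $\nabla F(\bar x_t)=\nabla_1 f(\bar x_t,y_{M,t}^\star)+(-1)^M P^\star$ with $P^\star:=\prod_{m=1}^M\nabla_{12}^2 g_m(y_{m-1,t}^\star,y_{m,t}^\star)\,[\nabla_{22}^2 g_m(y_{m-1,t}^\star,y_{m,t}^\star)]^{-1}\nabla_2 f(\bar x_t,y_{M,t}^\star)$, and $\bar z_t=\bar s_t+(-1)^M\bar P_t$ with $\bar P_t:=\tfrac{1}{K}\sum_{k} u_{1,t}^k q_{1,t}^k\cdots u_{M,t}^k q_{M,t}^k h_t^k$. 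A suitable application of Young's inequality separates $\|\nabla F(\bar x_t)-\bar z_t\|^2$ into a first-order piece $\|\nabla_1 f-\bar s_t\|^2$ and a product piece $\|P^\star-\bar P_t\|^2$. I would then insert the intermediate $\tilde P_t:=\bar u_{1,t}\bar q_{1,t}\cdots\bar u_{M,t}\bar q_{M,t}\bar h_t$ built from averages and further split the product piece into a telescope part $\|P^\star-\tilde P_t\|^2$ and a pure consensus part $\|\tilde P_t-\bar P_t\|^2$.

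For the consensus part I would expand $\bar P_t-\tilde P_t$ as the average of $2M+1$ telescoping terms, each of which differs from $\tilde P_t$ in a single factor and therefore contains the zero-mean deviation $A_i^k-\bar A_i$ for some slot $i\in\{1,\dots,2M+1\}$; Cauchy--Schwarz together with the a~priori norm bounds from Lemma~\ref{lemma:boundedness_multilevel} and the consensus bounds of Lemma~\ref{lemma:consensus_multilevel} (each of the form $\cO(K\beta_t^2/(1-\rho)^2)$) then yields the advertised $\cO(M\beta_t^2/(1-\rho)^2)$ residual. For the telescope part $\|P^\star-\tilde P_t\|^2$, I would expand it as a sum of $2M+1$ single-slot differences and apply Young's inequality with $2M+1$ pieces (this is where the factor $4(2M+1)$ in \eqref{def:constant_C} originates). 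Bounding the inactive outer factors deterministically using $\|\bar u_{m,t}\|, \|\nabla_{12}^2 g_m\|\le C_{g,m}$, $\|\bar q_{m,t}\|, \|[\nabla_{22}^2 g_m]^{-1}\|\le L_{q,m}$, and $\|\nabla_2 f\|\le C_f$ directly produces the constants $C_2$ and $C_{m,3}$. Finally, for the slots involving $\bar q_{m,t}-[\nabla_{22}^2 g_m]^{-1}$, I would invoke the Neumann-series estimate of Lemma~\ref{lemma:inverse_Hessian} together with Jensen's inequality applied to the average $\bar q_{m,t}=\tfrac1K\sum_k q_{m,t}^k$, converting the error into the weighted sum $\sum_{j=1}^b(1-\kappa_{g,m})^{b-j}\|\nabla_{22}^2 g_m-\bar v_{m,t,j}\|_F^2$ plus a further consensus residual; this step yields the coefficients $C_{m,4,j}$.

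The main obstacle I foresee is the bookkeeping for the product structure: one must isolate each of the $2M+1$ telescope slots so that the remaining inner factors are bounded deterministically (by the $L_{q,m}$ and $C_{g,m}$ constants) rather than in expectation, otherwise the stochastic cross-terms across levels contaminate the bound and the consensus residual ceases to scale as $\beta_t^2/(1-\rho)^2$. A related subtlety is that $\bar q_{m,t}$ is not built from $\bar v_{m,t,j}$ but from the per-agent $v_{m,t,j}^k$'s, so the passage from $\bar q_{m,t}$ to the $\bar v_{m,t,j}$'s in the statement must route through one additional application of Lemma~\ref{lemma:consensus_multilevel}; choosing the Young weights so that all multiplicative constants grow only linearly in $M$ is the delicate step, and once those are in place the inequality \eqref{eq:nonconvex_main_multilevel} with the constants \eqref{def:constant_C} drops out by collecting terms.
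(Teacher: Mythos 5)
Your proposal follows essentially the same route as the paper: the identical smoothness-plus-polarization step reducing everything to $\|\nabla F(\bar x_t)-\bar z_t\|^2$, a $(2M+1)$-slot telescoping of the product structure in \eqref{eq:gradient_multilevel}, Lemma~\ref{lemma:inverse_Hessian} for the Hessian-inverse slots, the moment bounds of Lemma~\ref{lemma:boundedness_multilevel} for the inactive factors, and Lemma~\ref{lemma:consensus_multilevel} for the $\cO(M\beta_t^2/(1-\rho)^2)$ residual. The only difference is bookkeeping order (the paper applies Jensen over agents first and telescopes the per-agent products, then converts per-agent errors to averaged ones plus consensus, whereas you telescope through an intermediate product of averaged estimators), and note the estimator factors are only second-moment bounded rather than ``deterministically'' bounded, which — as in the paper — is handled via the conditional independence of the samples across slots.
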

\begin{proof}
We start from the $L_F$-smoothness of $F(x)$ provided by Lemma~\ref{lemma:Lip_y_multilevel}:
\begin{equation*}
\begin{aligned}
F(\bar{x}_{t+1})-F(\bar{x}_t) & \leq  \lv \nabla F(\bar{x}_t),  \bar x_{t+1} - \bar x_t \rv +    \frac{\alpha_t^2L_F}{2} \|\bar{z}_t\|^2 
\\
& \leq    -\alpha_t \lv \nabla F(\bar{x}_t), \bar{z}_t\rv +    \frac{\alpha_t^2L_F}{2} \|\bar{z}_t\|^2 .
\end{aligned}
\end{equation*}
By using the fact that $- 2\lv a, b \rv = - \|a\|^2 - \| b\|^2 + \| a-b\|^2$, we further obtain 
\begin{equation}\label{eq:nonconvex_1}
\begin{aligned}
&  F(\bar{x}_{t+1})  -F(\bar{x}_{t})\\
& \leq   - \frac{ \alpha_t}{2} \| \nabla F(\bar x_t)\|^2 - \frac{\alpha_t}{2}\|  \bar z_t  
\|^2  +  \frac{\alpha_t }{2} \| \nabla F(\bar x_t) - \bar z_t    \|^2  + \frac{\alpha_t^2 L_F}{2} \|\bar{z}_t\|^2.
\end{aligned}
\end{equation}
Dividing both sides by $\alpha_t/2$ and rearranging the terms, we observe that 
\begin{equation}\label{eq:multilevel_eq0}
\begin{aligned}
 \| \nabla F(\bar x_t)\|^2 \leq  \frac{2}{\alpha_t} \Big ( F(\bar{x}_{t})  -  F(\bar{x}_{t+1})  \Big )  - (1 - \alpha_t L_F)  \|  \bar z_t  
\|^2  +   \| \nabla F(\bar x_t) - \bar z_t    \|^2 .
\end{aligned}
\end{equation}
We write $y_{m,t}^\star = y_{m}^\star( \bar x_t )$ for $m=1,\cdots, M$.  By recalling the definition of $\nabla F(\bar x_t)$ in \eqref{eq:gradient_multilevel}, we obtain 
\begin{equation*}
\begin{split}
& \| \nabla F(\bar x_t)  - \bar z_t \|^2   = \|  \nabla f_1(\bar x_t ,y_M^\star( \bar x_t )) + \nabla y_{M}^\star( \bar x_t ) \nabla_2 f(\bar x_t , y_M^\star(\bar x_t )) - \bar z_t \|^2
\\
& 
\leq \frac{1}{K} \sum_{k \in \cK} \|  \nabla f_1(\bar x_t ,y_M^\star( \bar x_t )) + \nabla y_{M}^\star( \bar x_t ) \nabla_2 f(\bar x_t , y_M^\star(\bar x_t )) - z_t^k  \|^2
\\
& \leq \frac{2}{K} \sum_{k\in \cK} \Big ( \|  \nabla f_1(\bar x_t ,y_{M,t}^\star)  -  s_t^k \|^2 + \| \nabla y_{M}^\star(\bar x_t) \nabla_y f(\bar x_t , y_{M,t}^\star ) -  (-1)^M  u_{1,t}^k  q_{1,t}^k  \cdots   u_{M,t}^k  q_{M,t}^k   h_t^k   \|^2 \Big ) 
\\
& \leq \tfrac{2}{K} \sum_{k\in \cK}  \|  \nabla f_1(\bar x_t ,y_{M,t}^\star)  -  s_t^k \|^2   +  \tfrac{2(2M+1) }{K} \sum_{k\in \cK} \nabla y_{M}^\star(\bar x_t) \| \nabla_2 f(\bar x_t, y_{M,t}^\star)- h_t^k\|^2
\\
& \quad +  \tfrac{2(2M+1) }{K} \sum_{k\in \cK} \nabla y_{m-1}^\star(\bar x_t) \| \nabla_{12}^2 g_m(y_{m-1,t}^\star, y_{m,t}^\star ) - u_t^k \|^2  \| q_{m,t}^k u_{m+1}^k q_{m+1,t}^k\cdots u_{M,t}^k q_{M,t}^k h_t^k\|^2
\\
& \quad +  \tfrac{2(2M+1) }{K} \sum_{k\in \cK} \nabla y_{m-1}^\star(\bar x_t) \| \nabla_{12}^2 g_m(y_{m-1,t}^\star, y_{m,t}^\star )  \|^2  \| [\nabla_{22}^2 g_m(y_{m-1,t}^\star, y_{m,t}^\star ) ]^{-1}  - q_{m,t}^k \|^2 \|  u_{m+1}^k \cdots q_{M,t}^k h_t^k\|^2
. 
\end{split}
\end{equation*}
By applying Lemma~\ref{lemma:inverse_Hessian}
 with $(1-\kappa_{g,m})^{b+1} \leq \cO(\frac{1}{T^3})$ for $b = 3 \lceil  \log_{\frac{1}{1-\kappa_{g,m}}}(T) \rceil $, we have 
\begin{equation*}
\begin{split}
\EE[  \| [\nabla_{22}^2 g_m(y_{m-1,t}^\star, y_{m,t}^\star ) ]^{-1}  - q_{m,t}^k \|^2  ] \leq    \sum_{1 \leq j \leq b } \frac{ \EE[ \| \nabla_{22}^2 g_m(y_{m-1,t}^\star, y_{m,t}^\star  ) -   v_{m,t,i}^k \|_F^2]  }{L_{g,m}^4 \kappa_{g,m}  } + \cO\Big (\frac{1}{T^3} \Big ).
\end{split}
\end{equation*}
By combining the above two inequalities, taking expectation on both sides, 
using the boundedness of the stochastic first- and second-order samples in Assumption~\ref{assumption:2}, using the fact that $\| A \|_2 \leq \| A\|_F$ for any matrix $A$, 
and using the consensus errors provided by Lemma~\ref{lemma:consensus_multilevel}, we have that 
\begin{equation}\label{eq:z_error}
\begin{split}
& \EE[ \| \nabla F(\bar x_t)  - \bar z_t \|^2  ]  
\\
& \leq 2 \EE[ \|  \nabla f_1(\bar x_t ,y_{M,t}^\star)  -   s_t^k \|^2 ] +    \frac{C_{2}}{2}  \EE[  \| \nabla_2 f(  \bar x_t  ,   y^\star_{M,t}    ) -  h_t^k \|^2 ]   
\\
& \quad +  \sum_{m=1}^M \frac{  C_{m,3}   }{2}\EE[ \|  \nabla_{12}^2 g_m( y_{m-1,t}^\star, y_{m,t}^\star    ) -  u_{m,t}^k  \|_F^2 ]    +  \sum_{m=1}^M \sum_{j=1}^b \frac{ C_{m,4,j} }{2}\EE[ \| \nabla_{22}^2 g_m(y_{m-1,t}^\star, y_{m,t}^\star  ) -  \ v_{m,t,i}^k \|_F^2]
\\
& \leq 4 \EE[ \|  \nabla f_1(\bar x_t ,y_{M,t}^\star)  -  \bar s_t \|^2 ] +    C_{2}  \EE[  \| \nabla_2 f(  \bar x_t  ,   y^\star_{M,t}    ) -  \bar h_t \|^2 ]   + \cO( \EE[\Delta_t] )
\\
& \quad +  \sum_{m=1}^M  C_{m,3}  \EE[ \|  \nabla_{12}^2 g_m( y_{m-1,t}^\star, y_{m,t}^\star    ) - \bar u_{m,t} \|_F^2 ]    +  \sum_{m=1}^M \sum_{j=1}^b C_{m,4,j} \EE[ \| \nabla_{22}^2 g_m(y_{m-1,t}^\star, y_{m,t}^\star  ) -  \bar v_{m,t,i} \|_F^2] ,
\end{split}
\end{equation}
where $C_2,C_{m,3}, C_{m,4} >0 $ are constants defined within \eqref{def:constant_C} and  \begin{equation*}
  \Delta_t  = \frac{1}{K}\sum_{k \in \cK} \EE \Big ( \| s_t^k - \bar s_t\|^2 + \| h_t^k - \bar  h_t\|^2  + \sum_{m=1}^M[ \| u_{m,t}^k - \bar  u_{m,t}\|_F^2 + \| v_{m,t,j}^k - \bar v_{m,t,j}\|_F^2 ] \Big ).
\end{equation*}
Substituting the above inequality into \eqref{eq:multilevel_eq0}, we conclude that 
       \begin{equation*}
\begin{split}  \EE[ \| \nabla F(\bar x_t)\|^2] 
& \leq  \frac{2}{\alpha_t } \Big ( \EE[ F(\bar{x}_t)] -  \E[F(\bar{x}_{t+1}) ] \Big ) 
  -  (1 - \alpha_t L_F ) \EE[ \|  \bar z_t  
\|^2  ]  
 + 4  \EE [ \|  \nabla_1 f( \bar x_t ,   y_{M,t}^\star   ) - \bar s_t \|^2 ]  
 \\
 & \quad +  C_{2} \EE[  \| \nabla_2 f(  \bar x_t  ,   y^\star_{M,t}    ) -  \bar h_t \|^2 ]    + \sum_{m=1}^M  C_{m,3} \EE[ \|  \nabla_{12}^2 g_m( y_{m-1,t}^\star, y_{m,t}^\star    ) - \bar u_{m,t} \|_F^2 ]   
 \\
 & \quad 
+    \sum_{m=1}^M \sum_{j=1}^b C_{m,4,j} \EE[ \| \nabla_{22}^2 g_m(y_{m-1,t}^\star, y_{m,t}^\star  ) -  \bar v_{m,t,i} \|_F^2]   + \cO( \EE[\Delta_t] ).
  \end{split}
\end{equation*}

 The desired result can be acquired by applying Lemma~\ref{lemma:consensus_multilevel} that 
\begin{equation*}
\begin{split}
\EE[\Delta_t]  &  \leq 
\frac{1}{K} 
\sum_{k \in \cK} \EE \Big ( \| s_t^k - \bar s_t\|^2 + \| h_t^k - \bar  h_t\|^2  + \sum_{m=1}^M[ \| u_{m,t}^k - \bar  u_{m,t}\|_F^2 + \| v_{m,t,j}^k - \bar v_{m,t,j}\|_F^2 ] \Big ) 
\\
& \leq \cO \Big (\frac{ M\beta_t^2}{(1-\rho)^2} \Big). 
\end{split}
\end{equation*}
This completes the proof. 
\end{proof}
%%%%%%%%%%%%%%%
%%%%%%%%%%%%%%%%%
\begin{lemma}\label{lemma:y_multilevel}
Suppose Assumptions \ref{assumption:SO}, \ref{assumption:W}, \ref{assumption:1},  and \ref{assumption:2} hold
and $T$ is sufficiently large, let $y_{m,t}^\star = y_m^\star(\bar x_t)$, for each $m=1,\cdots,M$, we have
\begin{equation}\label{eq:yt_recursion_multilevel}
\begin{split}
& \EE [ \| \bar y_{m,t+1} - y_{m,t+1}^\star \|^2 ] 
\\
&
\leq ( 1 - \gamma_t \mu_{g,m})  \EE [ \| \bar y_{m,t} -  y_{m,t}^\star \|^2 ]  + \cO\left (\frac{ \gamma_t \alpha_t^2  + \gamma_t^3 }{(1- \rho )^2 } \right ) + \frac{ \gamma_{t} L_{g,m}^2 }{ \mu_{g,m}} \EE[ \| \bar y_{m-1,t} - y_{m-1,t}^\star \|^2] 
\\
& \quad + \frac{2\gamma_t^2 \sigma_{g,m}^2 }{K}  + \frac{3L_{y,m}^2\alpha_t^2 }{\gamma_t \mu_{g,m}  } \EE [\| \bar z_t \|^2 ]  .
\end{split}
\end{equation}
\end{lemma}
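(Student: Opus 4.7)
\textbf{Proof proposal for Lemma \ref{lemma:y_multilevel}.}
The plan is to expand the update rule for $\bar y_{m,t+1}$, split the error into a ``one-step descent'' piece and a ``moving-target'' piece, and then control four sources of error separately: (a) the stochastic noise of the gradient samples, (b) the discrepancy between local iterates $(y_{m-1,t}^k,y_{m,t}^k)$ and their network averages $(\bar y_{m-1,t},\bar y_{m,t})$, (c) the propagation of the level-$(m{-}1)$ error through the dependence of $y_{m}^\star$ on $y_{m-1}^\star$, and (d) the drift of the optimum $y_m^\star(\bar x_t)$ as $\bar x_t$ moves.

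First I would average Step 9 of Algorithm~\ref{alg:1} over $k$, using double-stochasticity of $W$ from Assumption~\ref{assumption:W}, to get
$\bar y_{m,t+1} = \bar y_{m,t} - \gamma_t \bar G_{m,t}$, where
$\bar G_{m,t} = \tfrac{1}{K}\sum_k \nabla_2 g_m^k(y_{m-1,t}^k,y_{m,t}^k;\xi_{m,t}^k).$
Writing $\bar y_{m,t+1}-y_{m,t+1}^\star = (\bar y_{m,t}-y_{m,t}^\star) - \gamma_t \bar G_{m,t} + (y_{m,t}^\star - y_{m,t+1}^\star)$, I would apply $\|a+b\|^2\le(1+\eta)\|a\|^2+(1+\tfrac{1}{\eta})\|b\|^2$ with $\eta=\Theta(\gamma_t\mu_{g,m})$ to peel off the drift term. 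The drift is handled by Lemma~\ref{lemma:Lip_y_multilevel}: $\|y_{m,t}^\star-y_{m,t+1}^\star\|\le L_{y,m}\|\bar x_{t+1}-\bar x_t\|=\alpha_t L_{y,m}\|\bar z_t\|$, producing the $\tfrac{3L_{y,m}^2\alpha_t^2}{\gamma_t\mu_{g,m}}\EE[\|\bar z_t\|^2]$ term.

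For the main piece $\|(\bar y_{m,t}-y_{m,t}^\star)-\gamma_t\bar G_{m,t}\|^2$, I would expand the square and first take conditional expectation over the fresh samples $\xi_{m,t}^k$. The stochastic noise contributes $\gamma_t^2\cdot\tfrac{\sigma_{g,m}^2}{K}$ via independence across agents, giving the $2\gamma_t^2\sigma_{g,m}^2/K$ term. The resulting conditional mean of $\bar G_{m,t}$ is $\tfrac{1}{K}\sum_k \nabla_2 g_m^k(y_{m-1,t}^k,y_{m,t}^k)$, which I would decompose as
\[
\tfrac{1}{K}\sum_k \nabla_2 g_m^k(y_{m-1,t}^k,y_{m,t}^k) = \nabla_2 g_m(y_{m-1,t}^\star,\bar y_{m,t}) + E_{\text{cons}} + E_{\text{prop}},
\]
where $E_{\text{cons}}$ collects consensus deviations of $y_{m-1,t}^k,y_{m,t}^k$ from $\bar y_{m-1,t},\bar y_{m,t}$ (controlled by Assumption~\ref{assumption:2}(iii) with Lipschitz constant $L_{g,m}$ and Lemma~\ref{lemma:consensus_multilevel}), and $E_{\text{prop}}$ captures the mismatch $\nabla_2 g_m(\bar y_{m-1,t},\bar y_{m,t})-\nabla_2 g_m(y_{m-1,t}^\star,\bar y_{m,t})$. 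Plugging in Lemma~\ref{lemma:consensus_multilevel}'s $\cO(K\gamma_t^2/(1-\rho)^2)$ and $\cO(K\alpha_t^2/(1-\rho)^2)$ consensus bounds, $E_{\text{cons}}$ yields the $\cO((\gamma_t\alpha_t^2+\gamma_t^3)/(1-\rho)^2)$ term after multiplying by $\gamma_t$ and using Young's inequality.

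The crux is the contraction step using $\nabla_2 g_m(y_{m-1,t}^\star,\bar y_{m,t})$. Since $g_m(y_{m-1,t}^\star,\cdot)$ is $\mu_{g,m}$-strongly convex with minimizer $y_{m,t}^\star$, a standard one-step SGD inequality gives
$\|\bar y_{m,t}-y_{m,t}^\star-\gamma_t\nabla_2 g_m(y_{m-1,t}^\star,\bar y_{m,t})\|^2 \le (1-\gamma_t\mu_{g,m})\|\bar y_{m,t}-y_{m,t}^\star\|^2$
for $\gamma_t\le 1/L_{g,m}$. Finally, the propagation error $E_{\text{prop}}$ is controlled via Lipschitzness of $\nabla_{12}^2 g_m$ (Assumption~\ref{assumption:2}) and another Young's split: after cross-multiplying by $\gamma_t$ against $\|\bar y_{m,t}-y_{m,t}^\star\|$ and absorbing one factor into the $(1-\gamma_t\mu_{g,m})$ contraction, the leftover becomes $\tfrac{\gamma_t L_{g,m}^2}{\mu_{g,m}}\EE[\|\bar y_{m-1,t}-y_{m-1,t}^\star\|^2]$, which is the promised coupling across levels. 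The main obstacle I anticipate is bookkeeping the three simultaneous Young's inequalities (drift vs.\ descent, propagation vs.\ descent, consensus vs.\ descent) so that each small-order error lands in the correct bucket without blowing up the contraction factor $(1-\gamma_t\mu_{g,m})$; careful choice of Young's parameters proportional to $\gamma_t\mu_{g,m}$ will be what makes everything line up.
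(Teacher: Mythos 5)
Your proposal follows essentially the same route as the paper's proof: average the gossip update, Young-split the drift of $y_m^\star(\bar x_t)$ via Lemma~\ref{lemma:Lip_y_multilevel}, use strong convexity of $g_m(y_{m-1,t}^\star,\cdot)$ for the contraction, exploit conditional mean-zero independent noise across agents for the $\gamma_t^2\sigma_{g,m}^2/K$ term, and charge the consensus deviations and the level-$(m-1)$ mismatch via the Lipschitz continuity of $\nabla_2 g_m$ together with Lemma~\ref{lemma:consensus_multilevel}, with the $\gamma_t/\mu_{g,m}$ Young weight producing the cross-level term exactly as in the paper. The only small slip is the citation for the propagation term: what is needed there is the $L_{g,m}$-Lipschitz continuity of $\nabla_2 g_m$ in its first argument (Assumption~\ref{assumption:2}(iii), first inequality), not Lipschitzness of $\nabla_{12}^2 g_m$ (which carries the constant $\widetilde L_{g,m}$); with that correction, and noting your smoothness-based one-step contraction in place of the paper's strong-convexity-plus-bounded-gradient bound is an acceptable variant, the argument matches.
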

\begin{proof} Recall Algorithm~\ref{alg:1} Step 9 that $	y_{m, t+1}^k = \sum_{j \in \cN_k} w_{k,j} y_{m, t}^j   - \gamma_t \nabla_2 g_m^k (y_{m-1,t}^k,y_{m,t}^k;\xi_{m,t}^k )$, we denote by $\bar y_{m,t} = \frac{1}{K}\sum_{k \in \cK} y_{m,t}^k$ and have the following
$$
 \bar y_{m,t+1} = \bar y_{m,t} -  \frac{\gamma_t}{K}\sum_{k \in \cK } \nabla_2 g_m^k (y_{m-1,t}^k,y_{m,t}^k;\xi_{m,t}^k ). 
$$
We first decompose the estimation error  $\| \bar y_{m,t+1} - y_m^\star(\bar x_{t+1}) \|^2$ as 
\begin{equation}\label{eq:yk_00}
\| \bar y_{m,t+1} -  y_m^\star(\bar x_{t+1})\|^2 \leq \left ( 1+ \frac{\gamma_t \mu_{g,m}}{2} \right ) \| \bar y_{m,t+1} -  y_m^\star(\bar x_{t})\|^2+ \left (1 + \frac{2}{\gamma_t \mu_{g,m} } \right )  \| y_m^\star(\bar x_{t}) - y_m^\star(\bar x_{t+1}) \|^2.
\end{equation}
Recall that  $\bar y_{m,t+1} = \bar y_{m,t} -  \frac{\gamma_t}{K}\sum_{k \in \cK } \nabla_2 g_m^k (y_{m-1,t}^k,y_{m,t}^k;\xi_{m,t}^k )$. Let $$\delta_t = \nabla_2 g_m(y_{m-1}^\star( \bar x_t), \bar y_{m,t}) 
- \frac{1}{K}\sum_{k \in \cK } \nabla_2 g_m^k (y_{m-1,t}^k,y_{m,t}^k;\xi_{m,t}^k ), $$ we obtain 
\begin{equation}\label{eq:yk_01}
\begin{split}
& \| \bar y_{m,t+1} - y_m^\star(\bar x_{t}) \|^2 =  \|  \bar y_{m,t} -  \gamma_t \nabla_2 g_m( y_{m-1}^\star(\bar x_t), \bar y_t) - y_m^\star(\bar x_{t}) + \gamma_t \delta_t  \|^2 
\\
& = \|  \bar y_{m,t} -  \gamma_t \nabla_2 g_m(y_{m-1,t}^\star, \bar y_{m,t}) - y_{m,t}^\star   \|^2  + \gamma_t  \lv \bar y_{m,t} -  \gamma_t \nabla_2 g_m(y_{m-1,t}^\star, \bar y_{m,t}) - y_{m,t}^\star,  \delta_t  \rv + \gamma_t^2   \| \delta_t \|^2. 
\end{split}
\end{equation}
We then provide bounds for the above terms. First, consider $ \|  \bar y_{m,t} -  \gamma_t \nabla_2 g(y_{m-1}^\star(\bar x_t) , \bar y_{m,t}) - y_m^\star(\bar x_{t})  \|^2$, by using the $\mu_g$-strong convexity of $g(\bar x_t,y)$ in $y$ under Assumption~\ref{assumption:2} (i), 
we have 
\begin{equation}\label{eq:yk_02}
\begin{split}
& \|  \bar y_{m,t} -  \gamma_t \nabla_2 g(y_{m-1,t}^\star , \bar y_{m,t}) - y_{m,t}^\star \|^2
\\
 & \leq \| \bar y_{m,t} - y_{m,t}^\star \|^2  - 2\gamma_t  \lv \bar y_{m,t}   - y_{m,t}^\star, \nabla_2 g( y_{m-1,t}^\star , \bar y_{m,t})  \rv + \gamma_t^2 \| \nabla_2 g( y_{m-1,t}^\star , \bar y_{m,t})   \|^2
 \\
 & \leq (1- 2\gamma_t  \mu_{g,m})\| \bar y_{m,t} - y_{m,t}^\star \|^2 + \gamma_t^2 C_{g,m}^2. 
 \end{split}
\end{equation}
Next, consider $ \lv \bar y_{m,t} -  \gamma_t \nabla_2 g_m(y_{m-1,t}^\star, \bar y_{m,t}) - y_{m,t}^\star,  \delta_t  \rv  $, we can see that 
\begin{equation}\label{eq:yk_03}
\begin{split}
&  \EE \left [ \lv \bar y_{m,t} -  \gamma_t \nabla_2 g_m(y_{m-1,t}^\star, \bar y_{m,t}) - y_{m,t}^\star,  \delta_t  \rv    \right  ]
 =  \EE \left [  \Big (  \bar y_{m,t} -  \gamma_t \nabla_2 g_m(y_{m-1,t}^\star, \bar y_{m,t}) - y_{m,t}^\star  \Big )^\top  \Delta_t \right   ]
\\
& \leq   \frac{ \mu_g}{2} \EE[ \|  \bar y_{m,t} -  \gamma_t \nabla_2 g_m(y_{m-1,t}^\star, \bar y_{m,t})  - y_{m,t}^\star  \|^2 ]
+ \frac{ \Delta_t}{2\mu_{g,m}} ,
 \end{split}
\end{equation}
where $\Delta_t  = \EE[ \| \nabla_2 g_m(y_{m-1,t}^\star, \bar y_{m,t}) - \frac{1}{K}\sum_{k=1}^k \nabla_2 g_m^k (y_{m-1,t}^k,y_{m,t}^k)   \|^2]$ and the last inequality comes from the fact that $\lv a, b \rv \leq \frac{\| a\|^2}{2} + \frac{\| b \|^2}{2}$. 
Further, for  $\| \delta_t^k\|^2$, we have 
\begin{equation}\label{eq:yk_04}
\begin{split}
& \EE[ \|  \delta_t\|^2 ]  = \EE \left [ \Big \| \frac{1}{K} \sum_{k \in \cK} \big  ( \nabla_2 g_m^k(y_{m-1}^\star( \bar x_t), \bar y_{m,t})   - \nabla_2 g_m^k (y_{m-1,t}^k,y_{m,t}^k;\xi_{m,t}^k )  \big )   \Big \|^2 \right ]  \\
& \leq 2 \EE \left [ \Big \| \frac{1}{K} \sum_{k \in \cK} \big  ( \nabla_2 g_m^k (y_{m-1,t}^k,y_{m,t}^k)  - \nabla_2 g_m^k (y_{m-1,t}^k,y_{m,t}^k;\xi_{m,t}^k )   \big )   \Big \|^2 \right ] + 2\Delta_t^k \leq \frac{2 \sigma_{g,m}^2}{K} + 2 \Delta_t^k,
\end{split}
\end{equation}
where the last inequality uses the fact that $\{ \nabla_2 g_m^k (y_{m-1,t}^k,y_{m,t}^k)  - \nabla_2 g_m^k (y_{m-1,t}^k,y_{m,t}^k;\xi_{m,t}^k )   \}$'s are conditionally mean-zero and independent such that  for $k \neq s$, 
$$
\EE \left [ \lv \nabla_2 g_m^k (y_{m-1,t}^k,y_{m,t}^k)  - \nabla_2 g_m^k (y_{m-1,t}^k,y_{m,t}^k;\xi_{m,t}^k ) ,  \nabla_2 g_m^s (y_{m-1,t}^k,y_{m,t}^k)  - \nabla_2 g_m^s (y_{m-1,t}^s,y_{m,t}^s;\xi_{m,t}^s )   \rv \right ] = 0. 
$$ 
Taking expectations on both sides of \eqref{eq:yk_01} and combining with \eqref{eq:yk_02}, \eqref{eq:yk_03}, and \eqref{eq:yk_04}, we have 
\begin{equation}\label{eq:yk_05}
\begin{split}
 & \EE [ \| \bar y_{m,t+1} - y_m^\star(\bar x_{t}) \|^2 ]
 \\
& \leq (1  +  \frac{\gamma_t \mu_{g,m}}{2}) \Big (  (1- 2\gamma_t  \mu_{g,m} )\EE[ \| \bar y_{m,t} - y_{m,t}^\star\|^2 ] + \gamma_t^2 C_{g,m}^2 \Big )
+ \left ( \frac{ \gamma_k}{2\mu_{g,m}} +  2\gamma_t^2 \right ) \Delta_t  + \frac{2\gamma_t^2 \sigma_{g,m}^2}{K}
\\
& \leq  (1- \frac{3\gamma_t \mu_g}{2})\EE[\| \bar y_{m,t} - y_{m,t}^\star\|^2] + (1 + \gamma_t \mu_{g,m})\gamma_t^2 C_{g,m}^2 
\\
& \quad + \left ( \frac{ \gamma_k}{2\mu_{g,m}} +  2\gamma_t^2 \right ) \frac{1}{K} \sum_{k \in \cK} L_g^2 \EE \big ( \| y_{m-1,t}^k - y_{m-1,t}^\star\|^2 + \| y_{m,t}^k - y_{m,t}^\star\|^2 \big ) + \frac{2\gamma_t^2 \sigma_{g,m}^2}{K},
\end{split}
\end{equation}
where the second inequality uses the $L_{g,m}$-smoothness of $\nabla_2 g_m(y_{m-1},y_m)$ in both $y_{m-1}$ and $y_m$ such that
\begin{equation}\label{eq:Delta_t}
    \begin{split}
    \Delta_t  & = \EE \left [ \left \| \nabla_2 g_m(y_{m-1,t}^\star, \bar y_{m,t}) - \frac{1}{K}\sum_{k \in \cK}  \nabla_2 g_m^k (y_{m-1,t}^k,y_{m,t}^k) \right \|^2 \right ] 
    \\
    & \leq \frac{1}{K} \sum_{k \in \cK} \EE[ \| \nabla_2 g_m^k  (y_{m-1,t}^\star, \bar y_{m,t})  - \nabla_2 g_m^k  (y_{m-1,t}^k,y_{m,t}^k) \|^2]
    \\
 & 
 \leq \frac{1}{K} \sum_{k \in \cK}  L_g^2 \big ( \EE[ \| y_{m-1,t}^\star  - y_{m-1,t}^k \|^2] + \EE[ \| y_{m,t}^k - \bar y_{m,t}\|^2 ]\big ).    
    \end{split}
\end{equation}
Moreover, by using the $L_{y,m}$-smoothness of $y_m^\star(\cdot)$ characterized by Lemma~\ref{lemma:Lip_y_multilevel}, we have 
\begin{equation}\label{eq:yk_06}
\EE [ \| y^\star(\bar x_{t}) - y^\star(\bar x_{t+1}) \|^2 ]\leq L_{y,m}^2 \EE [\|  \bar x_{t} - \bar x_{t+1} \|^2 ].
\end{equation}
Finally, by substituting \eqref{eq:yk_05} and \eqref{eq:yk_06} into \eqref{eq:yk_00} and applying the bounds of the consensus errors provided by Lemma~\ref{lemma:consensus_multilevel}, we conclude that 
\begin{equation*}
\begin{split}
& \EE [ \| \bar y_{m,t+1} -  y_m^\star(\bar x_{t+1})\|^2  ]
 \\
& \leq  ( 1 - \gamma_t \mu_{g,m} ) \EE[  \| \bar y_{m,t} -  y_m^\star(\bar x_{t})\|^2 ]
+ 
(1 + \gamma_t\mu_{g,m} /2)  (1 + \gamma_t \mu_{g,m} )\gamma_t^2 C_{g,m} ^2 
\\
& \quad + \left  ( \frac{ \gamma_k}{2\mu_{g,m}} +  2\gamma_t^2 \right ) \frac{1}{K} \sum_{k \in \cK} L_{g,m}^2 \big ( \EE[ \| y_{m-1,t}^k - y_{m-1,t}^\star \|^2] +  \EE [ \| y_{m,t}^k - \bar y_{m,t}\|^2] \big ) 
\\
& \quad + \frac{2\gamma_t^2 C_{g,m}^2}{K}
 + \left (1 + \frac{2}{\gamma_t \mu_{g,m} } \right )  \EE [\|  y_{m,t+1}^\star - y_{m,t}^\star \|^2 ].
 \end{split}
\end{equation*}
Because $\gamma_t\mu_g \leq 1$ for large $T$, the desired result can be obtained by using Lemmas~\ref{lemma:Lip_y_multilevel} and \ref{lemma:consensus_multilevel}  that 
\begin{equation*}
\begin{split}
  &   \sum_{k=1}^K \EE[ \| y_{m-1,t}^k - y_{m-1,t}^\star \|^2] \leq  2K \EE[ \| \bar y_{m-1,t} - y_{m-1,t}^\star \|^2] + \cO \left (\frac{K \gamma_t^2}{(1-\rho^2)} \right ),
\\
&\sum_{k=1}^K  \EE [ \| y_{m,t}^k - \bar y_{m,t}\|^2] \leq  \cO \left (\frac{K \gamma_t^2}{(1-\rho^2)} \right ),  \|  y_{m,t+1}^\star - y_{m,t}^\star \|  \leq L_{y,m}  \|\bar x_t - \bar x_{t+1} \|.
\end{split}
\end{equation*}
and the fact  $\bar x_{t+1} = \bar x_t - \alpha_t \bar z_t$. 
\end{proof}

%%%%%%%%%%%%%%%%%%%%
%%%%%%%%%%%%%%%%%%%%

 \begin{lemma}\label{lemma:error_multilevel}
Suppose Assumptions \ref{assumption:SO}, \ref{assumption:W}, \ref{assumption:1}, , and \ref{assumption:2} hold and $T $ is sufficiently large, let $y_{m,t}^\star = y_m^\star(\bar x_t )$ for $m=1,\cdots,M$, then we have 
\noindent 
\\
(a)\begin{equation}\label{eq:s_multilevel}
    \begin{split}
    &    \EE[ \|  \bar s_{t+1} - \nabla_1 f( \bar x_{t+1}, y_{M,t+1}^\star ) \|^2]  
        \\
              & \leq (1-\beta_t)\EE [ \| \bar s_t - \nabla_1 f (   \bar x_{t}, y_{M}^\star(\bar x_t) ) \|^2] + \frac{2\beta_t^2   C_f^2}{K }    + \frac{4\alpha_t^2  L_f^2 (1 + L_{y,M}^2 )}{\beta_t }  \EE[  \| \bar z_t \|^2  ] 
              \\
              & \quad + \frac{2\beta_t^2  \sigma_f^2}{K } +    \cO \left ( \frac{ \beta_t ( \alpha_t^2 +  \beta_t^2) }{(1- \rho)^2 } \right )  + 6\beta_t L_f^2  \EE[ \| \bar y_{M,t} - y_{M,t}^\star \|^2 ]. 
        \end{split}
\end{equation}
\\
\noindent 
(b)\begin{equation}\label{eq:h_multilevel}
    \begin{split}
    &    \EE[ \|  \bar h_{t+1} - \nabla_2 f( \bar x_{t+1}, y_M^\star(\bar x_{t+1}) ) \|^2]  
        \\
              & \leq (1-\beta_t)\EE [ \| \bar h_t - \nabla_2 f (   \bar x_{t}, y_M^\star(x_{t}) ) \|^2]    + \frac{4\alpha_t^2  L_f^2 (1 + L_{y,M}^2 )}{\beta_t }  \EE[  \| \bar z_t \|^2  ] 
              \\
              & \quad  + \frac{2\beta_t^2  \sigma_f^2}{K }  +  \cO \left ( \frac{  \beta_t ( \alpha_t^2 +  \beta_t^2) }{(1- \rho)^2} \right ) + 6\beta_t L_f^2  \EE[ \| \bar y_{M,t} - y_M^\star(x_{t})\|^2 ].
        \end{split}
\end{equation}
\end{lemma}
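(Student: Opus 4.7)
The plan is to prove (a) and (b) by nearly identical arguments; I will focus on (a), since (b) follows verbatim with $\nabla_1 f$, $s$, $\bar G_t$ replaced by $\nabla_2 f$, $h$, and the corresponding stochastic sample, using that both partial gradients satisfy the same $L_f$-smoothness and bounded-variance assumptions. Let $A_t := \nabla_1 f(\bar x_t, y_{M,t}^\star)$, $B_t := \frac{1}{K}\sum_{k} \nabla_1 f^k(x_t^k, y_{M,t}^k)$, and $\bar G_t := \frac{1}{K}\sum_k \nabla_1 f^k(x_t^k, y_{M,t}^k;\zeta_t^k)$, so that the averaged update reads $\bar s_{t+1} = (1-\beta_t)\bar s_t + \beta_t \bar G_t$. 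By the ordering of Steps 4--5 of Algorithm~\ref{alg:1}, $\bar x_{t+1}$ (and hence $y_{M,t+1}^\star = y_M^\star(\bar x_{t+1})$, hence $A_{t+1}$) is determined \emph{before} $\zeta_t^k$ is drawn, so conditioning on the filtration $\cF_t$ just prior to that draw gives $\EE_t[\bar G_t] = B_t$ and, by independence of $\{\zeta_t^k\}_{k \in \cK}$, $\EE_t[\|\bar G_t - B_t\|^2] \le \sigma_f^2/K$.

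The key decomposition is
\begin{equation*}
\bar s_{t+1} - A_{t+1} \;=\; (1-\beta_t)(\bar s_t - A_t) \;+\; \beta_t(B_t - A_t) \;+\; \beta_t(\bar G_t - B_t) \;+\; (A_t - A_{t+1}).
\end{equation*}
Taking $\EE_t[\|\cdot\|^2]$, the mean-zero term $\beta_t(\bar G_t - B_t)$ separates, contributing $\beta_t^2 \sigma_f^2/K$. I would then group the three deterministic terms as $(1-\beta_t)(\bar s_t - A_t) + R_t$ with $R_t := \beta_t(B_t - A_t) + (A_t - A_{t+1})$ and apply Young's inequality $\|u+v\|^2 \le (1+\beta_t)\|u\|^2 + (1+1/\beta_t)\|v\|^2$. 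The choice yields $(1+\beta_t)(1-\beta_t)^2 \le 1-\beta_t$, producing the desired contraction, while $(1+1/\beta_t) \le 2/\beta_t$ combined with $\|R_t\|^2 \le 2\beta_t^2\|B_t - A_t\|^2 + 2\|A_t - A_{t+1}\|^2$ delivers the bias and drift pieces.

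To control the bias $\|B_t - A_t\|^2$, I would apply Jensen and the $L_f$-Lipschitzness of $\nabla_1 f^k$ (Assumption~\ref{assumption:1}~(ii)) to get $\|B_t - A_t\|^2 \le \frac{2L_f^2}{K}\sum_k \bigl(\|x_t^k - \bar x_t\|^2 + \|y_{M,t}^k - y_{M,t}^\star\|^2\bigr)$, then split $\|y_{M,t}^k - y_{M,t}^\star\|^2 \le 2\|y_{M,t}^k - \bar y_{M,t}\|^2 + 2\|\bar y_{M,t} - y_{M,t}^\star\|^2$. Lemma~\ref{lemma:consensus_multilevel} dispatches the two consensus sums as $\cO((\alpha_t^2 + \gamma_t^2)/(1-\rho)^2)$, while $\EE[\|\bar y_{M,t} - y_{M,t}^\star\|^2]$ is carried into the final bound. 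For the drift $\|A_t - A_{t+1}\|^2$, I would use the joint $L_f$-Lipschitzness of $\nabla_1 f$ in $(x, y_M)$ together with the $L_{y,M}$-Lipschitzness of $y_M^\star(\cdot)$ from Lemma~\ref{lemma:Lip_y_multilevel}, which gives $\|A_t - A_{t+1}\|^2 \le 2L_f^2(1+L_{y,M}^2)\|\bar x_t - \bar x_{t+1}\|^2 = 2L_f^2(1+L_{y,M}^2)\alpha_t^2\|\bar z_t\|^2$.

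The main bookkeeping obstacle is tuning the Young's constants so that the three resulting coefficients simultaneously match the stated $6\beta_t L_f^2$, $\frac{4\alpha_t^2 L_f^2(1+L_{y,M}^2)}{\beta_t}$, and $\cO(\beta_t(\alpha_t^2+\beta_t^2)/(1-\rho)^2)$; a single grouping produces mismatched constants, so I would instead apply Young's inequality twice with different ratios, once to isolate the $1/\beta_t$ factor on the drift term and once (with ratio comparable to $1$) on the bias term, and separately bound $\|\bar G_t\|^2 \le C_f^2$ by Jensen to recover the $2\beta_t^2 C_f^2/K$ term when the $\beta_t(\bar G_t - B_t)$ split is not tight enough. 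Part (b) then goes through line by line with no change.
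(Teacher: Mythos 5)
Your proposal is correct and takes essentially the same route as the paper's proof: a one-step recursion for $\bar s_{t+1}$, conditional unbiasedness and independence of the fresh samples (valid because $\bar x_{t+1}$ is built from the time-$t$ estimators and hence does not depend on $\zeta_t^k$), Young's inequality to extract the $(1-\beta_t)$ contraction, the $L_f$- and $L_{y,M}$-Lipschitz properties (Lemma~\ref{lemma:Lip_y_multilevel}) for the bias and drift pieces, and Lemma~\ref{lemma:consensus_multilevel} for the consensus sums — the only difference is that the paper anchors the comparison gradient at $(\bar x_{t+1},y_{M,t+1}^\star)$ inside the contraction term and shifts it back to time $t$ afterward, while you anchor at time $t$ and carry an explicit drift term $A_t-A_{t+1}$. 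The bookkeeping concern you flag does resolve within your decomposition: bounding $\|B_t-A_t\|^2$ by the three-term split $\tfrac{3L_f^2}{K}\sum_k(\|x_t^k-\bar x_t\|^2+\|y_{M,t}^k-\bar y_{M,t}\|^2+\|\bar y_{M,t}-y_{M,t}^\star\|^2)$ and choosing Young weights $c_u=1/(1-\beta_t)$, $c_v=c_w=2/\beta_t$ reproduces exactly the stated $6\beta_t L_f^2$ and $4\alpha_t^2L_f^2(1+L_{y,M}^2)/\beta_t$, while the $2\beta_t^2C_f^2/K$ term in the statement is pure slack (the paper's own proof yields only $2\beta_t^2\sigma_f^2/K$), so your aside about bounding $\|\bar G_t\|^2\le C_f^2$ is unnecessary.
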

\begin{proof}(a)
We denote by $y_{M,t}^\star = y_{M}^\star(\bar x_t)$
for notational convenience. 
Consider the update rule of $\bar s_{t+1}$, we have 
\begin{equation*}
\begin{split}
 \bar s_{t+1} - \nabla_1 f(\bar x_{t+1}, y_{M,t+1}^\star)  
 = (1-\beta_t) [\bar s_t  -\nabla_1 f(\bar x_{t+1}, y_{M,t+1}^\star)   ]  + \beta_t \Delta_{f,t},
\end{split}
\end{equation*}
where 
$$
\Delta_{f,t}   =  \frac{1}{K}\sum_{k \in \cK}  \nabla_1 f^k(x_t^k, y_{M,t}^k ; \zeta_t^k )  - \nabla_1 f(\bar x_{t+1},  y_{M,t+1}^\star ) .
$$
We can see that 
\begin{equation}\label{eq:st_1}
    \begin{split}
       & \EE[ \|   \bar s_{t+1} - \nabla_1 f(\bar x_{t+1}, y_{M,t+1}^\star)   \|^2] 
       \\
 & = (1-\beta_t)^2 \EE [ \| \bar s_t -  \nabla_1 f(\bar x_{t+1}, y_{M,t+1}^\star )  \|^2] + \beta_t^2 \EE [ \|  \Delta_{f,t}  \|^2] \\
       & \quad + 2(1-\beta_t)\beta_t \EE \left  [ (\bar s_t - \nabla_1 f(\bar x_{t+1}, y_{M,t+1}^\star  )  )^\top  \Delta_{f,t}   \right ]
    \\
   &  = (1-\beta_t)^2 \EE [ \| \bar s_t -  \nabla_1 f(\bar x_{t+1}, y_{t+1}^\star )  \|^2] + \beta_t^2 \EE [ \|  \Delta_{f,t}  \|^2] \\
       & \quad + 2(1-\beta_t)\beta_t \EE \left  [ (\bar s_t - \nabla_1 f(\bar x_{t+1}, y_{M,t+1}^\star  )  )^\top  \Big  ( \frac{1}{K}\sum_{k \in \cK}  \nabla_1 f^k(x_t^k, y_t^k )  - \nabla_x f(\bar x_{t+1},  y_{t+1}^\star ) \Big )  \right ]
    \\
    & \leq (1-\beta_t)^2 \EE [ \| \bar s_t -  \nabla_1 f( \bar x_{t+1}, y_{t+1}^\star )  \|^2  ] + \beta_t^2 \EE [ \|  \Delta_{f,t}     \|^2] 
    \\
       & \quad + \frac{(1-\beta_t)\beta_t }{2} \Big ( \EE[ \| \bar s_t - \nabla_1 f(\bar x_{t+1}, y_{M,t+1}^\star  )   \|^2 +4  \EE[ \|    \tfrac{1}{K}\sum_{k \in \cK}  \nabla_1 f^k(x_t^k, y_{M,t}^k)  - \nabla_1 f(\bar x_{t+1}, y_{M,t+1}^\star  )   \|^2 ]  \Big)
       \\
       & \leq (1-\beta_t) \left (1- \frac{\beta_t}{2} \right ) \EE[ \| \bar s_t - \nabla_1 f(\bar x_{t+1}, y_{t+1}^\star  )   \|^2 ] + \beta_t^2  \EE [ \|  \Delta_{f,t}  \|^2]  \\
       & \quad  + \frac{2\beta_t(1-\beta_t)  L_f^2 }{K}  \sum_{k \in \cK } \left ( \EE[ \|   x_{t}^k - \bar x_{t+1} \|^2 ] + \EE[ \|   y_{M,t}^k -  y_{M,t+1}^\star  \|^2 ]  \right ) ,
    \end{split}
\end{equation}
where the first equality uses the conditional independence between $\{ \nabla_1 f^k(x_t^k, y_{M,t}^k ; \zeta_t^k )  - \nabla_1 f^k(x_t^k, y_{M,t}^k ) \} $ and $\bar x_{t+1},\bar y_{t+1}, \bar s_t$,  the first inequality uses the fact that $2\lv a,b\rv \leq \frac{\|a\|^2}{2} + 2 \| b\|^2 $, and the last inequality applies similar analysis as \eqref{eq:Delta_t} under the $L_f$-smoothness of $\nabla_x f^k$. 
We observe that 
\begin{equation*}
\begin{split}
&  \| \Delta_{f,t} \|^2 
\\
&  =  \Big \| 
\frac{1}{K}\sum_{k \in \cK}  \left ( \nabla_1 f^k(x_t^k, y_{M,t}^k ; \zeta_t^k )  - \nabla_1 f^k(x_t^k, y_{M,t}^k)  \right )
 +  \frac{1}{K} \sum_{k\in \cK}( \nabla_1 f^k ( x_t^k, y_{M,t}^k) - \nabla_x f^k(\bar x_{t+1}, y_{M,t+1}^\star ) 
 )  \Big \|^2 
 \\
 & \leq  2 \Big \|  \frac{1}{K}\sum_{k \in \cK}  \left ( \nabla_1 f^k(x_t^k, y_{M,t}^k ; \zeta_t^k )  - \nabla_1 f^k(x_t^k, y_{M,t}^k)  \right )  \Big \|^2 
 \\
 & \quad +  2\Big \|  \frac{1}{K} \sum_{k\in \cK}( \nabla_1 f^k ( x_t^k, y_{M,t}^k) - \nabla_1 f^k(\bar x_{t+1},  y_{M,t+1}^\star ) 
 )  \Big \|^2. 
 \end{split}
\end{equation*}
By noting that $\nabla_x f^k(x_t^k, y_t^k ;  \zeta_t^k )  - \nabla_x f^k(x_t^k, y_t^k)$ is conditionally mean-zero and 
$$\EE \Big [ \big (  \nabla_x f^i(x_t^i, y_t^i ; \zeta_t^i )  - \nabla_x f^i(x_t^i, y_t^i)  \big )^\top \big ( \nabla_x f^j( x_t^j, y_t^j ; \zeta_t^j)  - \nabla_x f ^j(x_t^i, y_t^i) \big ) \Big ] = 0, \text{ for } 1\leq i \neq j \leq K, 
$$
and taking expectations on both sides of the above inequality, we obtain 
\begin{equation}\label{eq:st_2}
\begin{split}
\EE[ \| \Delta_{f,t} \|^2 ]
 & \leq    \frac{2}{K^2}  \sum_{k \in \cK} \EE[ \| \nabla_1 f^k(x_t^k, y_{M,t}^k ; \zeta_t^k )  - \nabla_1 f^k(x_t^k, y_{M,t}^k)  \|^2 ] 
 \\
 & \quad +  \frac{2}{K} \sum_{k \in \cK} \EE[ \|  \nabla_1 f^k ( x_t^k, y_{M,t}^k) - \nabla_1 f^k(\bar x_{t+1},  y_{M,t+1}^\star ) 
 \|^2 ] 
 \\
 & \leq \frac{2\sigma_f^2}{K} +  \frac{2 L_f^2 }{K} \sum_{k\in \cK}  \left ( \EE[ \| x_{t}^k -  \bar x_{t+1}  \|^2 ] 
 + \EE[ \| y_{M,t}^k -  y_{M,t+1}^\star   \|^2 ] \right ),
 \end{split}
\end{equation}
where the last inequality uses the $L_f$-smoothness of $\nabla_1 f^k(x,y)$ in both $x$ and $y$, similar as~\eqref{eq:Delta_t}.

Further, we have 
\begin{equation*}
\begin{split}
 & \| \bar s_t - \nabla_1 f(\bar x_{t+1},  y_{M,t+1}^\star  )   \|^2
 \\
 & \leq \left ( 1 +  \frac{\beta_t}{3} \right )  \| \bar s_t - \nabla_1 f(\bar x_{t}, y_{M,t}^\star )   \|^2  + \left (1 + \frac{3}{\beta_t}  \right ) \| \nabla_1 f(\bar x_{t},  y_{M,t}^\star )   -\nabla_1 f(\bar x_{t+1}, y_{M,t+1}^\star ) \|^2 
     \\
  & \leq  \left ( 1 +  \frac{\beta_t}{3} \right ) \| \bar s_t - \nabla_1 f ( \bar x_{t}, y_{M,t}^\star  )  \|^2 + \left (1 + \frac{3}{\beta_t}  \right ) L_f^2 (  \| \bar x_t - \bar x_{t+1}\|^2  +  \| y_t^\star  -  y_{M,t+1}^\star \|^2 )
  \\
  & =  \left ( 1 +  \frac{\beta_t}{3} \right ) \|\bar s_t - \nabla_1 f ( \bar x_{t}, y_{M,t}^\star   )  \|^2 + \left  (1 + \frac{3}{\beta_t}  \right )\alpha_t^2 L_f^2 (1+ L_{y,M}^2)\| \bar z_t \|^2,
    \end{split}
\end{equation*}
where the first inequality uses the $L_f$-smoothness of $\nabla_1 f(x,y_M)$ and the second inequality uses the $L_{y,M}$-Lipschitz continuity of $y_{M}^\star(x)$ such that $\| y_M^\star(\bar x_t) - y_M^\star(\bar x_{t+1}) \|  \leq L_{y,M} \|\bar x_t - \bar x_{t+1} \|$ and the fact that $ \alpha_t \| \bar z_t\| = \| \bar x_t - \bar x_{t+1} \|$.

By substituting  the above inequality and \eqref{eq:st_2} into \eqref{eq:st_1}, we obtain 
\begin{equation*}
    \begin{split}
       &  \EE[ \|  \bar s_{t+1} - \nabla_1 f( \bar x_{t+1}, y_{M,t+1}^\star ) \|^2]  
       \\
              & \leq (1-\beta_t)\EE [ \| \bar s_t - \nabla_1 f (   \bar x_{t}, y_{M,t}^\star ) \|^2] + \frac{2\beta_t^2   \sigma_f^2}{K }    + \frac{7\alpha_t^2  L_f^2 (1 + L_{y,M}^2 )}{2\beta_t }  \EE[  \| \bar z_t \|^2  ] 
              \\
              & \quad + \frac{2\beta_t L_f^2 }{K} \sum_{k \in \cK} \left ( \| x_{t}^k -  \bar x_{t+1}  \|^2   + \| y_{M,t}^k -  y_{M,t+1}^\star   \|^2  \right )  ,
        \end{split}
\end{equation*}
where the last inequality uses the fact that $(1- \frac{\beta_t}{2}) ( 1+ \frac{\beta_t}{3})\leq 1 $ and $(1-\beta_t)(1- \frac{\beta_t}{2}) ( 1 +  \frac{3}{\beta_t } ) \leq \frac{7}{2\beta_t}$ when $T$ is sufficiently large.

Further, we can see 
\begin{equation}\label{eq:x_y_consensus}
\begin{split}
& 
\sum_{k \in \cK} \left ( \EE[ \| x_{t}^k -  \bar x_{t+1}  \|^2 ]  + \EE[ \| y_{M,t}^k -  y_{M,t+1}^\star   \|^2 ] \right ) 
\\
& \leq \sum_{k \in \cK}  \EE \left [ 2\| x_{t}^k -  \bar x_t\|^2 +2 \|  \bar x_t- \bar x_{t+1}  \|^2  +3 \| y_{M,t}^k -  \bar y_t \|^2 + 3\| \bar y_{M,t}  - y_{M,t}^\star \|^2  +  3\| y_{M,t}^\star - y_{M,t+1}^\star   \|^2   \right ]
\\
&  \leq \cO \left ( \frac{ K( \alpha_t^2 +  \beta_t^2) }{(1- \rho)^2 } \right ) + 2K(1+ L_{y,M}^2)\EE[ \|  \bar x_t- \bar x_{t+1}  \|^2]  + 3\sum_{k\in \cK} \EE[ \| \bar y_{M,t} - y_{M,t}^\star\|^2 ]
\\
& \leq  \cO \left ( \frac{ K( \alpha_t^2 +  \beta_t^2) }{(1- \rho)^2 } \right )  + 2K (1 + L_{y,M}^2)  \alpha_t^2 \EE[ \|\bar z_t  \|^2] +  3 K \EE[ \| \bar y_{M,t} - y_{M,t}^\star\|^2 ],
\end{split}
\end{equation}
implying that 
\begin{equation*}
    \begin{split}
       &  \EE[ \|  \bar s_{t+1} - \nabla_1 f( \bar x_{t+1}, y_{M,t+1}^\star ) \|^2]  
       \\
              & \leq (1-\beta_t)\EE [ \| \bar s_t - \nabla_1 f (   \bar x_{t}, y_{M,t}^\star ) \|^2] + \frac{2\beta_t^2   \sigma_f^2}{K }    + \frac{4\alpha_t^2  L_f^2 (1 + L_{y,M}^2 )}{\beta_t }  \EE[  \| \bar z_t \|^2  ] 
              \\
              & \quad +    \cO \left ( \frac{ \beta_t ( \alpha_t^2 +  \beta_t^2) }{(1- \rho)^2 } \right )  + 6\beta_t L_f^2  \EE[ \| \bar y_{M,t} - y_{M,t}^\star\|^2 ]. 
        \end{split}
\end{equation*}
This completes the proof. 
\\
(b) This part can be derived by following the similar analysis as part (a), we skip the details to avoid repetition. 
\end{proof}

\begin{lemma}\label{lemma:ut}
Suppose Assumptions \ref{assumption:SO}, \ref{assumption:W}, \ref{assumption:1}, , and \ref{assumption:2} hold and $T $ is sufficiently large, let $y_{m,t}^\star = y_m^\star(\bar x_t )$ for $m=1,\cdots,M$, then we have 
\\
\noindent 
(a) For $m=1,\cdots, M$, 
\begin{equation}\label{eq:u_multilevel}
    \begin{split}
    &    \EE[ \|  \bar u_{m,t+1} - \nabla_{12}^2 g_m( y_{m-1}^\star( \bar x_{t+1}), y_m^\star(\bar x_{t+1}) ) \|_F^2]  
        \\
              & \leq (1-\beta_t)\EE [ \| \bar u_{m,t} - \nabla_{12}^2  g_m (  y_{m-1}^\star ( \bar x_{t}), y_m^\star(\bar x_{t}) ) \|_F^2]  + \frac{2\beta_t^2   \sigma_{g,m}^2}{K }  +  \cO \left ( \frac{  \beta_t ( \alpha_t^2 +  \beta_t^2)  }{(1- \rho)^2 } \right ) 
              \\
              & \quad + \frac{4 \alpha_t^2 \widetilde L_{g,m}^2(L_{y,m-1}^2 + L_{y,m}^2) }{\beta_t }  \EE[  \| \bar z_t \|^2  ] 
      + 6\beta_t \widetilde L_{g,m}^2  \EE[ \| \bar y_{m,t} - y_m^\star(\bar x_{t})\|^2 ] 
              \\
              & \quad + 6\beta_t \widetilde L_{g,m-1}^2  \EE[ \| \bar y_{m-1,t} - y_{m-1}^\star(\bar x_{t})\|^2 ].
        \end{split}
\end{equation}
\\
\noindent
(b) For $m=1,\cdots, M$, for $j=1,\cdots, b$,
\begin{equation}\label{eq:v_multilevel}
    \begin{split}
    &    \EE[ \|  \bar v_{m,t+1,j} - \nabla_{22}^2 g_m(  y_{m-1}^\star(\bar x_{t+1}) , y_{m}^\star(\bar x_{t+1}) ) \|_F^2]  
        \\
              & \leq (1-\beta_t)\EE [ \| \bar v_{m,t,j} - \nabla_{22}^2 g_m (  y_{m-1}^\star(\bar x_t) , y_{m}^\star(\bar x_{t}) ) \|_F^2]   + \frac{2\beta_t^2   \sigma_{g,m}^2}{K }  +  \cO \left ( \frac{  \beta_t ( \alpha_t^2 +  \beta_t^2)  }{(1- \rho)^2 } \right ) 
              \\
              & \quad + \frac{4\alpha_t^2  \widetilde L_{g,m}^2 (L_{y,m-1}^2 + L_{y,m}^2 )}{\beta_t }  \EE[  \| \bar z_t \|^2  ] 
      + 6\beta_t \widetilde L_{g,m}^2  \EE[ \| \bar y_{m,t} - y_m^\star(\bar x_{t})\|^2 ]
               \\
              & \quad + 6\beta_t \widetilde L_{g,m-1}^2  \EE[ \| \bar y_{m-1,t} - y_{m-1}^\star(\bar x_{t})\|^2 ].
        \end{split}
\end{equation}
\end{lemma}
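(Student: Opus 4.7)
\begin{PROOF}[Plan for Lemma~\ref{lemma:ut}]
The proof will mirror the strategy of Lemma~\ref{lemma:error_multilevel}, since the updates for $u_{m,t}^k$ and $v_{m,t,j}^k$ have the same gossip+weighted-average structure as those for $s_t^k$ and $h_t^k$, differing only in what is being estimated (the cross/block Hessians rather than the outer gradients). The plan is to establish (a) first and then observe that (b) follows from the same template with $\nabla_{12}^2 g_m$ replaced by $\nabla_{22}^2 g_m$, since both share the Lipschitz constant $\widetilde L_{g,m}$ and the variance bound $\sigma_{g,m}^2$ under Assumption~\ref{assumption:2}.

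For part (a), I would first use the doubly stochastic property of $W$ to write
\[
\bar u_{m,t+1} \;=\; (1-\beta_t)\bar u_{m,t} \;+\; \frac{\beta_t}{K}\sum_{k\in\cK}\nabla_{12}^2 g_m^k(y_{m-1,t}^k,y_{m,t}^k;\xi_{m,t}^k),
\]
and then, denoting $\Delta_{m,t} = \tfrac{1}{K}\sum_k \nabla_{12}^2 g_m^k(y_{m-1,t}^k,y_{m,t}^k;\xi_{m,t}^k) - \nabla_{12}^2 g_m(y_{m-1,t+1}^\star,y_{m,t+1}^\star)$, expand
\[
\|\bar u_{m,t+1}-\nabla_{12}^2 g_m(y_{m-1,t+1}^\star,y_{m,t+1}^\star)\|_F^2
\]
into three pieces: a $(1-\beta_t)^2$ bias term, a $\beta_t^2\|\Delta_{m,t}\|_F^2$ noise term, and a cross term. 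Using the conditional independence of the samples $\xi_{m,t}^k$ across $k$, exactly as in equation~\eqref{eq:st_2}, the noise variance collapses to $2\sigma_{g,m}^2/K$ plus a deterministic Lipschitz residual. The cross term is then controlled by the Young inequality $2\langle a,b\rangle \le \tfrac{\beta_t}{2}\|a\|^2 + \tfrac{2}{\beta_t}\|b\|^2$ to produce the contraction factor $(1-\beta_t)$ and a $\tfrac{1}{\beta_t}$ multiplier on the drift term.

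The central technical step is bounding the drift between time $t$ and $t+1$ and the intra-network disagreement. For the drift, I would apply Assumption~\ref{assumption:2}(iii) (Frobenius-Lipschitzness of $\nabla_{12}^2 g_m^k$ with constant $\widetilde L_{g,m}$) together with Lemma~\ref{lemma:Lip_y_multilevel} to get
\[
\|\nabla_{12}^2 g_m(y_{m-1,t}^\star,y_{m,t}^\star) - \nabla_{12}^2 g_m(y_{m-1,t+1}^\star,y_{m,t+1}^\star)\|_F^2 \;\le\; 2\widetilde L_{g,m}^2(L_{y,m-1}^2+L_{y,m}^2)\|\bar x_t-\bar x_{t+1}\|^2,
\]
and then use $\bar x_{t+1}-\bar x_t=-\alpha_t\bar z_t$ to obtain the $\tfrac{\alpha_t^2}{\beta_t}\|\bar z_t\|^2$ term. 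For the disagreement, I would bound $\sum_k \EE[\|y_{m-1,t}^k-y_{m-1,t+1}^\star\|^2 + \|y_{m,t}^k-y_{m,t+1}^\star\|^2]$ by splitting through $\bar y_{m-1,t}, \bar y_{m,t}$ and the corresponding starred values, exactly as in the chain~\eqref{eq:x_y_consensus}. The consensus pieces $\sum_k\|y_{j,t}^k-\bar y_{j,t}\|^2$ are absorbed into $\cO(K\gamma_t^2/(1-\rho)^2)$ via Lemma~\ref{lemma:consensus_multilevel}, producing the $\cO(\beta_t(\alpha_t^2+\beta_t^2)/(1-\rho)^2)$ remainder, while the gaps $\|\bar y_{m-1,t}-y_{m-1,t}^\star\|^2$ and $\|\bar y_{m,t}-y_{m,t}^\star\|^2$ remain as the two explicit terms in the stated bound.

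The main obstacle I anticipate is the careful bookkeeping of two Lipschitz constants simultaneously: unlike in Lemma~\ref{lemma:error_multilevel}, where only $L_{y,M}$ appears (because $\nabla f$ is evaluated at $(x,y_M^\star(x))$ and the first coordinate is the decision variable itself), here both arguments of $\nabla_{12}^2 g_m$ and $\nabla_{22}^2 g_m$ move with $x$ through $y_{m-1}^\star(x)$ and $y_m^\star(x)$, so the perturbation in $\bar x_t$ propagates with factor $L_{y,m-1}^2+L_{y,m}^2$, and both inner-level gaps reappear on the right. Once this is tracked carefully, collecting the constants and using $(1-\beta_t)(1+\beta_t/3)(1-\beta_t/2)\le 1-\beta_t$ for large $T$ yields~\eqref{eq:u_multilevel}. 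Part (b) is then obtained by repeating the argument verbatim with $v_{m,t+1,j}^k$ replacing $u_{m,t+1}^k$ and $\nabla_{22}^2 g_m$ replacing $\nabla_{12}^2 g_m$; the noise cross terms across agents again vanish by independence and the Lipschitz/variance bounds in Assumption~\ref{assumption:2}(iii)--(iv) give identical constants, so the right-hand side has the same form as~\eqref{eq:u_multilevel}.
\end{PROOF}
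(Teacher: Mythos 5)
Your plan follows essentially the same route as the paper's proof: the same recursion for $\bar u_{m,t+1}$, the same three-term expansion with the cross term handled via conditional mean-zero noise and Young's inequality, the drift bounded through the $\widetilde L_{g,m}$-Lipschitzness of the block Hessians together with the $L_{y,m-1}$, $L_{y,m}$ Lipschitz constants of the best responses (yielding the $(L_{y,m-1}^2+L_{y,m}^2)\alpha_t^2\beta_t^{-1}\EE[\|\bar z_t\|^2]$ term), and the disagreement split through $\bar y_{m,t}$, $\bar y_{m-1,t}$ with Lemma~\ref{lemma:consensus_multilevel} absorbing the consensus pieces; part (b) is likewise a verbatim repetition in the paper. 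The proposal is correct and matches the paper's argument, including the key observation that both arguments of $\nabla_{12}^2 g_m$ move with $\bar x_t$.
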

\begin{proof} 
(a) By recalling Algorithm \ref{alg:1} Step 11 
and letting $\bar u_{m,t} = \frac{1}{K} \sum_{k \in \cK} u_{m,t}^k$, we have 
\begin{equation*}
\bar u_{m,t+1} =(1-\beta_t) \bar u_{m,t} + \frac{\beta_t}{K} \sum_{k \in \cK}  \nabla_{12}^2 g_m^k(y_{m-1,t}^k , y_{m,t}^k ; \xi_{m,t}^k ).
\end{equation*} 
Equivalently, we have 
\begin{equation*}
\begin{split}
\bar u_{m,t+1} - \nabla_{12}^2 g_m( y_{m-1,t+1}^\star, y_{m,t+1}^\star)  
 = (1-\beta_t) [\bar u_{m,t}  -\nabla_{12}^2 g_m(y_{m-1,t+1}^\star, y_{m,t+1}^\star )   ]  + \beta_t \Delta_{g,t},
\end{split}
\end{equation*}
where 
$$
\Delta_{g,t}   =  \Big ( \frac{1}{K}\sum_{k \in \cK}    \nabla_{12}^2 g_m^k(y_{m-1,t}^k, y_{m,t}^k ; \xi_{m,t}^k )   \Big ) - \nabla_{12}^2 g_m(y_{m-1,t+1}^\star, y_{m,t+1}^\star )   .
$$
This implies 
\begin{equation*}
    \begin{split}
       & \EE[ \|   \bar u_{m,t+1} - \nabla_{12}^2 g_m(y_{m-1,t+1}^\star, y_{m,t+1}^\star )     \|_F^2] 
       \\
   &  = (1-\beta_t)^2 \EE [ \| \bar u_{m,t}  -\nabla_{12}^2 g_m(y_{m-1,t+1}^\star, y_{m,t+1}^\star )  \|_F^2] + \beta_t^2 \EE [ \|  \Delta_{g,t}  \|_F^2] 
   \\
       & \quad + 2(1-\beta_t)\beta_t \EE \left  [ \lv \bar u_{m,t}  -\nabla_{12}^2 g_m(y_{m-1,t+1}^\star, y_{m,t+1}^\star )  ,   \Delta_{g,t}\rv  \right ]
       \\
          &  = (1-\beta_t)^2 \EE [ \| \bar u_{m,t}  -\nabla_{12}^2 g_m(y_{m-1,t+1}^\star, y_{m,t+1}^\star )  \|_F^2] + \beta_t^2 \EE [ \|  \Delta_{g,t}  \|_F^2] 
   \\
       & \quad + 2(1-\beta_t)\beta_t \EE \left  [ \lv \bar u_{m,t}  -\nabla_{12}^2 g_m(y_{m-1,t+1}^\star, y_{m,t+1}^\star )  ,  \tilde  \Delta_{g,t}\rv  \right ]
    \end{split}
    \end{equation*}
where 
$$
\tilde \Delta_{g,t} = \Big ( \frac{1}{K}\sum_{k \in \cK}    \nabla_{12}^2 g_m^k(y_{m-1,t}^k, y_{m,t}^k  )   \Big ) - \nabla_{12}^2 g_m(y_{m-1,t+1}^\star, y_{m,t+1}^\star ). 
$$
The $L_{g,m}$-Lipschitz continuity of $\nabla_{12}^2 g_m$ implies 
$$
\| \tilde \Delta_{g,t} \|_F^2 \leq L_{g,m}^2 \sum_{k \in \cK} \Big ( \| y_{m-1,t}^k - y_{m-1,t+1}^\star \|^2 + \| y_{m,t}^k  - y_{m,t+1}^\star\|^2 \Big ). 
$$
Combining the above inequalities and following \eqref{eq:st_1}, 
we  can see that 
\begin{equation}\label{eq:ut_1}
    \begin{split}
       & \EE[ \|   \bar u_{m,t+1} - \nabla_{12}^2 g_m(y_{m-1,t+1}^\star, y_{m,t+1}^\star )     \|_F^2] 
       \\
       &  = (1-\beta_t)^2 \EE [ \| \bar u_{m,t}  -\nabla_{12}^2 g_m(y_{m-1,t+1}^\star, y_{m,t+1}^\star )  \|_F^2] + \beta_t^2 \EE [ \|  \Delta_{g,t}  \|_F^2] 
   \\
       & \quad + 2(1-\beta_t)\beta_t \EE \left  [ \lv \bar u_{m,t}  -\nabla_{12}^2 g_m(y_{m-1,t+1}^\star, y_{m,t+1}^\star )  ,  \tilde  \Delta_{g,t}\rv  \right ]
    \\
 & \leq    (1-\beta_t)^2 \EE [ \| \bar u_{m,t}  -\nabla_{12}^2 g_m(y_{m-1,t+1}^\star, y_{m,t+1}^\star )  \|_F^2] + \beta_t^2 \EE [ \|  \Delta_{g,t}  \|_F^2] 
 \\
       & \quad + \frac{ (1-\beta_t)\beta_t }{2} \left  [ \EE  \| \bar u_{m,t}  -\nabla_{12}^2 g_m(y_{m-1,t+1}^\star, y_{m,t+1}^\star )  \|_F^2 + 4  \EE[ \| \tilde  \Delta_{g,t}\|_F^2 \right ]
    \\
    & \leq (1-\beta_t) \left (1- \frac{\beta_t}{2} \right ) \EE [ \| \bar u_{m,t}  -\nabla_{12}^2 g_m(y_{m-1,t+1}^\star, y_{m,t+1}^\star )    \|_F^2  ] + \beta_t^2 \EE [ \|  \Delta_{g,t}     \|_F^2] 
\\
& \quad + 2(1-\beta_t)\beta_t   \EE[ \|    \tilde \Delta_{g,t}  \|_F^2 ] 
       \\
       & \leq (1-\beta_t) \left (1- \frac{\beta_t}{2} \right ) \EE[ \| \bar u_{m,t}  -\nabla_{12}^2 g_m(y_{m-1,t+1}^\star, y_{m,t+1}^\star )  \|_F^2  ] + \beta_t^2  \EE [ \|  \Delta_{g,t}  \|_F^2]  \\
       & \quad  + \frac{2\beta_t(1-\beta_t)  L_{g,m}^2 }{K}  \sum_{k \in \cK } \left ( \EE[ \| y_{m-1,t}^k - y_{m-1,t+1}^\star \|^2  ] + \EE[ \|   y_{m,t}^k -  y_{m,t+1}^\star  \|^2 ]  \right ) ,
    \end{split}
\end{equation}
where $\lv \cdot , \cdot  \rv$ is the inner product.
By following \eqref{eq:st_2}, we obtain 
\begin{equation}\label{eq:ut_2}
\begin{split}
\EE[ \| \Delta_{g,t} \|_F^2 ]
 & \leq    \frac{2}{K^2}  \sum_{k \in \cK} \EE[ \| \nabla_{12}^2 g_m^k(y_{m-1,t}^k, y_{m,t}^k ; \xi_{m,t}^k )  - \nabla_{12}^2 g_m^k(y_{m-1,t}^k, y_{m,t}^k  )  \|_F^2 ] 
 \\
 & \quad +  \frac{2}{K} \sum_{k \in \cK} \EE[ \|  \nabla_{12}^2 g_m^k ( y_{m-1,t}^k, y_{m,t}^k  ) - \nabla_{12}^2 g_m^k( y_{m-1,t+1}^\star,  y_{m,t+1}^\star ) 
 \|_F^2 ] 
 \\
 & \leq \frac{2\sigma_{g,m}^2}{K} +  \frac{2 \widetilde L_{g,m}^2 }{K} \sum_{k\in \cK}  \left ( \EE[ \| y_{m-1,t}^k -  y_{m-1,t+1}^\star  \|^2 ] 
 + \EE[ \| y_{m,t}^k -  y_{m,t+1}^\star   \|^2 ] \right ).
 \end{split}
\end{equation}
Using the fact that $\| A+B\|_F^2 \leq (1+ \frac{\beta_t}{3}) \|A \|_F^2 + (1+ \frac{3}{\beta_t}) \|B \|_F^2$ and Lemma~\ref{lemma:Lip_y_multilevel} that $\| y_{m,t+1}^\star - y_{m,t}^\star \| \leq L_{y,m} \| \bar x_{t+1} - \bar x_t\| = L_{y,m} \| \bar z_t\| $, we have 
\begin{equation*}
\begin{split}
 & \| \bar u_{m,t} - \nabla_{12}^2 g_m( y_{m-1,t+1}^\star, y_{m,t+1}^\star)  \|^2
 \\
 & \leq \left ( 1 +  \frac{\beta_t}{3} \right )  \| \bar u_{m,t}  -\nabla_{12}^2 g_m( y_{m-1,t}^\star, y_{m,t}^\star )    \|^2  \\
 & \quad + \left (1 + \frac{3}{\beta_t}  \right ) \| \nabla_{12}^2 g_m( y_{m-1,t}^\star,  y_{m,t}^\star )   -\nabla_{12}^2 g_m(y_{m-1,t+1}^\star,  y_{m,t+1}^\star ) \|_F^2 
     \\
  & \leq  \left ( 1 +  \frac{\beta_t}{3} \right ) \| \bar u_{m,t}  -\nabla_{12}^2 g_m( y_{m-1,t}^\star, y_{m,t}^\star)   \|^2 
  \\
 &  \quad + \left (1 + \frac{3}{\beta_t}  \right ) \widetilde L_{g,m}^2  \Big (  \| y_{m-1,t}^\star -  y_{m-1,t+1}^\star\|^2  +  \| y_{m,t}^\star  -  y_{m,t+1}^\star \|^2 \Big  )
  \\
  & =  \left ( 1 +  \frac{\beta_t}{3} \right ) \|\bar u_{m,t}  -\nabla_{12}^2 g_m ( y_{m-1,t}^\star, y_{m,t}^\star )    \|^2 + \left  (1 + \frac{3}{\beta_t}  \right )\alpha_t^2 \widetilde L_{g,m}^2 (L_{y,m-1}^2 + L_{y,m}^2)\| \bar z_t \|^2. 
    \end{split}
\end{equation*}
By substituting  the above inequality and \eqref{eq:ut_2} into \eqref{eq:ut_1}, we obtain 
\begin{equation}\label{eq:ut_3}
    \begin{split}
       &  \EE[ \|   \bar u_{m,t+1} - \nabla_{12}^2 g_m(y_{m-1,t+1}^\star, y_{m,t+1}^\star)     \|_F^2 ]  
       \\
              & \leq (1-\beta_t)\EE [ \| \bar u_{m,t}  -\nabla_{12}^2 g_m(y_{m-1,t}^\star, y_{m,t}^\star)    \|_F^2  ] + \frac{2\beta_t^2   \sigma_{g,m}^2}{K }  
              \\
              & \quad + \frac{7  \widetilde L_{g,m}^2 }{2\beta_t }  \EE \Big [ \| y_{m-1,t}^\star - y_{m-1,t+1}^\star\|^2 +  \| y_{m,t}^\star - y_{m,t+1}^\star\|^2 \Big ] 
              \\
              & \quad + \frac{2\beta_t \widetilde L_{g,m}^2 }{K} \sum_{k \in \cK} \EE \left ( \| y_{m-1,t}^k -  y_{m-1,t+1}^\star  \|^2   + \| y_{m,t}^k -  y_{m,t+1}^\star   \|^2  \right ) .
        \end{split}
\end{equation}

We then observe that 
\begin{equation*}
\begin{split}
& 
\sum_{k \in \cK} \left ( \EE[ \| y_{m-1,t}^k -  y_{m-1,t+1}^\star  \|^2 ]  + \EE [ \| y_{m,t}^k -  y_{m,t+1}^\star   \|^2   ] \right ) 
\\
& \leq \sum_{k \in \cK} 2 \EE \left [ \|  y_{m-1,t}^k -  y_{m-1,t}^\star\|^2 + \|  y_{m-1,t}^\star - y_{m-1,t+1}^\star  \|^2  \right ] 
\\
& \quad +  \sum_{k \in \cK} 3 \EE \left [ \|  y_{m,t}^k - \bar y_{m,t} \|^2 + \| \bar y_{m,t}   -  y_{m,t}^\star  \|^2  +  \|  y_{m,t}^\star - y_{m,t+1}^\star   \|^2   \right ]
\\
&  \leq \cO \left ( \frac{ K( \alpha_t^2 +  \beta_t^2) }{(1- \rho)^2 } \right ) + 2K(L_{y,m-1}^2 + L_{y,m}^2)\EE[ \|  \bar x_t- \bar x_{t+1}  \|^2] 
\\
& \quad + 2K \EE[ \| \bar y_{m-1,t} - y_{m-1,t}^\star\|^2 ] +  3K \EE[ \| \bar y_{m,t} - y_{m,t}^\star\|^2 ] 
\\
& \leq  \cO \left ( \frac{ K( \alpha_t^2 +  \beta_t^2) }{(1- \rho)^2 } \right )  + 2K (L_{y,m-1}^2 + L_{y,m}^2)  \alpha_t^2 \EE[ \|\bar z_t  \|^2] 
\\
& \quad + 2K \EE[ \| \bar y_{m-1,t} - y_{m-1,t}^\star\|^2 ] +  3K \EE[ \| \bar y_{m,t} - y_{m,t}^\star\|^2 ], 
\end{split}
\end{equation*}
where the second inequality comes from Lemma~\ref{lemma:Lip_y_multilevel} that $\|  y_{j,t}^\star - y_{j,t+1}^\star   \|  \leq L_{y,j} \| \bar x_t - \bar x_{t+1}\| \leq \alpha_t \| \bar z_t \|$ for $j=1,\cdots, M$. By combining the above inequality with \eqref{eq:ut_3}, we conclude that 
\begin{equation*}
    \begin{split}
       &  \EE[ \|   \bar u_{m,t+1} - \nabla_{12}^2 g_m(y_{m-1,t+1}^\star, y_{m,t+1}^\star)     \|_F^2 ]  
       \\
              & \leq (1-\beta_t)\EE [ \| \bar u_{m,t}  -\nabla_{12}^2 g_m(y_{m-1,t}^\star, y_{m,t}^\star)    \|_F^2  ] + \frac{2\beta_t^2   \sigma_{g,m}^2}{K }  
              \\
              & \quad + \Big (  \frac{7 \alpha_t^2 \widetilde L_{g,m}^2 (L_{y,m-1}^2 + L_{y,m}^2)  }{2\beta_t }   + 4 (L_{y,m-1}^2 + L_{y,m}^2)  \alpha_t^2 \beta_t \widetilde L_{g,m}^2   \Big ) \EE[ \| \bar z_t \|^2]
              \\
              & \quad + 6\beta_t \widetilde L_{g,m}^2  \EE[ \| \bar y_{m,t} - y_m^\star(\bar x_{t})\|^2 ]  + 6\beta_t \widetilde L_{g,m-1}^2  \EE[ \| \bar y_{m-1,t} - y_{m-1}^\star(\bar x_{t})\|^2 ].
        \end{split}
\end{equation*}
Recall that $\alpha_t, \beta_t \to 0$ when $T\to \infty$. 
The desired result can be acquired by using the fact that $\frac{7 \alpha_t^2 \widetilde L_{g,m}^2 (L_{y,m-1}^2 + L_{y,m}^2)  }{2\beta_t }   + 4 (L_{y,m-1}^2 + L_{y,m}^2)  \alpha_t^2 \beta_t \widetilde L_{g,m}^2   
\leq \frac{4 \alpha_t^2 \widetilde L_{g,m}^2(L_{y,m-1}^2 + L_{y,m}^2) }{\beta_t }  $ when $T $ is large. 
\\
\noindent (b) This part can be derived following the similar analysis as part (a), we skip the details to avoid repetition. 
\end{proof}

%%%%%%%%%%%%%%%%%%
%%%%%%%%%%%%%%%%%%
\subsection{Proof of Theorem \ref{thm:nonconvex_multilevel}}\label{app:proof_nonconvex_multilevel}
To characterize the convergence properties of nonconvex multilevel problems, we assume Assumptions~\ref{assumption:W},  \ref{assumption:SO}, \ref{assumption:1}, and \ref{assumption:2} hold and the step-sizes follow~\eqref{eq:stepsize_multilevel} that
\begin{equation*}
\alpha_t = C_0 \sqrt{\tfrac{K}{T}}, \   \beta_t = \gamma_t = \sqrt{\tfrac{K}{T}}, \ \text{ and } b = 3 \lceil \log_{\frac{1}{1-\kappa_g}} T \rceil ,    \text{ for all }t=0,1,\cdots, T,
\end{equation*} 
where $C_0>0$ is a small constant and the number of iterations $T $ is large   such that $\beta_t ,\gamma_t \leq 1$ and 
\begin{equation*}
  \Upsilon(C_0,T) =   1 - \frac{C_0 \sqrt{K}L_F}{\sqrt{T}}   -  C_0^2 \Big ( \tilde C    + \sum_{m=1}^M \frac{3r_{5,m}L_{y,m}^2} {\mu_{g,m}^2 }\Big )  \geq 0 , 
\end{equation*}
with $C_2,\{ C_{3,m} \}_{m=1}^M , \{ C_{4,m,j} \}_{m=1,\cdots, M}^{b =1,\cdots,b} $ are constants defined in \eqref{def:constant_C}, 
\begin{equation}\label{eq:r_multilevel}
\begin{split}
 \tilde C & = 4  \big (L_f^2(r_1 + r_2)  + \sum_{m=1}^M \widetilde L_{g,m}^2 (r_{3,m} + \sum_{j=1}^b r_{4,m,j} ) (L_{g,m-1}^2  + L_{g,m}^2 ) \big )
 \\
r_1 & = 4, r_2 = C_2  ,r_{m,3} =  C_{m,3} , r_{m,4,j} = C_{m,4,j},
\text{ for } m \in [M], 
\\
r_{5,M} & = 6 \big  ( L_f^2(r_1 + r_2) +  \widetilde L_{g,M}^2(r_{M,3} + r_{M,4})  \big ),
\end{split}
\end{equation}
and for $m =M-1,\cdots,1$,
\begin{equation*}
\begin{split}
r_{5,m} & = \frac{r_{5,m+1}L_{g,m}^2  }{ \mu_{g,m}^2} + 6  \Big ( \tilde L_{g,m}^2 (r_{m,3} + \sum_{j=1}^b r_{m,4,j})  + \tilde L_{g,m+1}^2 (r_{m+1,3} + \sum_{j=1}^b r_{m+1,4,j})  \Big ).    
 \end{split}
\end{equation*}
Note that $C_0$ adjusts the stepsize $\alpha_t$. To satisfy the above condition, we can set $C_0$ as a small constant such that $C_0^2 \Big ( \tilde C    + \sum_{m=1}^M \frac{3r_{5,m}L_{y,m}^2}{\mu_{g,m}^2 }\Big ) \leq 1/2$ and the set total number of iterations $T \geq 4 C_0^2 K L_F^2$.
\begin{proof}
We write $y_{m,t}^\star = y_m^\star(\bar x_t)$ and start our analysis by considering the term $ \| \bar y_{m,t} -  y_{m}^\star(\bar x_t)\|^2$ for each level $m =1,\cdots,M$. By rearranging  \eqref{eq:yt_recursion_multilevel}, we have
\begin{equation}\label{eq:nonconvex_yt_multilevel}
\begin{split}
  \EE [ \| \bar y_{m,t} -  y_{m}^\star (\bar x_t) \|^2 ]& \leq \frac{\EE \left [  \| \bar y_{m,t} -  y_{m}^\star(\bar x_t)  \|^2 -   \| \bar y_{m,t+1} - y_m^\star(\bar x_{t+1})\|^2  \right ] }{\gamma_t \mu_{g,m}}  + \cO \left (\frac{ \alpha_t^2 + \gamma_t^2}{(1- \rho)^2} \right )
   \\
   & \quad  + \frac{2\gamma_t  C_{g,m}^2}{\mu_{g,m}   K}  + \frac{3L_{y,m}^2 \alpha_t^2 }{\gamma_t^2 \mu_{g,m}^2 } \EE [\| \bar z_t \|^2 ] + \frac{L_{g,m}^2 \EE[ \| \bar y_{m-1,t} - y_{m-1}^\star (\bar x_t)   \|^2] }{ \mu_{g,m}^2} .
   \end{split}
\end{equation}
Letting $r_1 = 4, r_2 = C_2  ,r_{m,3} =  C_{m,3} , r_{m,4,j} = C_{m,4,j} $, and $r_{5,M} = 6 \big  ( L_f^2(r_1 + r_2) +  \widetilde L_{g,M}^2(r_{M,3} + r_{M,4})  \big )  $ be the constants defined within \eqref{eq:r_multilevel},
we define a random variable
\begin{equation*}
\begin{split}
    P_t  & =   \tfrac{2}{\alpha_t}F(\bar x_t)  + \tfrac{r_1}{\beta_t}\| \bar s_t - \nabla_1 f (   \bar x_{t}, y_{M}^\star (\bar x_t)  ) \|^2 
    +  \tfrac{r_2}{\beta_t}\| \bar h_t - \nabla_2 f (   \bar x_{t}, y_{M}^\star  (\bar x_t)  )  \|^2
    \\
    & \quad + \sum_{m=1}^M \tfrac{r_{m,3}}{\beta_t}\| \bar u_{m,t} - \nabla_{12}^2  g_m (  y_{m-1,t}^\star , y_{m}^\star (\bar x_t)   ) \|_F^2  + \sum_{m=1}^M \sum_{j=1}^b \tfrac{r_{m,4,j}}{\beta_t} 
    \| \bar v_{m,j,t} - \nabla_{22}^2 g_m (  y_{m-1,t}^\star , y_{m}^\star (\bar x_t)   ) \|_F^2 
    .
    \end{split}
\end{equation*}
Here we observe that $P_0 \leq \cO \left (\sqrt{ \frac{T}{K}}\right)$ and $P_t \geq \frac{2}{\alpha_t} F(x^*) = \frac{2}{C_0} \sqrt{ \frac{T}{K}} F(x^*) $.
By multiplying $r_1 \beta_t^{-1}$, $r_2\beta_t^{-1} $, $r_{3,m} \beta_t^{-1}$, and $r_{4,j}\beta_t^{-1}$ to both sides of \eqref{eq:s_multilevel}, \eqref{eq:h_multilevel}, \eqref{eq:u_multilevel}, and \eqref{eq:v_multilevel}, respectively, and  combining with \eqref{eq:nonconvex_main_multilevel}, we obtain 
   \begin{equation*}
\begin{split}  
& \EE[ \| \nabla F(\bar x_t)\|^2]  + \EE [ P_{t+1} ]
\\
& \leq \EE[P_t]
 + \cO \left ( \frac{ \alpha_t^2 + \beta_t^2 }{(1- \rho)^2}\right ) +  r_{5,M} \EE[ \| \bar y_{M,t} - y_{M}^\star (\bar x_t)  \|^2 ] 
 \\
 & \quad +\sum_{m=1}^{M-1} 6  \Big ( \tilde L_{g,m}^2 (r_{m,3} + \sum_{j=1}^b r_{m,4,j})  + \tilde L_{g,m+1}^2 (r_{m+1,3} + \sum_{j=1}^b r_{m+1,4,j})  \Big ) \EE[ \| \bar y_{m,t} -y_m^\star(\bar x_t)\|^2] 
 \\
 & \quad + \cO \Big (  \frac{M \beta_t   }{K }   \Big )
- \left ( 1 - \alpha_t L_F   - \frac{\alpha_t^2  \tilde C }{\beta_t^2 }  \right ) \EE[  \|\bar z_t \|^2],
  \end{split}
\end{equation*}
where 
\begin{equation*}
    \begin{split}
        & r_{5,M} = 6 \big  ( L_f^2(r_1 + r_2) +  L_{g,M}^2(r_{3,M} + \sum_{j=1}^b r_{4,M,j})  \big ) 
        \\
    \text{ and }    &  \tilde C = 4  \big (L_f^2(r_1 + r_2)  + \sum_{m=1}^M \widetilde L_{g,m}^2 (r_{3,m} + \sum_{j=1}^b r_{4,m,j} ) (L_{g,m-1}^2  + L_{g,m}^2 ) \big )
    \end{split}
\end{equation*}
are constants.
By multiplying $r_{5,M}$ to both sides of \eqref{eq:nonconvex_yt_multilevel} with $m=M$, we have 
   \begin{equation*}
\begin{split}  
& \EE[ \| \nabla F(\bar x_t)\|^2]  + \EE [ P_{t+1} ] +  \frac{r_{5,M}}{\gamma_t \mu_{g,M}} \EE [ \| \bar y_{M,t+1} -  y_M^\star(\bar x_{t+1})\|^2 ]
\\
& \leq \EE[P_t] +  \frac{r_{5,M}}{\gamma_t  \mu_{g,M}} \EE [ \| \bar y_{M,t} -  y_M^\star(\bar x_{t})\|^2 ]
 + \cO \left ( \frac{ \alpha_t^2 + \beta_t^2 }{(1- \rho)^2}\right ) 
 +  \frac{r_{5,M}L_{g,M-1}^2  }{ \mu_{g,M-1}^2}  \EE[ \| \bar y_{M-1,t} - y_{M-1}^\star (\bar x_t)  \|^2]
 \\
 & \quad +\sum_{m=1}^{M-1} 6  \Big ( \tilde L_{g,m}^2 (r_{3,m} + \sum_{j=1}^b r_{4,m,j})  + \tilde L_{g,m+1}^2 (r_{3,m+1} + \sum_{j=1}^b r_{4,m+1,j})  \Big ) \EE[ \| \bar y_{m,t} -y_m^\star(\bar x_t)\|^2] 
 \\
 & \quad + \cO \Big (  \frac{M \beta_t   }{K }   \Big )
- \left ( 1 - \alpha_t L_F   - \frac{\alpha_t^2  \tilde C }{\beta_t^2 }   - \frac{3r_{5,M}L_{y,M}^2 \alpha_t^2 }{\gamma_t^2 \mu_{g,M}^2 } \right ) \EE[  \|\bar z_t \|^2]. 
  \end{split}
\end{equation*}
We then recursively apply the above process for $m=M-1,\cdots,1$ as follows.
(i) set $r_{5,m} = \frac{r_{5,m+1}L_{g,m}^2  }{ \mu_{g,m}^2} + 6  \Big ( \tilde L_{g,m}^2 (r_{m,3} + \sum_{j=1}^b r_{m,4,j})  + \tilde L_{g,m+1}^2 (r_{m+1,3} + \sum_{j=1}^b r_{m+1,4,j})  \Big )$; (ii) multiply $r_{5,m}$ to both sides of \eqref{eq:nonconvex_yt_multilevel} and combine with the above inequality. This process leads to 

  \begin{equation*}
\begin{split}  
& \EE[ \| \nabla F(\bar x_t)\|^2]  + \EE [ P_{t+1} ] + \sum_{m=1}^M \frac{r_{5,m}}{\gamma_t \mu_{g,m}} \EE [ \| \bar y_{m,t+1} -  y_m^\star(\bar x_{t+1})\|^2 ]
\\
& \leq \EE[P_t] +  \sum_{m=1}^M \frac{r_{5,m}}{\gamma_t \mu_{g,m}}\EE [ \| \bar y_{m,t} -  y_m^\star(\bar x_{t})\|^2 ]
 + \cO \left ( \frac{ M(\alpha_t^2 + \beta_t^2) }{(1- \rho)^2}\right ) 
 \\
 & \quad + \cO \Big (  \frac{M \beta_t   }{K }   \Big )
- \left ( 1 - \alpha_t L_F    - \frac{\alpha_t^2  \tilde C }{\beta_t^2 }   - \sum_{m=1}^M \frac{3r_{5,m}L_{y,m}^2 \alpha_t^2 }{\gamma_t^2 \mu_{g,m}^2 } \right ) \EE[  \|\bar z_t \|^2]. 
  \end{split}
\end{equation*}
Recall that $\alpha_t = C_0\sqrt { \frac{K}{T} } $ and $\beta_t = \gamma_t = \sqrt { \frac{K}{T} } $, by substituting the step-sizes into the above inequality, we further obtain 
  \begin{equation*}
\begin{split}  
& \EE[ \| \nabla F(\bar x_t)\|^2]  + \EE [ P_{t+1} ] + \sum_{m=1}^M \frac{r_{5,m}}{\mu_{g,m}}\sqrt{\frac{T}{K}} \EE [ \| \bar y_{m,t+1} -  y_m^\star(\bar x_{t+1})\|^2 ]
\\
& \leq \EE[P_t] +  \sum_{m=1}^M \frac{r_{5,m}}{\mu_{g,m}}\sqrt{\frac{T}{K}} \EE [ \| \bar y_{m,t} -  y_m^\star(\bar x_{t})\|^2 ]
 + \cO \left ( \frac{ M(\alpha_t^2 + \beta_t^2) }{(1- \rho)^2}\right ) 
 \\
 & \quad + \cO \Big (  \frac{M \beta_t   }{K }   \Big )
- \underbrace{ \left ( 1 - \frac{C_0 \sqrt{K}L_F}{\sqrt{T}}   -  C_0^2 \Big ( \tilde C    + \sum_{m=1}^M \frac{3r_{5,m}L_{y,m}^2} {\mu_{g,m}^2 }\Big )   \right ) }_{\Upsilon(C_0,T) }\EE[  \|\bar z_t \|^2]. 
  \end{split}
\end{equation*}
We then observe that for large $T$, there exists a small constant $\tilde C_0 >0$ such that $\Upsilon(C_0,T) \geq0$ for all $C_0 \leq \tilde C_0$.
 In such scenario, we sum the above inequality over $t=0,1,\cdots, T-1$ and conclude that 
  \begin{equation*}
\begin{split}
& \frac{1}{T}\sum_{t=0}^{T-1} \EE[ \| \nabla F(\bar x_t)\|^2] 
\\
& \leq \frac{P_0 + \sum_{m=1}^M \frac{r_{5,m}}{\mu_{g,m}}\sqrt{\frac{T}{K}}  \| \bar y_{m,0} -  y_m^\star(\bar x_{0})\|^2  - \EE[P_{T}] }{T}  +  \cO \Big ( \frac{M }{\sqrt{ TK} } \Big )   + \cO \left ( \frac{KM}{T (1- \rho)^2 }\right ) 
\\
 & \leq  \cO \Big ( \frac{M}{\sqrt{ TK} } \Big )   + \cO \left ( \frac{KM}{T (1- \rho)^2 }\right ),
  \end{split}
\end{equation*}
where the last inequality applies the facts $P_0 +  \sum_{m=1}^M \frac{r_{5,m}}{\mu_{g,m}}\sqrt{\frac{T}{K}}  \| \bar y_{m,0} -  y_m^\star(\bar x_{0})\|^2 \leq \cO(M \sqrt{ \frac{T}{K} })$ and $P_T \geq \frac{2}{C_0} \sqrt{ \frac{T}{K} }  F(x^*)$. 
This completes the proof. 
\end{proof}

 %%%%%%%%%%%%%%
 %%%%%%%%%%%%%%

\section{Proof of Results for $\mu$-PL Objectives}\label{app:proof_PL_multilevel}
To characterize the convergence properties of $\mu$-PL multilevel problems, we assume Assumptions~\ref{assumption:SO}, \ref{assumption:W},   \ref{assumption:1}, \ref{assumption:2}, and \ref{assumption:PL}  hold. We set $b = 3 \lceil \log_{\frac{1}{1-\kappa_g}} T \rceil $, consider the scenario where step-sizes follow \eqref{eq:stepsize_PL} such that 
\begin{equation*}
\alpha_t = \frac{2}{\mu(C_1 + t)},    \text{ and } \beta_t =  \gamma_t = \frac{C_1}{ C_1 + t},  \ \ \text{ for } 1\leq t \leq T,
\end{equation*}
where $C_1 >0$ is  a large constant making 
$$
\Psi(C_1) =  \frac{1}{2} -  \frac{ 2L_F}{\mu C_1 }
  - \frac{2 \hat C }{ \mu C_1  }  -  \sum_{m=1}^{M}\frac{6z_{5,m} L_{y,m}^2 }{ \mu_{g,m} \mu C_1 }  > 0 , 
$$
where $C_2,\{ C_{3,m} \}_{m=1}^M , \{ C_{4,m,j} \}_{m=1,\cdots, M}^{b =1,\cdots,b} $ are constants defined in \eqref{def:constant_C}, 
\begin{equation}\label{def:z_multilevel}
    \begin{split}
    \hat C & = 4 \big (L_f^2(z_1 + z_3)  + L_g^2 (z_{3,M} + \sum_{1\leq j \leq b}z_{4,M,j} )  \big )(1 + L_{y,M}^2 ), 
   z_1  = \frac{4}{\mu( C_1 - 2 ) } , 
   \\  z_2  & = \frac{2 C_2  }{\mu( C_1 -2 )} ,   z_{3,m}   = \frac{2C_{3,m}}{ \mu( C_1   - 2 ) }, 
 z_{4,m,j}  =\frac{2C_{4,m,j}}{ \mu( C_1   - 2) }  ,  \forall 1\leq m \leq M,
    \end{split}
\end{equation}
and 
\begin{equation}\label{def:Z_multilevel}
    \begin{split}
       Z_{M} & =  6\big  ( L_f^2(z_1 + z_2) +  L_{g,M}^2(z_{3,M} + \sum_{j=1}^b z_{4,M,j})  \big ),   z_{5,M}  =   \frac{ Z_{M}}{\mu_{g,M} (C_1 - 2/\mu)},
   \\ Z_{m} & = 6  \Big ( \tilde L_{g,m}^2 (z_{3,m} + \sum_{j=1}^b z_{4,m,j})  + \tilde L_{g,m+1}^2 (z_{3,m+1} + \sum_{j=1}^b z_{4,m+1,j})  \Big ), 
   \\
   z_{5,m} & = (Z_m + \frac{z_{5,m+1} L_{g,m+1}^2 }{ \mu_{g,m+1}})(\mu_{g,m}(C_1 - 2/\mu) )^{-1}, \forall 1\leq m \leq M-1. 
    \end{split}
\end{equation}

%%%%%%%%%%%%%%%%
%%%%%%%%%%%%%%%
%%%%%%%%%%%%%%%%%

\subsection{Lemma \ref{lemma:PL_01} and Its Proof} \label{app:main_PL}

\begin{lemma}\label{lemma:PL_01}
Suppose Assumptions \ref{assumption:SO}, \ref{assumption:W}, \ref{assumption:1}, \ref{assumption:2}, and \ref{assumption:PL}  hold. We have 
 \begin{equation}\label{eq:PL_multilevel_01}
\begin{split} 
	  & \EE[F(\bar{x}_{t+1})]  - F^* 
	  \\
	  &  \leq \big (1   - \alpha_t \mu  \big)\E[ F(\bar x_t) -F^*]  
  -  \frac{\alpha_t}{2}(1 - \alpha_t L_F ) \EE[ \|  \bar z_t  
\|^2  ]   + 2\alpha_t  \EE [ \|  \nabla_1 f( \bar x_t ,   y_{M,t}^\star    ) - \bar s_t \|^2 ]  
\\
 & \quad +  \frac{C_{2}\alpha_t }{2} \EE[  \| \nabla_2 f(  \bar x_t  ,   y_{M,t}^\star  ) -  \bar h_t \|^2 ]    + \sum_{m=1}^M  \frac{C_{3,m}\alpha_t }{2} \EE[ \|  \nabla_{12}^2 g_m( y_{m-1,t}^\star  , y_{m,t}^\star   ) - \bar u_{m,t} \|_F^2 ]   
 \\
 & \quad 
+    \sum_{m=1}^M \sum_{j=1}^b \frac{C_{4,m,j}\alpha_t }{2} \EE[ \| \nabla_{22}^2 g_m(y_{m-1,t}^\star, y_{m,t}^\star  ) -  \bar v_{m,t,j} \|_F^2]  + \cO \left ( \frac{M\alpha_t \beta_t^2 }{ (1- \rho)^2 }\right )
. 
  \end{split}
\end{equation}
where $C_{2} , C_{3,m} ,C_{4,m,j}$ are constants defined in \eqref{def:constant_C}. 
\end{lemma}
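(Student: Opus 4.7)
The plan is to derive Lemma \ref{lemma:PL_01} as a direct consequence of the one-step inequality already established in Lemma \ref{lemma:nonconvex_main_multilevel}, by invoking the PL condition to convert the gradient-norm term into a contraction on the suboptimality gap $F(\bar x_t) - F^*$.

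First, I would start from the inequality \eqref{eq:nonconvex_main_multilevel} produced by Lemma \ref{lemma:nonconvex_main_multilevel}, which holds without any convexity assumption and is the correct common ancestor for both the nonconvex and the PL rate. Moving the telescoping term to one side and multiplying through by $\alpha_t/2$, I obtain
\begin{equation*}
\begin{split}
\EE[F(\bar x_{t+1})] &\leq \EE[F(\bar x_t)] - \tfrac{\alpha_t}{2}\EE[\|\nabla F(\bar x_t)\|^2] - \tfrac{\alpha_t}{2}(1-\alpha_t L_F)\EE[\|\bar z_t\|^2] \\
&\quad + 2\alpha_t \EE[\|\nabla_1 f(\bar x_t, y_{M,t}^\star)-\bar s_t\|^2] + \tfrac{C_2\alpha_t}{2}\EE[\|\nabla_2 f(\bar x_t, y_{M,t}^\star)-\bar h_t\|^2] \\
&\quad + \sum_{m=1}^M \tfrac{C_{3,m}\alpha_t}{2}\EE[\|\nabla_{12}^2 g_m(y_{m-1,t}^\star, y_{m,t}^\star)-\bar u_{m,t}\|_F^2] \\
&\quad + \sum_{m=1}^M\sum_{j=1}^b \tfrac{C_{4,m,j}\alpha_t}{2}\EE[\|\nabla_{22}^2 g_m(y_{m-1,t}^\star, y_{m,t}^\star)-\bar v_{m,t,j}\|_F^2] + \cO\Big(\tfrac{M\alpha_t \beta_t^2}{(1-\rho)^2}\Big),
\end{split}
\end{equation*}
which matches the right-hand side of the target inequality in every term \emph{except} for the $(1-\alpha_t\mu)\E[F(\bar x_t)-F^*]$ contraction.

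Second, I would apply the PL condition (Assumption \ref{assumption:PL}), which states $\|\nabla F(\bar x_t)\|^2 \geq 2\mu (F(\bar x_t)-F^*)$. Substituting this lower bound into $-\tfrac{\alpha_t}{2}\EE[\|\nabla F(\bar x_t)\|^2]$ yields the upper bound $-\alpha_t\mu\,\EE[F(\bar x_t)-F^*]$. Subtracting $F^*$ from both sides and collecting the $\EE[F(\bar x_t)]-F^*$ terms produces the desired contraction factor $(1-\alpha_t\mu)$. The remaining five error terms and the $\cO(\alpha_t\beta_t^2/(1-\rho)^2)$ consensus remainder are unchanged by this manipulation, giving exactly \eqref{eq:PL_multilevel_01}.

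There is essentially no hard step here; the entire content of the lemma is the bookkeeping just described, and all the heavy lifting (Lipschitz continuity of $y_m^\star$, the Hessian-inverse error bound, and the consensus decay) was already packaged into Lemma \ref{lemma:nonconvex_main_multilevel}. The only mild point to verify is that $\alpha_t\mu\le 1$ so that $(1-\alpha_t\mu)\in[0,1)$ is a genuine contraction, which is automatic for the PL step-size choice $\alpha_t = 2/(\mu(C_1+t))$ used later with $C_1$ large; this does not enter the statement of Lemma \ref{lemma:PL_01} itself but is the reason the lemma is useful for the subsequent rate analysis in Theorem \ref{thm:PL}.
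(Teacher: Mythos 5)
Your proposal is correct and takes essentially the same route as the paper: the paper applies the PL condition to the smoothness descent inequality and then inserts the same error decomposition that underlies Lemma~\ref{lemma:nonconvex_main_multilevel}, while you simply rearrange the stated conclusion of Lemma~\ref{lemma:nonconvex_main_multilevel}, multiply by $\alpha_t/2$, and then invoke PL — the same chain of inequalities in a slightly different order, with identical constants.
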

\begin{proof}
Suppose the objective function $F$ satisfies the $\mu$-PL condition \eqref{def:PL} that $2 \mu (F(\bar x_t ) - F^*) \leq \| \nabla F(\bar x_t)\|^2$, by combining it with \eqref{eq:multilevel_eq0}, we have 
\begin{equation*}
\begin{split}
& \EE[ F(\bar{x}_{t+1}) ]  - \EE [ F(\bar{x}_{t}) ]
\\
& \leq   - \alpha_t \mu \EE  [ F(\bar x_t ) - F^*]   - \frac{\alpha_t}{2}(1-\alpha_t L_F) \EE [ \|  \bar z_t  
\|^2 ]   + \frac{\alpha_t}{2} \EE[ \| \nabla F(\bar x_t) - \bar z_t    \|^2  ].
\end{split}
\end{equation*}
Subtracting $F^*$ on both sides, we obtain 
\begin{equation*}
\begin{split}
& \EE[ F(\bar{x}_{t+1}) ]  -  F^*
\\
& \leq  ( 1 - \alpha_t \mu ) \EE  [ F(\bar x_t ) - F^*]   - \frac{\alpha_t}{2}(1-\alpha_t L_F) \EE [ \|  \bar z_t  
\|^2 ]   + \frac{\alpha_t}{2} \EE[ \| \nabla F(\bar x_t) - \bar z_t    \|^2  ].
\end{split}
\end{equation*}
The desired result can be acquired by applying the bound of $\EE[\| \nabla F(\bar x_t)\ - \bar z_t\|^2]$ in~\eqref{eq:z_error} and applying the convergence of consensus errors in Lemma~\ref{lemma:consensus_multilevel}. 
\end{proof}
%%%%%%%%%%%%%%%%%
%%%%%%%%%%%%%%%%%%
%%%%%%%%%%%%%%%%%%
%%%%%%%%%%%%
%%%%%%%%%%%%%
\subsection{Proof of Theorem \ref{thm:PL}}\label{app:proof_of_thm_PL}
Before establishing the convergence rate for $\mu$-PL function, we provide a result \citep[Lemma 1]{ghadimi2016accelerated}   to characterize the convergence behavior for a random sequence satisfying a  special form of stochastic recursion as follows.

\begin{lemma}\label{lemma:sequence_alpha}
 Letting  $b_k = \frac{2}{k+1}$  and $\Gamma_k = \frac{2}{k(k+1)}$ for $k \geq 1$ be two nonnegative sequences.
For any nonnegative sequences $\{ A_k \}$ and $\{ B_k \}$  satisfying
$$
A_k \leq (1-b_k) A_{k-1} + B_k , \ \ \text{ for } k \geq 1,
$$
we have $\Gamma_k =  \Gamma_{s} \prod_{j=s+1}^k (1-b_j)$ and 
\begin{equation*}
    A_k \leq  \frac{\Gamma_k}{\Gamma_s} A_{s} + \sum_{i=s+1}^k \frac{   \Gamma_k B_i }{\Gamma_i}.
\end{equation*}
\end{lemma}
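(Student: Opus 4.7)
The plan is to prove this in two short steps, corresponding to the two claims. First I would verify the product identity $\Gamma_k = \Gamma_s \prod_{j=s+1}^k (1-b_j)$ by direct computation: since $1 - b_j = 1 - \frac{2}{j+1} = \frac{j-1}{j+1}$, the product $\prod_{j=s+1}^k \frac{j-1}{j+1}$ telescopes, with the numerators $s, s+1, \ldots, k-1$ and denominators $s+2, s+3, \ldots, k+1$ cancelling in blocks and leaving $\frac{s(s+1)}{k(k+1)}$. Multiplying by $\Gamma_s = \frac{2}{s(s+1)}$ recovers $\frac{2}{k(k+1)} = \Gamma_k$. As an immediate consequence of the identity applied to consecutive indices, $\Gamma_k = (1-b_k) \Gamma_{k-1}$, equivalently $\frac{1-b_k}{\Gamma_k} = \frac{1}{\Gamma_{k-1}}$, which is exactly the scaling factor that will make the recursion collapse.

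Second I would divide the hypothesized inequality $A_k \leq (1-b_k) A_{k-1} + B_k$ by $\Gamma_k$ and substitute the previous identity to obtain
\[
\frac{A_k}{\Gamma_k} \leq \frac{A_{k-1}}{\Gamma_{k-1}} + \frac{B_k}{\Gamma_k}.
\]
Telescoping this over $i = s+1, \ldots, k$ gives $\frac{A_k}{\Gamma_k} \leq \frac{A_s}{\Gamma_s} + \sum_{i=s+1}^k \frac{B_i}{\Gamma_i}$, and multiplying through by $\Gamma_k$ yields the stated bound. The entire argument is elementary algebra; the only subtlety to flag is that the nonnegativity of $\{A_k\}, \{B_k\}$ and the fact that $1 - b_k \geq 0$ for all $k \geq 1$ (indeed $b_1 = 1$ and $b_k < 1$ for $k \geq 2$) guarantee that the division by $\Gamma_k > 0$ preserves the inequality direction.

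There is no real obstacle here — the lemma is a standard template for converting multiplicative-contraction recursions into explicit bounds via an integrating factor $1/\Gamma_k$, and the specific choice $\Gamma_k = \frac{2}{k(k+1)}$ is engineered precisely so that it satisfies $\Gamma_k = (1-b_k) \Gamma_{k-1}$. The only care needed is bookkeeping on indices in the telescoping product to confirm the endpoint $s$ dependence. I would present the proof in two displayed lines plus one short paragraph verifying the product.
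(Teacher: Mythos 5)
Your proof is correct. The paper does not actually prove this lemma---it is imported verbatim as Lemma~1 of the cited Ghadimi--Lan reference---and your telescoping argument (the identity $\Gamma_k=(1-b_k)\Gamma_{k-1}$ used as an integrating factor, followed by summation over $i=s+1,\dots,k$) is precisely the standard derivation underlying that cited result, with the index bookkeeping and the positivity of $\Gamma_k$ handled correctly.
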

\noindent We then derive the convergence rate of $\{ \bar x_t\}$ for $\mu$-PL objectives.

%%%%%%%%%%%%%%%%
%%%%%%%%%%%%%%%
\begin{proof}
We write $y_{m,t}^\star = y_m^\star(\bar x_t)$ for $m=1,\cdots,M$. 
First, under the choice of step-sizes that  $\alpha_t  = \frac{2}{\mu(C_1 + t)}$ and $\beta_t = \gamma_t = \frac{C_1}{ C_1 + t}$, we have  $\lim_{t\to \infty} \alpha_t  = 0$, $\lim_{t\to \infty} \beta_t = 0$, and  $\lim_{t\to \infty} \gamma_t = 0$. By following the analysis of Lemmas~\ref{lemma:y_multilevel}, \ref{lemma:error_multilevel}, and \ref{lemma:ut}, and applying the convergence rates of consensus errors in Lemma~\ref{lemma:consensus_multilevel},  we obtain that 
\eqref{eq:yt_recursion_multilevel}, \eqref{eq:s_multilevel}, \eqref{eq:h_multilevel}, \eqref{eq:u_multilevel}, \eqref{eq:v_multilevel} still hold under this choice of step-size. 

Next, we define a random variable 
\begin{align*}
J_k & =  F(\bar x_t)  -F^* +  z_1   \|  \nabla_1 f( \bar x_t , y_{M,t}^\star    ) - \bar s_t \|^2   + z_2  \| \nabla_2 f(  \bar x_t  ,   y_{M,t}^\star    ) -  \bar h_t \|^2   
\\
& \quad + \sum_{m=1}^M \Big ( z_{3,m}   \|  \nabla_{12}^2 g_m( y_{m-1,t}^\star, y_{m,t}^\star    ) - \bar u_{m,t}  \|_F^2      + \sum_{1\leq j \leq b}z_{4,m,j} \| \nabla_{22}^2 g_m(y_{m-1,t}^\star, y_{m,t}^\star  ) -  \bar v_{m,t,j} )  \|_F^2\Big  ) ,
\end{align*}
where 
\begin{align*}
 z_1 = \frac{2\alpha_t}{\beta_t  - \alpha_t \mu } = \frac{4}{\mu( C_1 - 2 ) } ,  & z_2 = \frac{\alpha_t C_2 }{\beta_t  - \alpha_t \mu } = \frac{2 C_2  }{\mu( C_1 -2 )}, 
 \\
 z_{3,m}  = \frac{\alpha_t C_{3,m} }{\beta_t  - \alpha_t \mu } = \frac{2C_{3,m}}{ \mu( C_1   - 2 ) }, 
 \text{ and } & z_{4,m,j} = \frac{\alpha_t C_{4,m,j} }{\beta_t  - \alpha_t \mu }    =\frac{2C_{4,m,j}}{ \mu( C_1   - 2) },
\end{align*}
are all constants defined in \eqref{def:z_multilevel}. By multiplying $z_1, z_2 , z_{3,m}$, and $z_{4,m,j}$ to  both sides of \eqref{eq:s_multilevel}, \eqref{eq:h_multilevel}, \eqref{eq:u_multilevel}, and \eqref{eq:v_multilevel}, respectively,  and combining them with \eqref{eq:PL_multilevel_01}, we obtain 
\begin{equation*}
	\begin{split}
	 \EE[ J_{t+1}	]	
	& \leq  (1   - \alpha_t \mu ) \EE[ J_t] +  \cO \Big (  \frac{\beta_t^2 }{K }   \Big )    +  \cO \left ( \frac{1}{t^3 (1-\rho)^2}\right ) 
  - \alpha_t \left ( \frac{1-\alpha_t L_F}{2}
  - \frac{4\hat C}{\beta_t }  \right ) \EE[  \|\bar z_t \|^2]
  \\
  & \quad +  Z_{M} \beta_t \EE[ \| \bar y_{M,t} - y_{M,t}^\star\|^2 ]  +\sum_{m=1}^{M-1} Z_{m} \beta_t \EE[ \| \bar y_{m,t} -y_m^\star(\bar x_t)\|^2] 
 .
\end{split}
\end{equation*}
where $\hat C, Z_M$, and $\{ Z_m \}_{m=1}^{M-1}$ are constants defined in \eqref{def:Z_multilevel}. 
By recalling that $\alpha_t = \frac{2}{\mu(C_1 + t)}$ and $\beta_t = \gamma_t =  \frac{C_1}{C_1 + t}$, we have $\alpha_t/\beta_t =  \alpha_t/\gamma_t  =  \frac{2}{\mu C_1}$.
Further, letting 
$$z_{5,M} = \frac{  Z_{M} \beta_t }{\mu_{g,M} (\gamma_t - \alpha_t)} =  \frac{ Z_{M}}{\mu_{g,M} (C_1 - 2/\mu)},$$
by multiplying $z_{5,M}  $ to both sides of \eqref{eq:yt_recursion_multilevel} with $m =M$ and combining with the above inequality, we have 
\begin{equation*}
	\begin{split}
	 & \EE[ J_{t+1}] 	 + z_{5,M} \EE[ \| \bar y_{M,t+1} - y_{M,t+1}^\star\|^2]
	 \\
	& \leq  \left (1   - \tfrac{2}{C_1 + t+1}  \right ) \Big ( \EE[ J_t ]  +z_{5,M} \EE[ \| \bar y_{M,t} - y_{M,t}^\star\|^2] \Big )  +  \cO \Big (  \frac{\beta_t^2 }{K }   \Big )    +  \cO \left ( \frac{1}{t^3 (1-\rho)^2 }\right ) 
  \\
  & \quad   - \alpha_t \left ( \frac{1}{2} -  \frac{ 2L_F}{\mu C_1 }
  - \frac{2\hat C }{ \mu_g \mu C_1 }  - \frac{6z_{5,M} L_{y,M}^2}{\mu_{g,M} \mu C_1}\right )\EE[  \|\bar z_t \|^2]
\\
& \quad + 
\frac{ z_{5,M} \gamma_{t} L_{g,M}^2 }{ \mu_{g,M}} \EE[ \| \bar y_{M-1,t} - y_{M-1,t}^\star \|^2]   +\sum_{m=1}^{M-1} Z_{m} \beta_t \EE[ \| \bar y_{m,t} -y_m^\star(\bar x_t)\|^2] 
 .
\end{split}
\end{equation*}
Recall that $\beta_t = \gamma_t$,
by recursively multiplying $z_{5,m} = \frac{\beta_t   }{\mu_g (\gamma_t - \alpha_t)}(Z_m + \frac{z_{5,m+1} L_{g,m+1}^2 }{ \mu_{g,m+1}})  = (Z_m + \frac{z_{5,m+1} L_{g,m+1}^2 }{ \mu_{g,m+1}})(\mu_{g,m}(C_1 - 2/\mu) )^{-1}$ for $m =M-1,\cdots, 1$ to both sides of \eqref{eq:yt_recursion_multilevel} ,  and combining with the above inequality, we obtain that  
\begin{equation*}
	\begin{split}
	 & \EE[ J_{t+1}] 	 + \sum_{m=1}^M z_{5,m} \EE[ \| \bar y_{m,t+1} - y_{m,t+1}^\star\|^2]
	 \\
	& \leq  \left (1   - \tfrac{2}{C_1 + t+1}  \right ) \Big ( \EE[ J_t ]  +\sum_{m=1}^M z_{5,m} \EE[ \| \bar y_{m,t} - y_{m,t}^\star\|^2]\Big )  +  \cO \Big (  \frac{M \beta_t^2 }{K }   \Big )    +  \cO \left ( \frac{1}{t^3 (1-\rho)^2 }\right ) 
  \\
  & \quad   - \alpha_t  \underbrace{ \left ( \frac{1}{2} -  \frac{ 2L_F}{\mu C_1 }
  - \frac{2 \hat C }{ \mu C_1  }  -  \sum_{m=1}^{M}\frac{6z_{5,m} L_{y,m}^2 }{ \mu_{g,m} \mu C_1 } \right )}_{\Psi(C_1)}\EE[  \|\bar z_t \|^2]
 .
\end{split}
\end{equation*}
Clearly, there exists a constant $\tilde C_1$ such that for all $\Psi(C_1) \geq 0$ for all $C_1 \geq \tilde C_1$, which further leads to 
\begin{equation*}
	\begin{split}
	 & \EE[ J_{t+1}] 	 + \sum_{m=1}^M z_{5,m} \EE[ \| \bar y_{m,t+1} - y_{m,t+1}^\star\|^2]
	 \\
	& \leq  \left (1   - \tfrac{2}{C_1 + t+1}  \right ) \Big ( \EE[ J_t ]  +\sum_{m=1}^M z_{5,m} \EE[ \| \bar y_{m,t} - y_{m,t}^\star\|^2]\Big )  +  \cO \Big (  \frac{M \beta_t^2 }{K }   \Big )    +  \cO \left ( \frac{1}{t^3 (1-\rho)^2 }\right ) 
	\\
		& \leq \frac{\Gamma_{C_1 + t}}{\Gamma_{C_1 }}[ J_0  + \sum_{m=1}^M z_{5,m}  \| \bar y_{m,0} - y_{m,0}^\star\|^2]  + \sum_{j=C_1}^{C_1 + t} \cO \Big ( \frac{M \beta_j^2 \Gamma_{C_1 + t}}{K \Gamma_j} \Big)   + \sum_{j = C_1 }^{t+ C_1 } \cO \left (\frac{\Gamma_{t + C_1}}{j^3(1-\rho)^2 \Gamma_j} \right )
		\\
& \leq \cO \Big ( \frac{M}{(t+1) K} \Big ) + \cO \left ( \frac{M \ln t}{t^2 (1-\rho)^2 } \right ),
\end{split}
\end{equation*}
completing the proof. 
\end{proof}

%----------------
%----------------

\section{Additional Numerical Experiments}\label{app:numerics}

\subsection{Hyper-parameter Optimization} \label{app:hyper_opt}

The baseline algorithm DBSA conducts the followings. 
At the outer solution  $x_t^k$, DBSA obtains an estimator $y_t^k$ of $y^\star(x_k)$ via conducting $t$ gossip stochastic gradient descent steps
$\tilde y_{t,i+1}^k = \sum_{j \in \cN_k}w_{k,j}\tilde y_{t,i}^{j} - \eta_{t,i} \nabla_y g(x_t^k, \tilde y_{t,i}^k;\xi_i^k) $ for $i=0,1,\cdots, t$ with $y_t^k = \tilde y_{t,t}^k$, and then update the main solution $x_t^k$ by one stochastic gradient descent step that $x_{t+1}^k = \sum_{j \in \cN_k}w_{k,j}x_t^{j} - \alpha_t \nabla_x  f(x_t^k,y_t^k;\zeta_t^k)$.  We summarize the details in Algorithm~\ref{alg:DBSA}. 

We test our algorithm over various networks to further investigate its performance. Specifically, we test our algorithm over fully connected networks with $K=5,10,20,100$ and  randomly generated connected networks with $K=5,10,20$. We summarize the details in Figures~\ref{fig:aus_fullyconnected} and \ref{fig:aus_random}, respectively. These numerical results indicate the efficiency and robustness of our algorithm over large networks  with  various topologies.

\begin{figure}[t]
\begin{minipage}{0.5\textwidth}
    \centering
    \includegraphics[width=0.8\linewidth]{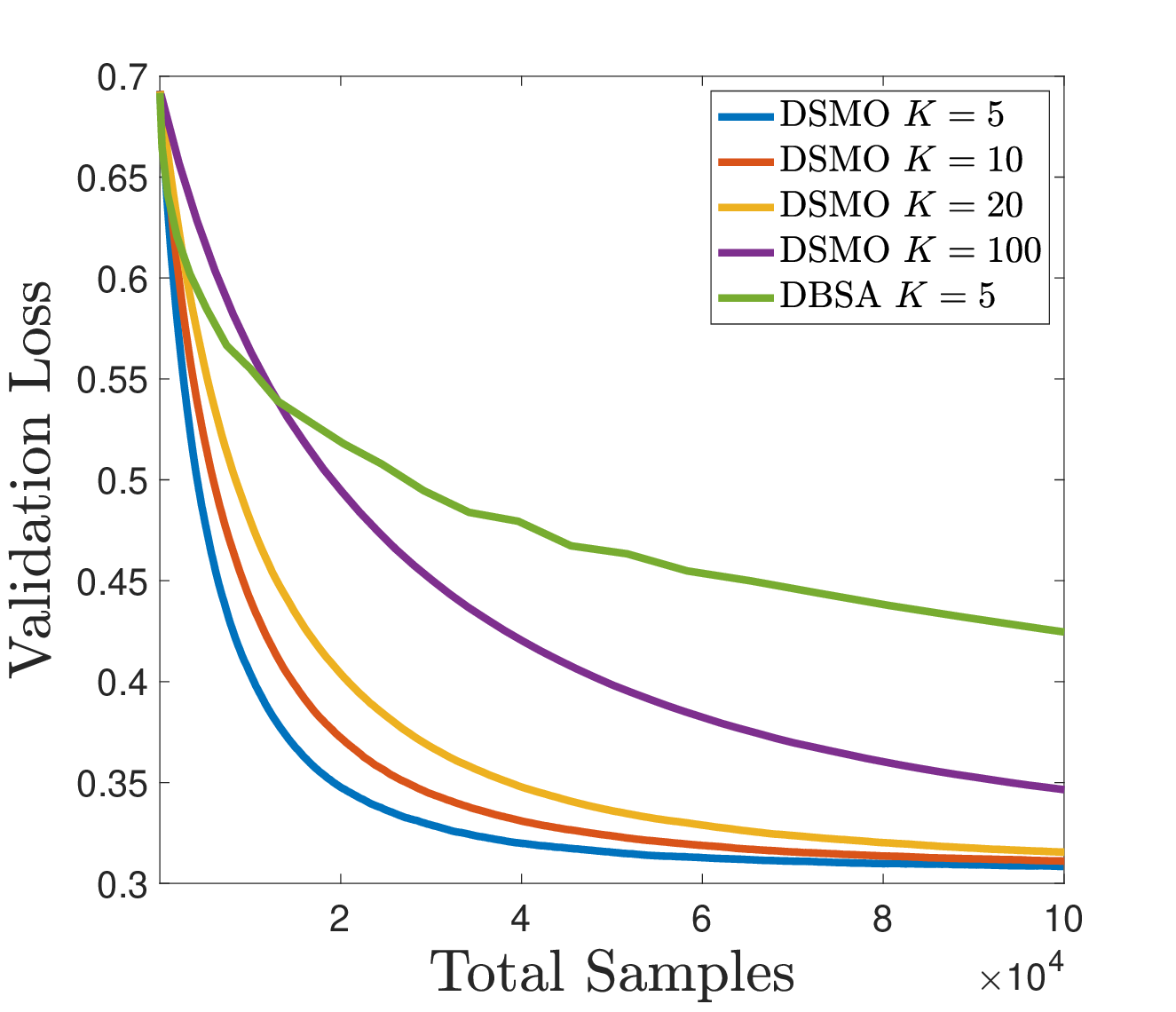}\\
    (a)
\end{minipage}
\begin{minipage}{0.5\textwidth}
    \centering
    \includegraphics[width=0.8\linewidth]{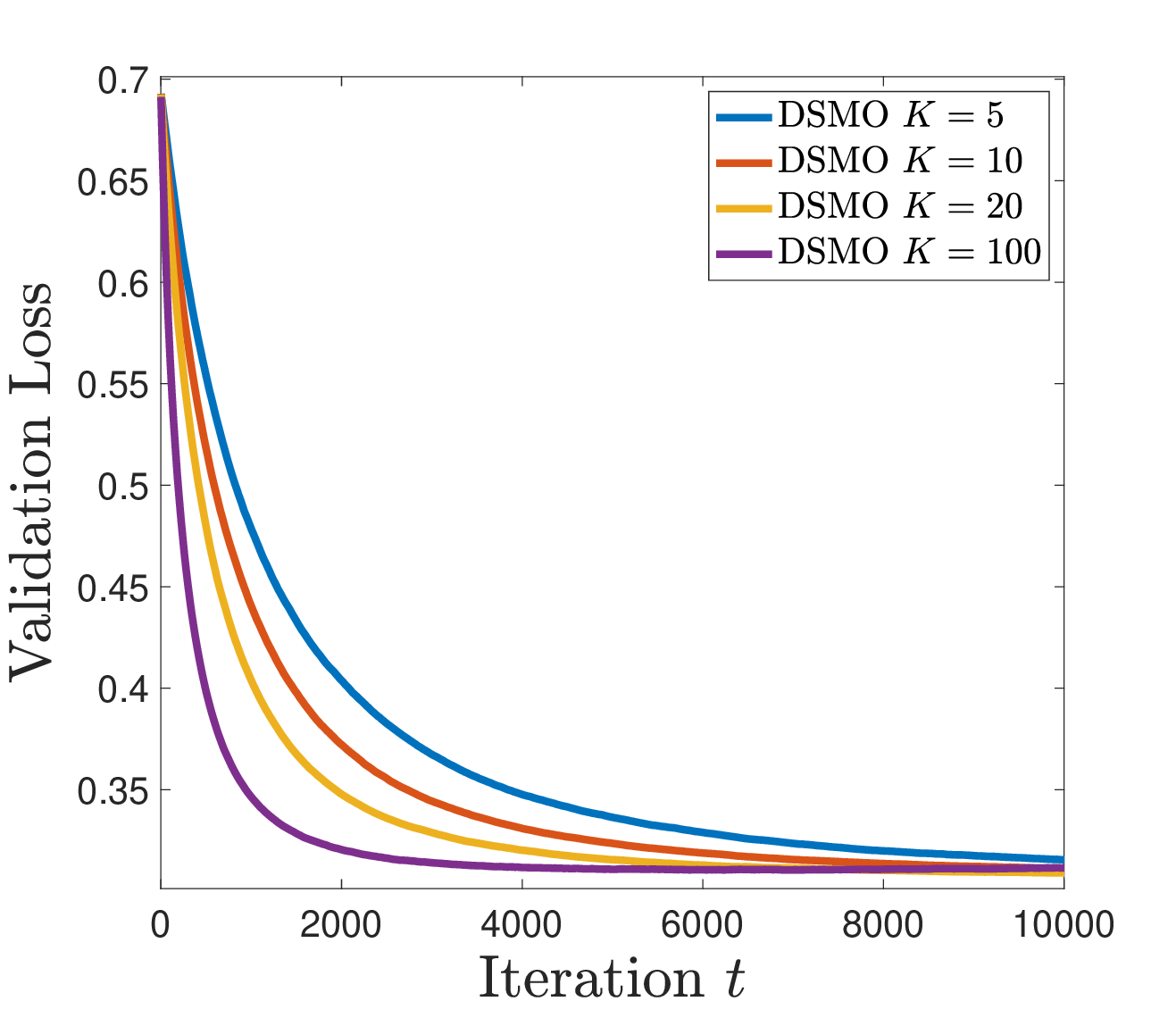}\\
    (b)
\end{minipage}
\caption{Performances of Algorithm~\ref{alg:1} and DSBA over a uniform fully connected network: (a) Empirical averaged training loss against total samples  for DSMO $K=5,10,20,100$ and DSGD $K = 5$ 
(b) Empirical averaged validation loss against iteration for DSMO $K=5,10,20,100$. All figures are generated through 10 independent simulations over the Australia handwriting dataset.
}
\label{fig:aus_fullyconnected}
\end{figure}

\begin{figure}[t]
\begin{minipage}{0.5\textwidth}
    \centering
    \includegraphics[width=0.8\linewidth]{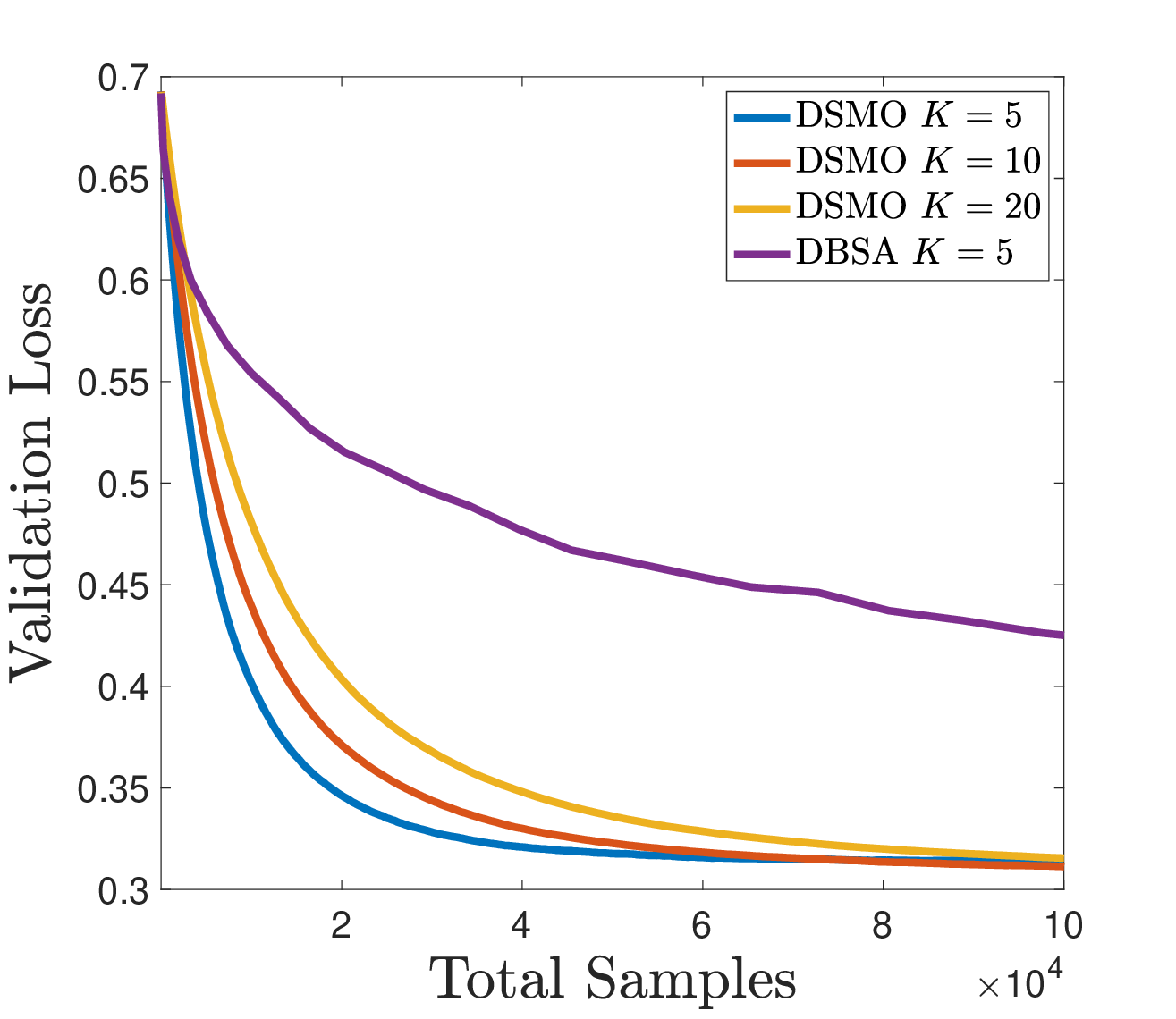}
    \\
    (a)
\end{minipage}
\begin{minipage}{0.5\textwidth}
    \centering
    \includegraphics[width=0.8\linewidth]{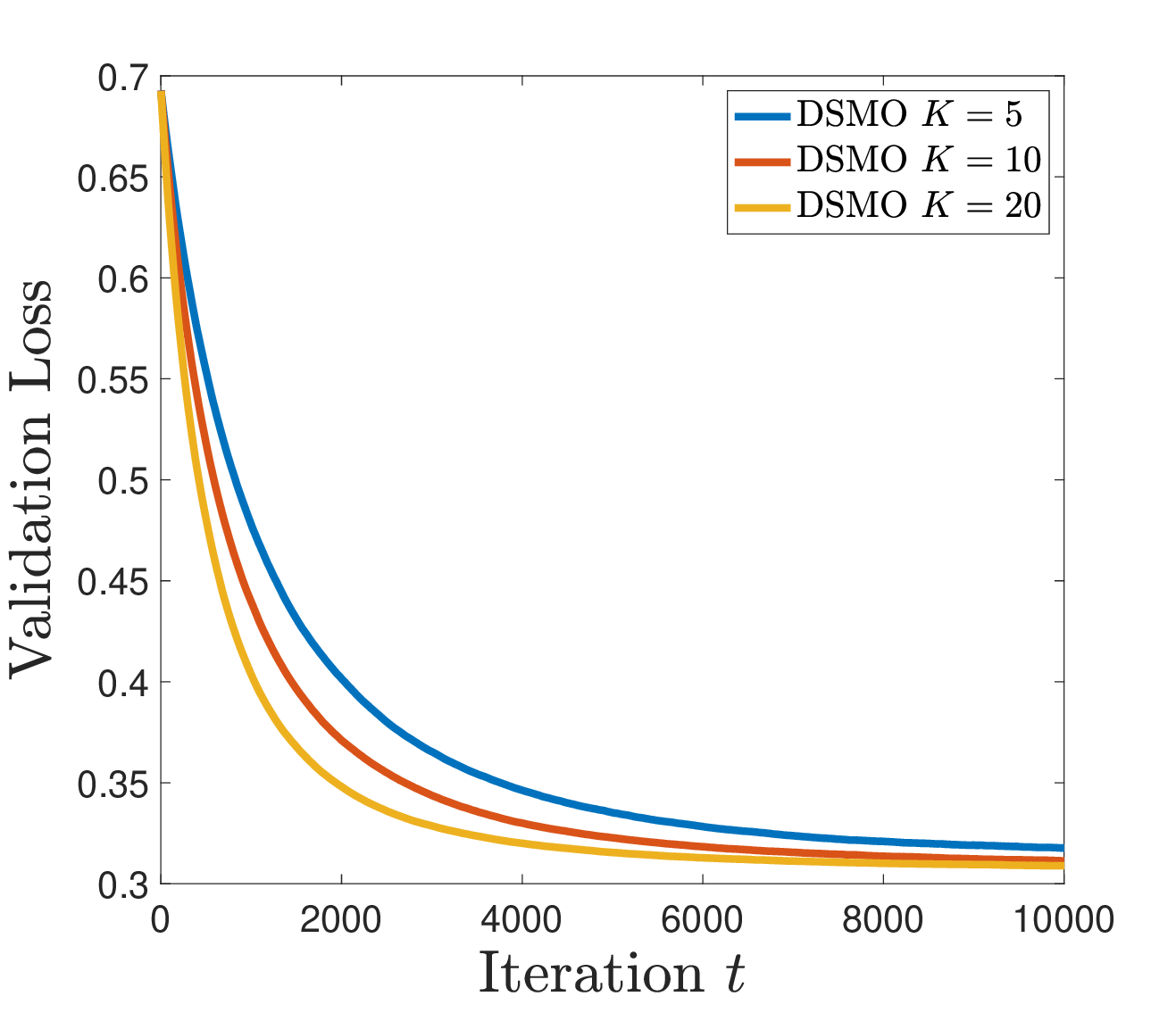}\\
    (b)
\end{minipage}
\caption{Performances of Algorithm~\ref{alg:1} and DSBA over a  connected random network: (a) Empirical averaged training loss against total samples  for DSMO $K=5,10,20$ and DBSA $K = 5$. 
(b) Empirical averaged validation loss against iteration for DSMO $K=5,10,20$. All figures are generated through 10 independent simulations over the Australia handwriting dataset.
}
\label{fig:aus_random}
\end{figure}

\begin{algorithm}[t!]
\small 
\caption{Decentralized Bilevel Stochastic Approximation}\label{alg:DBSA}
\begin{algorithmic}[1]
\REQUIRE Step-sizes $\{ \alpha_t \}$,  $\{ \eta_{t,i} \}$, number of total iterations $T$.
\\
$x_0^k = \bf{0}$, $y_0^k = \bf{0}$
\FOR{$t=0, 1, \cdots, T-1$}
	\STATE 	\textcolor{blue}{Inner loop update}:  
	\FOR{ $i = 0,1,\cdots, t$}
	\FOR{$k = 1,2,\cdots, K$}
		\STATE \textcolor{blue}{Local sampling}: Query $\cS\cO$ at $(x_t^k, \tilde y_{t,i}^k)$ to obtain $\nabla_y g(x_t^k, \tilde y_{t,i}^k;\xi_{t,i}^k)  $.
	\STATE \textcolor{blue}{Estimate:} $\tilde y_{t,i+1}^k = \sum_{j \in \cN_k}w_{k,j}\tilde y_{t,i}^{j} - \eta_{t,i} \nabla_y g(x_t^k, \tilde y_{t,i}^k;\xi_{t,i}^k) $.
 	\ENDFOR
 	\ENDFOR 
 
	\STATE \textcolor{blue}{Outer loop update}: $x_{t+1}^k = \sum_{j \in \cN_k}w_{k,j}x_t^{j} - \alpha_t \nabla_x  f(x_t^k,y_t^k;\zeta_t^k)$.
 \ENDFOR
 \ENSURE $ \bar x_{T} = \frac{1}{K} \sum_{k\in \cK} x_T^k$.
\end{algorithmic}
\end{algorithm}

\subsection{Distributed Policy Evaluation for Reinforcement Learning} \label{app:MDP}

\textbf{Simulation environment:} In our experiments, for each state $s \in \cS$, we generate its feature $\phi_s \sim \text{Unif}[0,1]^m$;
we uniformly generate the transition probabilities $p_{s,s'}$ and standardize them such that $\sum_{s' \in \cS} p_{s,s'} = 1$; we sample the mean of rewards $\bar r_{s,s'}^k \sim \text{Unif}[0,1]$ for all $s \in \cS$ and each agent $k \in [K]$. We set the regularizer parameter $\lambda = 1$.

In each simulation, we set $|\cS| = 100$ and update the solution $(x^k,y^k)$ for each agent in a parallel manner as follows:
At iteration $t$, for each state $s \in \cS$, we simulate a random transition to another state $s' \in \cS$ using the transition probability $p_{s,s'}$'s, generate a random reward $r_{s,s'}^k \sim \cN(\bar r_{s,s'}^k, 1)$, and update $x_t^k$ using step-sizes $\alpha_t = \min \{ 0.01,\frac{2}{\lambda t} \}$ and $\beta_t = \gamma_t = \min \{ 0.5, \frac{50}{t}\}$.

\begin{algorithm}[t!]
\small 
\caption{Decentralized Stochastic Gradient Descent}\label{alg:DSGD}
\begin{algorithmic}[1]
\REQUIRE Step-sizes $\{ \alpha_t \}$, weights  $ \{ \eta_{t,i}\}$, number of total iterations $T$.
\\
$x_0^k = \bf{0}$, $y_0^k = \bf{0}$
\FOR{$t=0, 1, \cdots, T-1$}
	\STATE 	\textcolor{blue}{Inner value update}:   Set $\tilde y_{t,0}^k = 0$.
	\FOR{ $i = 0,1,\cdots, t-1$}
	\FOR{$k = 1,\cdots, K$}
		\STATE \textcolor{blue}{Local sampling}: Query $\cS\cO$ at $x_t^k$ to obtain $  g^k(x_t^k;\xi_{t,i}^k)  $.
	\STATE \textcolor{blue}{Estimate:} $ \tilde y_{t,i+1}^k = (1-\eta_{t,i})\sum_{j \in \cN_k}w_{k,j}\tilde y_{t,i}^{j}  +
 \eta_{t,i} g^k(x_t^k;\xi_{t,i}^k) $.
 	\ENDFOR
 	\ENDFOR 
  \STATE 
 	Set $y_{t}^k = \tilde y_{t,t}^k$. 
	\STATE \textcolor{blue}{Outer loop update}: $x_{t+1}^k = \sum_{j \in \cN_k}w_{k,j}x_t^{j} - \alpha_t \nabla g(x_t^k;\xi_t^k) \nabla  f(y_t^k;\zeta_t^k)$.
 \ENDFOR
 \ENSURE $ \bar x_{T} = \frac{1}{K} \sum_{k\in \cK} x_T^k$.
\end{algorithmic}
\end{algorithm}

\begin{figure}[t]
\begin{minipage}{0.32\textwidth}
    \centering
   \includegraphics[width=1\linewidth]{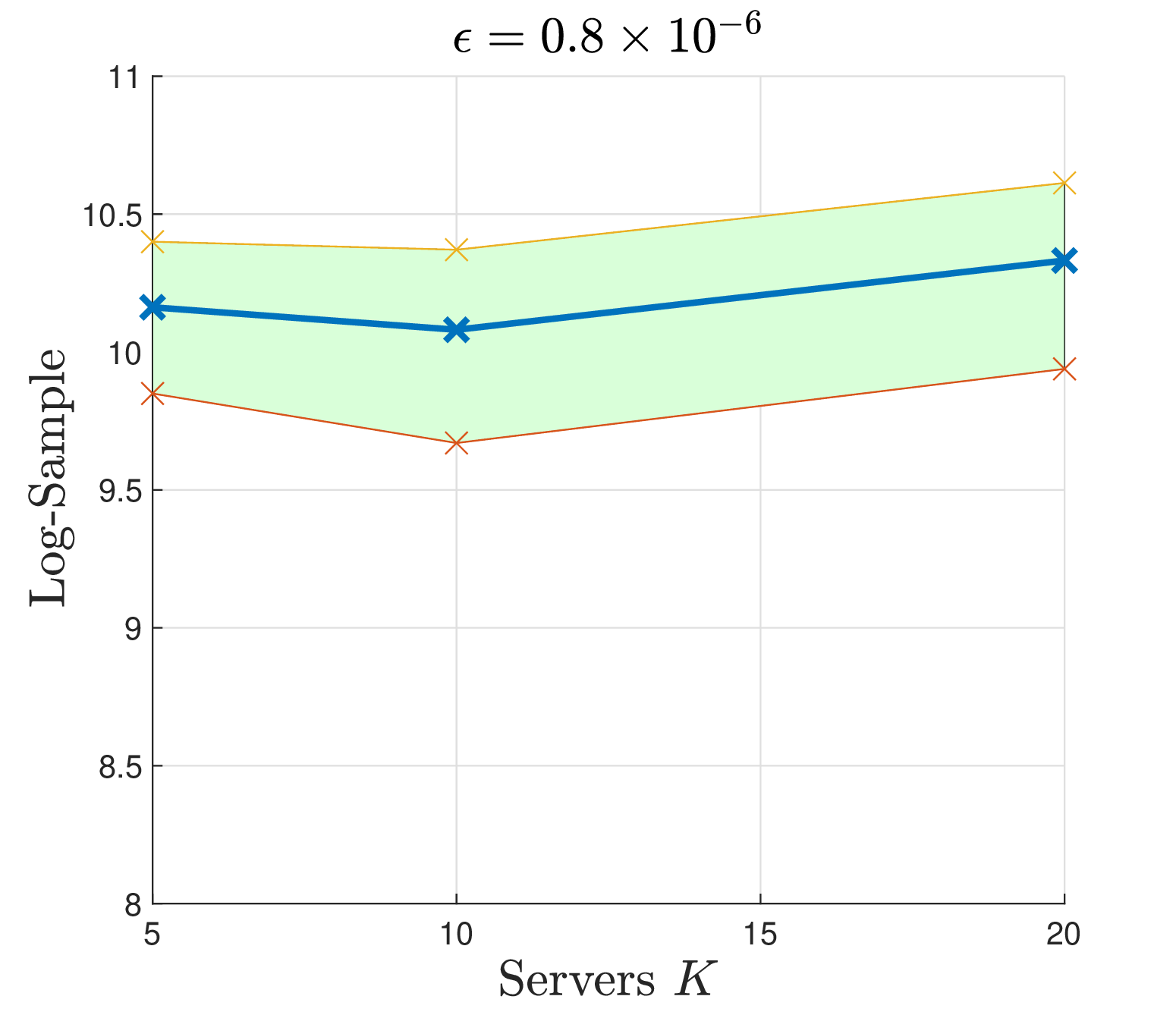}
        (a)
        \end{minipage}  
        \begin{minipage}{0.32\textwidth}
        \centering
   \includegraphics[width=1\linewidth]{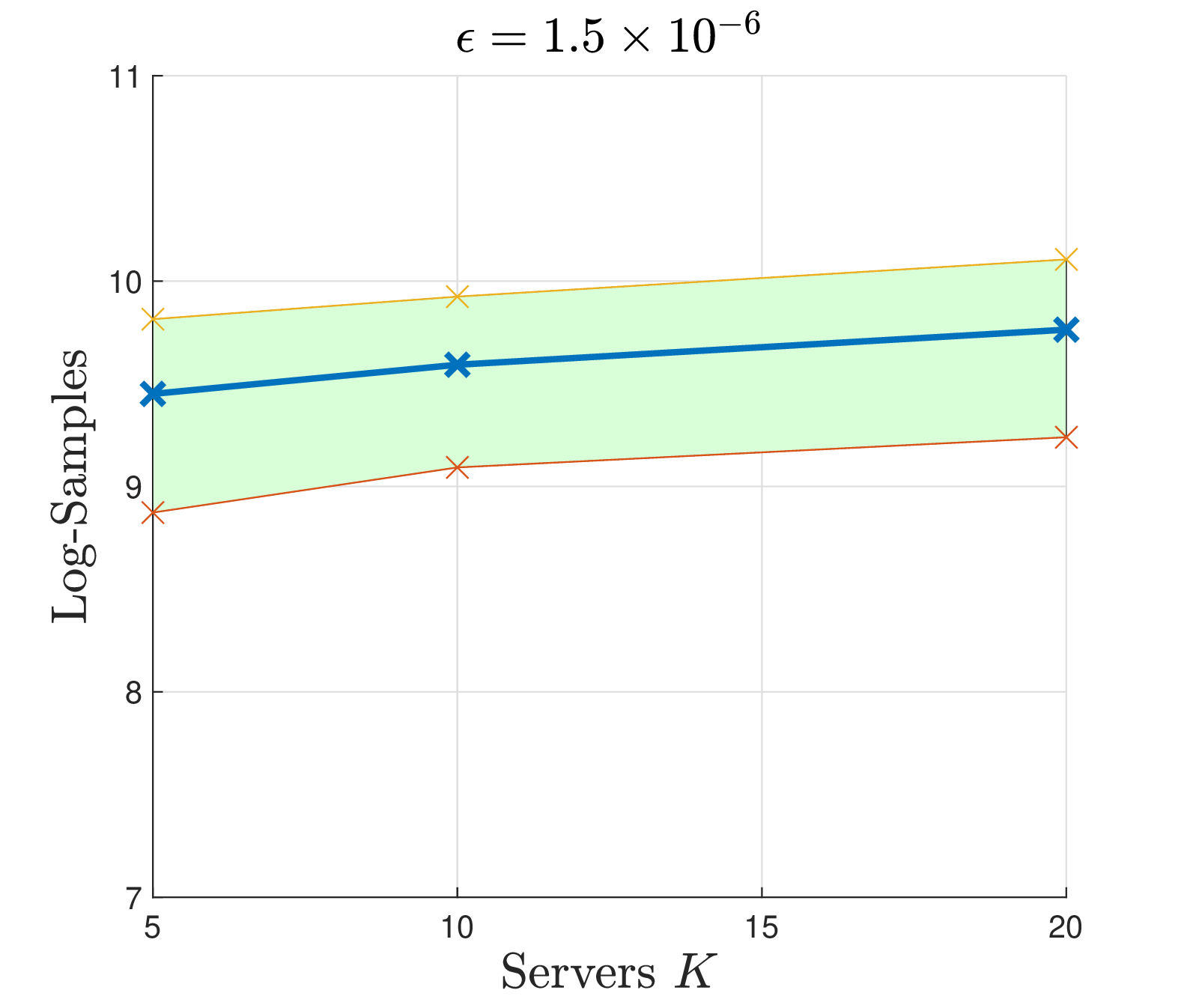}
   (b)
   \end{minipage}
 \begin{minipage}{0.32\textwidth}
        \centering
   \includegraphics[width=1\linewidth]{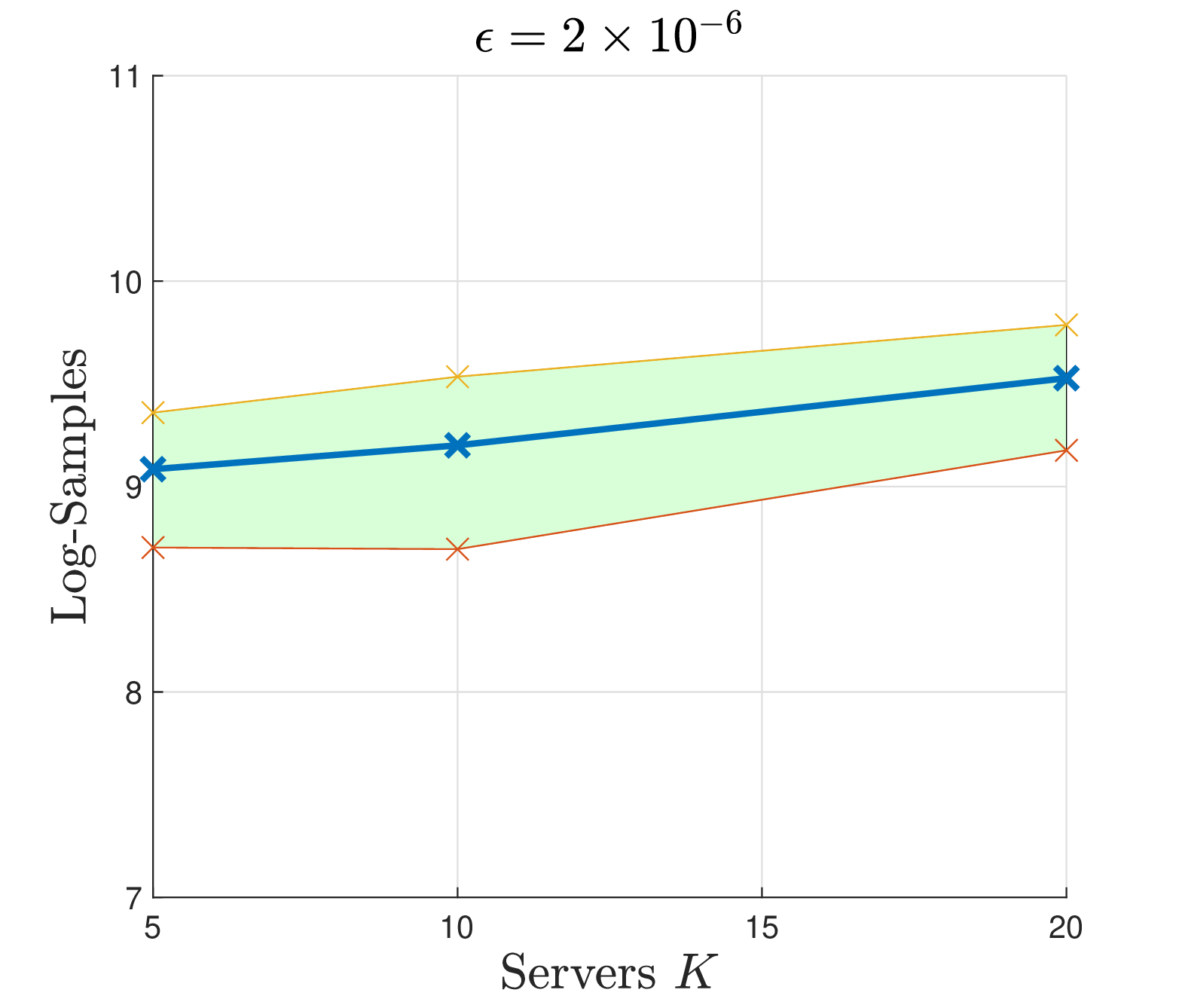}
   (c)
   \end{minipage}
    \caption{75\% confidence region of log- total samples for achieving $\|\bar x_t - x^* \|^2 \leq \epsilon$ with varying network sizes $K = 5,10,20$ for $\epsilon = 0.8\times 10^{-6}, 1.5\times 10^{-6}, 2 \times 10^{-6}$. 
    }
    \label{fig:MDP2}
\end{figure}

We provide the details of the baseline algorithm DSGD in Algorithm~\ref{alg:DSGD}.

To further study the linear speedup effect under various accuracy levels, we  compute the total generated samples for finding an $\epsilon$-optimal solution $\|\bar x_t - x^* \|^2 \leq \epsilon $ and plot the 75\% confidence region of log-sample against the number of agents $K = 5,10,20$ for various $\epsilon$'s  in Figure~\ref{fig:MDP2}. Similar as in Figure~\ref{fig:1}, we observe that for all accuracy levels $\epsilon = 0.8\times 10^{-6}, 1.5\times 10^{-6}, 2 \times 10^{-6}$, the required samples for finding an $\epsilon$-optimal solution by $K$ agents are roughly the same,
 further demonstrating the linear speedup effect.

%\QED

%%%%%%%%%%%%%%%%
%%%%%%%%%%%%%%%%%
%%%%%%%%%%%%%%%%%%5

\end{document}